
\documentclass[nohyperref]{article}

\usepackage{microtype}
\usepackage{graphicx}
\usepackage{subfigure}
\usepackage{booktabs} 

\usepackage{hyperref}


\usepackage[accepted]{icml2022}


\usepackage{amsmath}
\usepackage{amssymb}
\usepackage{mathtools}
\usepackage{amsthm}
\usepackage{bm}
\usepackage{bbm}

\usepackage{enumitem}

\usepackage[capitalize,noabbrev]{cleveref}

\theoremstyle{plain}
\newtheorem{theorem}{Theorem}[section]

\newtheorem{lemma}[theorem]{Lemma}

\theoremstyle{definition}

\newtheorem{assumption}[theorem]{Assumption}
\newtheorem{remark}{Remark}

\usepackage[textsize=tiny]{todonotes}

\icmltitlerunning{Neural Tangent Kernel Beyond the Infinite-Width Limit}

\newcommand{\x}{\mathbf{x}}
\newcommand{\h}{\mathbf{h}}
\newcommand{\g}{\bm{\delta}}
\newcommand{\bias}{\mathbf{b}}
\newcommand{\W}{\mathbf{W}}
\newcommand{\w}{\mathbf{w}}

\DeclareMathOperator\erf{erf}

\begin{document}

\twocolumn[
\icmltitle{Neural Tangent Kernel Beyond the Infinite-Width Limit: \\ Effects of Depth and Initialization}



\icmlsetsymbol{equal}{*}

\begin{icmlauthorlist}
\icmlauthor{Mariia Seleznova}{lmu}
\icmlauthor{Gitta Kutyniok}{lmu}
\end{icmlauthorlist}

\icmlaffiliation{lmu}{Department of Mathematics, Ludwig-Maximilians-Universität München, Munich, Germany}

\icmlcorrespondingauthor{Mariia Seleznova}{selez@math.lmu.de}

\icmlkeywords{Machine Learning, ICML}

\vskip 0.3in
]



\printAffiliationsAndNotice{}  

\begin{abstract}
Neural Tangent Kernel (NTK) is widely used to analyze overparametrized neural networks due to the famous result by \citet{jacot2018neural}: in the infinite-width limit, the NTK is deterministic and constant during training. However, this result cannot explain the behavior of deep networks, since it generally does not hold if depth and width tend to infinity simultaneously.
In this paper, we study the NTK of fully-connected ReLU networks with depth comparable to width. We prove that the NTK properties depend significantly on the depth-to-width ratio and the distribution of parameters at initialization. In fact, our results indicate the importance of the three phases in the hyperparameter space identified in \citet{poole2016exponential}: \textit{ordered}, \textit{chaotic} and the \textit{edge of chaos} (EOC). We derive exact expressions for the NTK dispersion in the {infinite-depth-and-width} limit in all three phases and conclude that the NTK variability grows exponentially with depth at the EOC and in the chaotic phase but not in the ordered phase. We also show that the NTK of deep networks may stay constant during training only in the ordered phase and discuss how the structure of the NTK matrix changes during training.

\end{abstract}

\section{Introduction}
Despite the widespread use of Deep Neural Networks (DNNs), the theory behind their success is still poorly understood. For instance, no present theory can explain why highly overparametrized DNNs generalize very well in practice, contrary to classical statistical learning theory predictions. Likewise, it is surprising that optimizing a highly non-convex loss function of a DNN with a variant of Gradient Descent (GD) typically yields a good local minimum. 

Although training dynamics and generalization capabilities of DNNs stand among the biggest open problems of deep learning theory, it is possible to address these challenges in the special case of infinitely-wide DNNs using the so-called \textit{Neural Tangent Kernel} (NTK). This kernel captures the first-order approximation of DNN's evolution during GD training. Consider the gradient flow dynamics of the DNN's parameters: 
\begin{equation}\label{eq:gd_w}
    \dot{\w} = - \nabla_\w \mathcal{L}(\mathcal{D}) = - \sum_{(x_i,y_i)\in\mathcal{D}} \nabla_{\w}f(x_i) \dfrac{\partial \mathcal{L}(\mathcal{D})}{\partial f(x_i)} ,
\end{equation}
where $\w$ is the vector of all the trainable parameters, $f(\cdot)$ is the DNN's output function (defined in Section \ref{section:preliminaries}), $\mathcal{L}(\cdot)$ is the loss function and $\mathcal{D}$ is the dataset. Then the dynamics of the DNN's output function is given by:
\begin{equation}\label{eq:gd_f}
\begin{split}
    \dot{f}(x) = \nabla_{\w}f(x)\cdot\dot{\w} = - \sum_{(x_i,y_i)\in\mathcal{D}} \Theta(x,x_i) \dfrac{\partial \mathcal{L}(\mathcal{D})}{\partial f(x_i)},
\end{split}
\end{equation}
where the kernel $\Theta(x_i,x_j) := \bigl\langle \nabla_\w f(x_i), \nabla_\w f(x_j) \bigr\rangle$ is called the NTK. 

A famous result by \citet{jacot2018neural} states that in the infinite-width limit, the NTK is deterministic under proper random initialization and stays constant during training. Thereby, the dynamics in \eqref{eq:gd_f} is equivalent to kernel regression and has an analytical solution expressed in terms of the kernel. It is then possible to derive properties of trained infinitely-wide DNNs theoretically by means of their NTKs. Hence, many recent works used the NTK to explain empirically known properties of DNNs \cite{huang2020deep, adlam2020neural, wang2022and,tirer2020kernel, geiger2020scaling}. Numerous contributions also derived the infinite-width limit of the NTK for popular DNN architectures \citep{yang2020tensor,du2019graph,alemohammad2020recurrent}. Other papers established some non-asymptotic results on the concentration of the NTK at initialization \cite{arora2019exact,buchanan2020deep} and stability of the NTK during training \cite{huang2020dynamics,lee2019wide}.

However, the extent to which the results in the infinite-width limit extrapolate to realistic DNNs remains largely an open question. Indeed, multiple authors have argued that the NTK regime and, in general, the infinite-width limit cannot explain the success of DNNs \citep{chizat2019lazy,hanin2019finite,aitchison2020bigger,li2021future,seleznova2020analyzing,bai2020taylorized,huang2020dynamics}. The first argument in this direction is that no feature learning occurs if the NTK stays constant during training. Moreover, several works showed that the infinite-width limit of the NTK becomes completely data-independent as depth increases \citep{xiao2019disentangling,hayou2019mean}, which suggests poor generalization performance for deep networks in the NTK regime. Finally, numerous empirical results demonstrated that the performance of trained DNNs and the corresponding kernel methods often differs in practice \citep{fort2020deep,lee2020finite,arora2019harnessing}. That is why it is essential to understand the statistical properties of the NTK and how they depend on the myriad of settings of a given DNN to assess if the infinite-width limit provides a reasonable approximation for this network. We contribute to this line of research by exploring the combined effect of two factors on the NTK: the network's depth and initialization hyperparameters.

\textbf{Network's depth \;} Most results on the NTK are derived in the setting where the network's depth is kept constant while the width tends to infinity. This limit can only model very wide and shallow networks since the depth-to-width ratio tends to zero in it. Indeed, several recent papers demonstrated that infinite-width approximations often get worse as the depth increases \citep{li2021future,matthews2018gaussian,yang2017mean}. In particular, \citet{hanin2019finite} first showed that the NTK of fully-connected ReLU DNNs may be random and change during training if depth and width are comparable. \citet{hu2021random} also studied the effects of depth on the NTK distribution and derived an upper bound for the NTK moments. We expand on these results by precisely characterizing the variability of the NTK at initialization and generalizing to different initialization settings described below.

\textbf{Initialization hyperparameters \;} There are three phases in the initialization hyperparameter space where the properties of untrained infinitely-wide DNNs differ significantly: \textit{ordered}, \textit{chaotic} and the \textit{edge of chaos} (EOC) \cite{poole2016exponential}. In the ordered phase, the gradient norms decrease with depth, whereas in the chaotic phase the gradient norms increase, and the edge of chaos is the initialization at the border between these two phases \cite{schoenholz2016deep}. The results by \citet{hanin2019finite} concerned the statistical properties of the NTK of wide and deep ReLU networks at the EOC. At the same time, several contributions demonstrated that the properties of the infinite-width NTK depend significantly on the phase of initialization \cite{xiao2019disentangling,hayou2019mean}. However, these results do not apply to networks with depth comparable to width since they assume infinite width before considering the effects of growing depth. We fill this gap by deriving statistical properties of the NTK for wide and deep ReLU networks in all three phases of initialization. 

\subsection{Contributions}
We study the \textbf{variability of the NTK at initialization} for fully-connected ReLU DNNs with depth comparable to width and varying initialization hyperparameters in Section~\ref{section:var}. Our contributions are as follows:
\begin{itemize}[leftmargin=*,topsep=0pt,itemsep=0pt]
    \item We precisely characterize the dispersion of the diagonal elements $\Theta(x,x)$ of the NTK (for arbitrary input $x$) in the \textbf{infinite-depth-and-width limit} and conclude that the variability of the NTK grows exponentially with the depth-to-width ratio at the EOC and in the chaotic phase. Conversely, the variance of $\Theta(x,x)$ tends to zero in the same limit in the ordered phase. Our results allow to evaluate the variance of the NTK for a given DNN with any depth-to-width ratio and initialization.
    \item We provide non-asymptotic expressions for the first two moments of $\Theta(x,x)$ and illustrate \textbf{finite-width effects} that follow. We show that the variance of the finite-width NTK in the ordered phase gradually increases as the initialization approaches the EOC, which describes the transition between the two kinds of behavior in the limit. We also notice that the NTK dispersion depends on the architecture, i.e. on the varying widths of the fully-connected layers. Notably, the dispersion of $\Theta(x,x)$ decreases with depth in the ordered phase if the DNN increases the dimensionality in consecutive layers. This enables us to conclude that deeper networks are more robust to random initialization in this setting. 
    \item We lower-bound the ratio of the expected \textbf{non-diagonal elements} of the NTK, i.e. $\Theta(x,\tilde{x})$ with $x\neq\Tilde{x}$, and the diagonal elements $\Theta(x,x)$ in the infinite-depth-and-width limit. We also upper-bound the dispersion of the non-diagonal elements. In the ordered phase, our results allow to ensure that the whole NTK matrix is approximately deterministic and thus can be approximated by the infinite-width limit. 
    \item We provide extensive \textbf{numerical experiments} to ve\-ri\-fy our theoretical results. We use JAX \cite{jax2018github} and Flax (neural network library for JAX) \cite{flax2020github} to compute the NTK of fully-connected ReLU networks effortlessly. Source code to reproduce the presented results is available at: \href{https://github.com/mselezniova/ntk\_beyond\_limit}{https://github.com/mselezniova/ntk\_beyond\_limit}.
\end{itemize}

\vspace{1ex}
We study the \textbf{training dynamics of the NTK} for fully-connected ReLU DNNs with depth comparable to width and varying initialization hyperparameters in Section \ref{section:train}. Our contributions are as follows:
\begin{itemize}[leftmargin=*,topsep=0pt,itemsep=0pt]
    \item We show that the expected relative change of $\Theta(x,x)$ in \textbf{the first GD step} tends to infinity in the chaotic phase and to zero in the ordered phase in the infinite-depth-and-width limit. Combined with the result by \citet{hanin2019finite}, which states that the expected relative change of $\Theta(x,x)$ in the first GD step is exponential in the depth-to-width ratio at the EOC, we can conclude that the NTK of deep networks can stay approximately constant during GD training only in the ordered phase. 
    \item We discuss how the \textbf{structure of the NTK} matrix changes during training outside of the NTK regime. The NTK matrix at initialization has a diagonal structure with larger values on the main diagonal as compared to the non-diagonal ones. We speculate that the training process changes the NTK structure to block-diagonal with blocks of larger values corresponding to classes and provide experiments to support this sentiment.
\end{itemize}

\section{Preliminaries}\label{section:preliminaries}
We consider fully-connected ReLU DNNs of depth $L\in~\mathbb{N}$ with linear output layer and widths $(n_\ell)_{0\leq\ell\leq L}$, where $n_0\in\mathbb{N}$ is the input dimension and $n_L=1$ is the output dimension. Forward propagation in such a network is defined as follows:
\begin{equation}\label{eq:nn}
\begin{split}
    \x^\ell(x) &:= \phi(\h^\ell(x)), \quad \x^{0}(x):=x\in\mathbb{R}^{n_0},\\
    \h^\ell(x) &:= \W^\ell\x^{\ell-1}(x) + \mathbf{b}^\ell, \quad 1\leq\ell\leq L-1, \\
    f(x)&:=\W^L\x^{L-1}(x) + \mathbf{b}^L \in\mathbb{R},
\end{split}
\end{equation}
where $\phi(x):=x\mathbbm{1}_{\{x>0\}}$ denotes the ReLU function, $\W^\ell\in\mathbb{R}^{n_\ell\times n_{\ell-1}}$ and  $\bias^\ell\in\mathbb{R}^{n_\ell}$ are the weights and the biases and $f(x)$ is the output function of the DNN. The NTK of this network on a pair of inputs $(x,\Tilde{x})$ is given by:
\begin{equation}\label{eq:ntk}
\begin{split}
    \Theta(x,\Tilde{x}) &:= \Theta_W(x,\Tilde{x}) + \Theta_b(x,\Tilde{x}),\\
    \Theta_W(x,\Tilde{x}) &:= \sum_{\ell=1}^L\sum_{j=1}^{n_{\ell}}\sum_{i=1}^{n_{\ell-1}}\dfrac{\partial f(x)}{\partial \W_{ij}^\ell}\dfrac{\partial f(\Tilde{x})}{\partial \W_{ij}^\ell},\\
    \Theta_b(x,\Tilde{x}) &:= \sum_{\ell=1}^L\sum_{j=1}^{n_{\ell}} \dfrac{\partial f(x)}{\partial \bias_{j}^\ell}\dfrac{\partial f(\Tilde{x})}{\partial \bias_{j}^\ell},
\end{split}
\end{equation}
where $\Theta_W(x,\Tilde{x})$ comprises the gradients w.r.t. the weights and $\Theta_b(x,\Tilde{x})$ --- the gradients w.r.t. the biases.

 When we consider wide networks with unequal widths in the hidden layers, we define a width scale parameter $M$ and constants $\lambda$, $(\alpha_\ell)_{0\leq\ell\leq L-1}$ such that:
\begin{equation}\label{eq:scaling}
\begin{split}
    \dfrac{L}{M}=\lambda\in\mathbb{R},\quad \dfrac{n_\ell}{M}=\alpha_\ell\in\mathbb{R}, \quad 0\leq\ell\leq L-1.
\end{split}
\end{equation}
Then we can describe the asymptotic behavior of the NTK in terms of $M$ and the constants defined above.

\subsection{Initialization and parametrization}
We consider random i.i.d. initialization given by:
\begin{equation}\label{eq:init}
    \W_{ij}^\ell \sim \mathcal{N}\Bigl(0,\dfrac{\sigma_w^2}{n_{\ell-1}}\Bigr), \quad \bias^\ell_i \sim \mathcal{N}(0,\sigma_b^2),
\end{equation}
where $(\sigma_w,\sigma_b)$ are the initialization hyperparameters. This initialization corresponds to the so-called standard parametrization (SP), where the weights and the biases defined in \eqref{eq:nn} are the trainable parameters. We note that the NTK is often considered in the so-called NTK parametrization (NTP), where the weights in \eqref{eq:nn} are the scaled versions of trainable parameters: $\W_{ij}^\ell=\sigma_w/\sqrt{n_{\ell-1}} \w_{ij}^\ell$ for trainable $\w^\ell\in\mathbb{R}^{n_{\ell}\times n_{\ell-1}}$ initialized as $\w_{ij}^\ell \sim \mathcal{N}(0,1)$ i.i.d. This reparametrization, of course, does not change the distribution of the DNN's components. However, it scales the gradients by $O(1/M)$ and gives the NTK a well-defined infinite-width limit for fixed $L$. At the same time, NTP is equivalent to setting an individual learning rate in each layer inverse-proportionally to width, as explained, e.g., in \citet{yang2020feature}. In this paper, we focus on the NTK in SP since this parametrization is more common in practice (indeed, SP is the default setting in PyTorch). However, our results can be generalized to NTP straightforwardly.

\subsection{Information propagation in DNNs}
Results on information propagation in infinitely-wide DNNs established that the initialization hyperparameters $(\sigma_w,\sigma_b)$ determine the evolution of the variances $\mathbb{E}[(\x_i^\ell(x))^2]$ and the covariances $\mathbb{E}[\x_i^\ell(x)\x_i^\ell(\Tilde{x})]$ as they propagate through the DNN's layers. Based on this, \citet{poole2016exponential} identified three phases with distinct properties in the hyperparameter space: \textit{ordered}, \textit{chaotic} and the \textit{edge of chaos} (EOC). \citet{schoenholz2016deep} subsequently showed that the ordered phase corresponds to vanishing gradients and the chaotic phase corresponds to exploding gradients, i.e. the gradient norm decreases with depth in the ordered phase and increases in the chaotic phase. The edge of chaos is the initialization at the border between these two phases, which allows deeper signal propa\-gation through a DNN by avoiding vanishing or exploding gradients. Consider backpropagation equations given by:
\begin{figure*}
    \centering
    \includegraphics[width=0.42\textwidth]{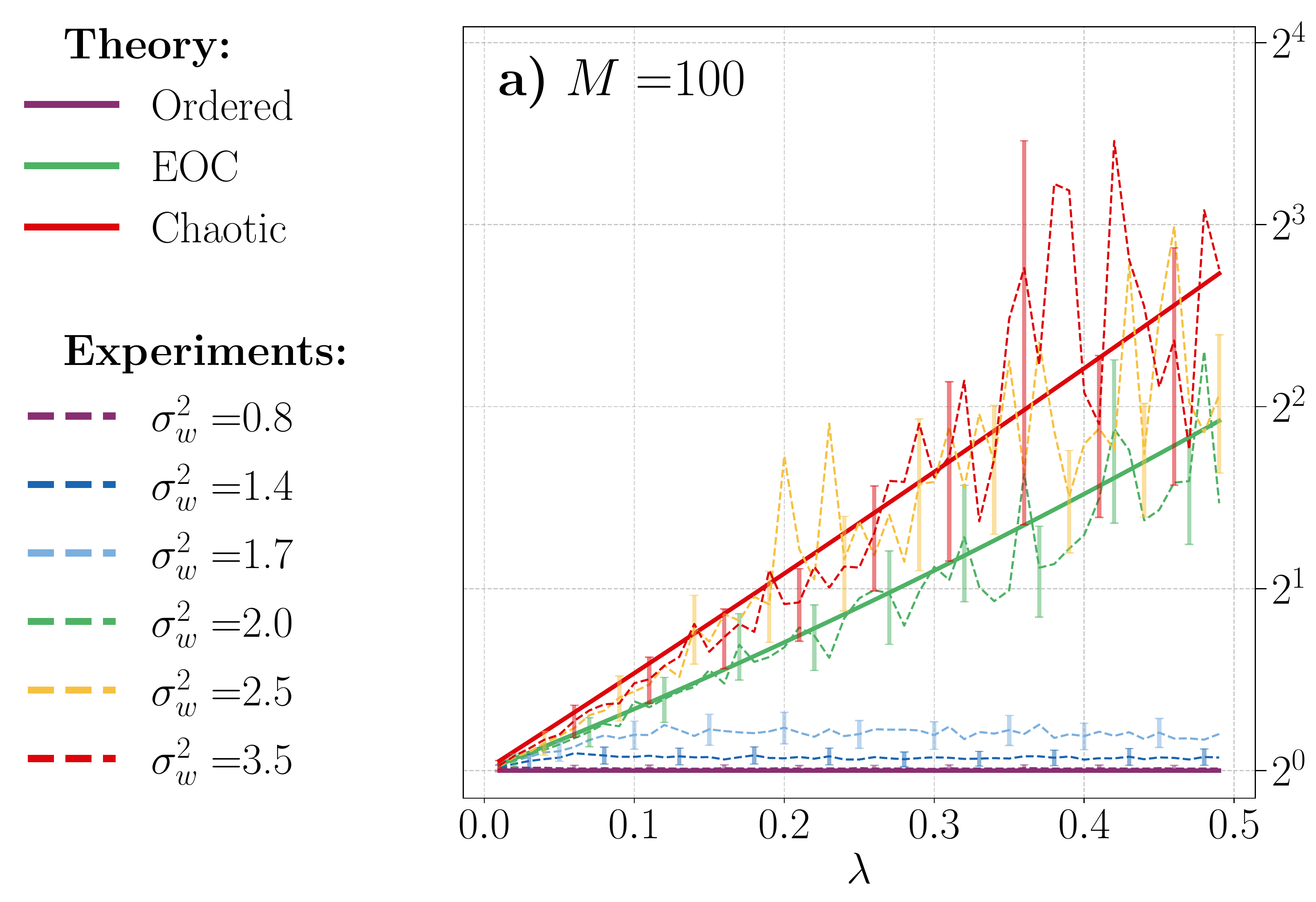}
    \includegraphics[width=0.275\textwidth]{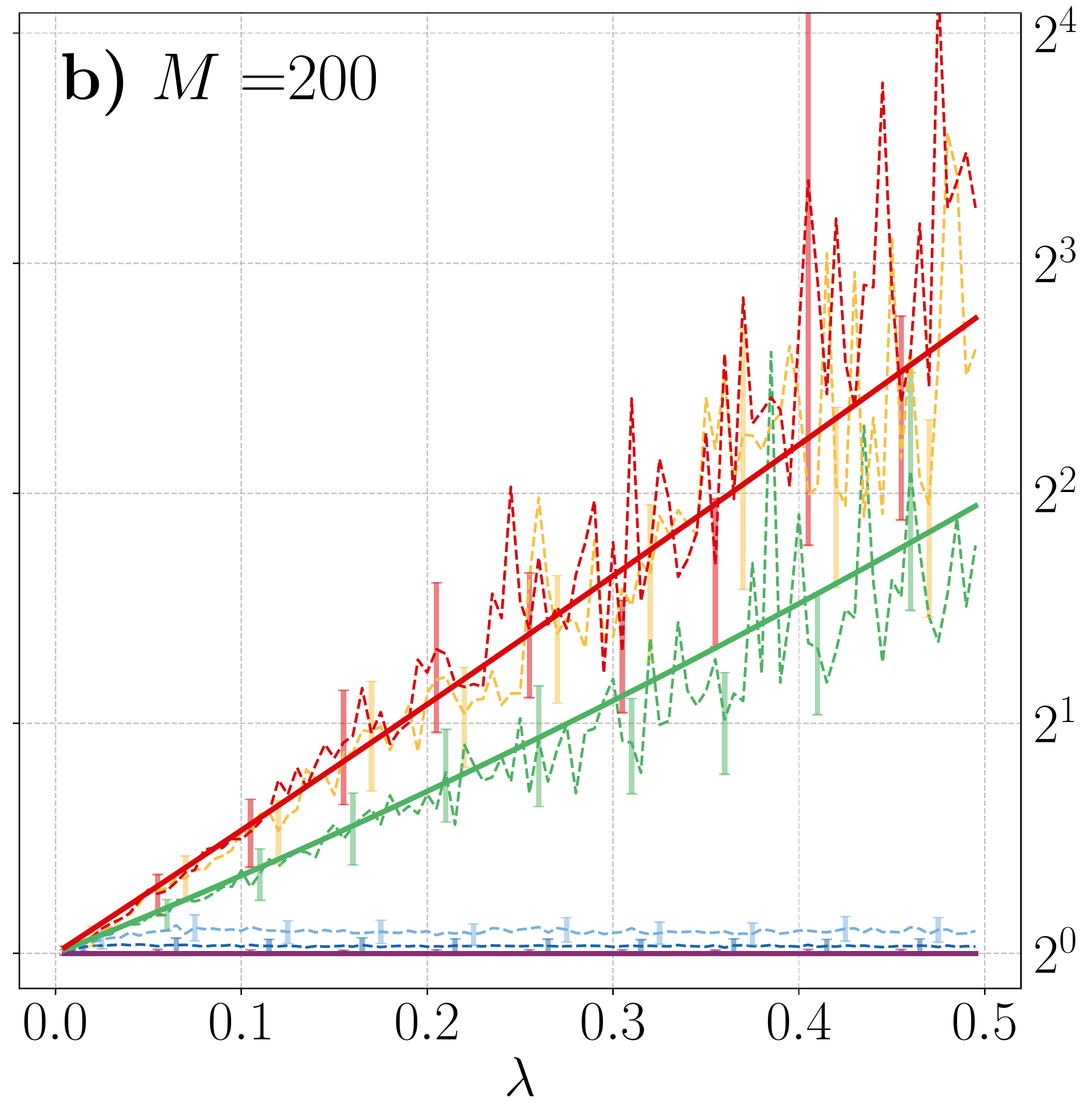}
    \includegraphics[width=0.275\textwidth]{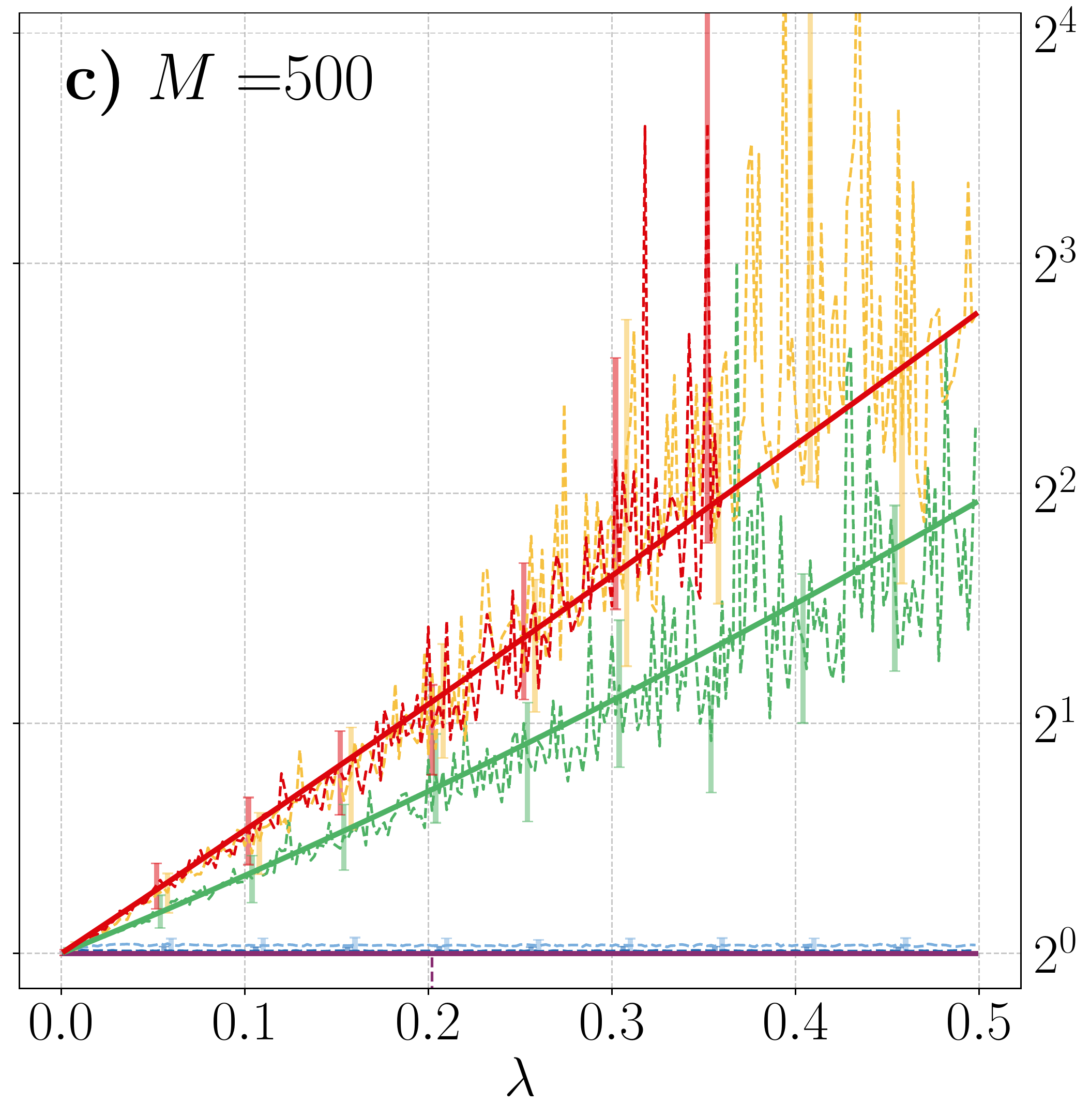}
    \caption{Ratio $\mathbb{E}[\Theta^2(x,x)]/\mathbb{E}^2[\Theta(x,x)]$ at initialization for fully-connected ReLU networks of constant width $M\in\{100,200,500\}$ with $\alpha_0=1$. The dashed lines represent the experimental results and the solid lines correspond to the theoretical predictions from Theorem \ref{th:NTK_dispersion_lim}. For each DNN configuration, we sampled 500 random initializations and computed an unbiased estimator for the ratio (see details in Appendix \ref{appendix:estimator}). The error bars (indicated by the vertical lines) show the bootstrap estimation of the standard error (only in a subset of points to keep the figure readable). We provide additional figures with continuous error bars in Appendix \ref{appendix:error_bars}.}
    \label{fig:NTK_var_lim}
\end{figure*}
\begin{equation}\label{eq:backprop}
\begin{split}
    \dfrac{\partial f(x)}{\partial \W_{ij}^\ell} &= \g_i^\ell\x^{\ell-1}_j,\quad \dfrac{\partial f(x)}{\partial \mathbf{b}_{i}^\ell} = \g_i^\ell, \\
    \g_i^\ell := \dfrac{\partial f(x)}{\partial \h_i^\ell} &= \phi^{'}(\h_i^\ell)\sum_j \g_j^{\ell+1}\W_{ji}^{\ell+1},
\end{split}
\end{equation}
\citet{schoenholz2016deep} studied the evolution of $\mathbb{E}[(\g_i^\ell)^2]$ along with $\mathbb{E}[(\x_i^\ell)^2]$ to find the distribution of DNNs' gradients. Some recent publications used these results to derive the properties of the infinite-width NTK in all the three phases of initialization \cite{karakida2018universal,xiao2019disentangling}. \citet{hayou2019mean} also showed that the infinite-\textit{depth} limit of the infinite-width NTK (when first the limit $M\to\infty$ is taken with fixed $L$ and then $L\to\infty$) yields a data-independent kernel and thus cannot explain properties of finite DNNs. Although our approach is different from the mentioned results since we do not assume infinite width before increasing depth, we show that the statistical properties of $\g^\ell$ and $\x^\ell$ can still be derived and lead to results on the NTK in our setting.

The initialization hyperparameters that comprise each phase differ depending on the chosen activation function. Since we are interested in ReLU networks, we note that the ordered phase corresponds to $\sigma_w^2<2$ and the chaotic phase --- to $\sigma_w^2>2$ for this activation function. The EOC is the initialization with $\sigma_w^2=2$. We refer, e.g., to \citet{schoenholz2016deep} for a method to compute the border between phases for a given activation function.

\section{Variability of the NTK}\label{section:var}

In the infinite-width limit, the NTK is deterministic under random initialization, which is one of the main results of the NTK theory. We investigate when this result holds outside of the NTK limit and, consequently, when the infinite-width behavior of the NTK gives a good approximation for realistic DNNs. 

\subsection{Infinite-depth-and-width limit}
Most results on the NTK assume that the network's depth is fixed as the width tends to infinity, i.e., $L/M \to 0$ in the limit. This setting, of course, does not describe deep finite-width networks since their depth-to-width ratio is bounded away from zero. Indeed, some recent works demonstrated that infinite-width approximations often get worse as the network's depth increases \citep{li2021future,matthews2018gaussian,yang2017mean}. In particular, \citet{hanin2019finite} considered this effect for the NTK and derived bounds for the ratio ${\mathbb{E}[\Theta^2(x,x)]}/{\mathbb{E}^2[\Theta(x,x)]}$ in case of ReLU DNNs initialized at the EOC ($\sigma_w=2$). This ratio characterizes the dispersion of the NTK: it is close to one if the NTK is approximately deterministic and is larger than two if the NTK's distribution is of high variance. Our first main result characterizes this ratio in the infinite-depth-and-width limit under different initializations:

\begin{figure*}
    \centering
    \includegraphics[width=0.42\textwidth]{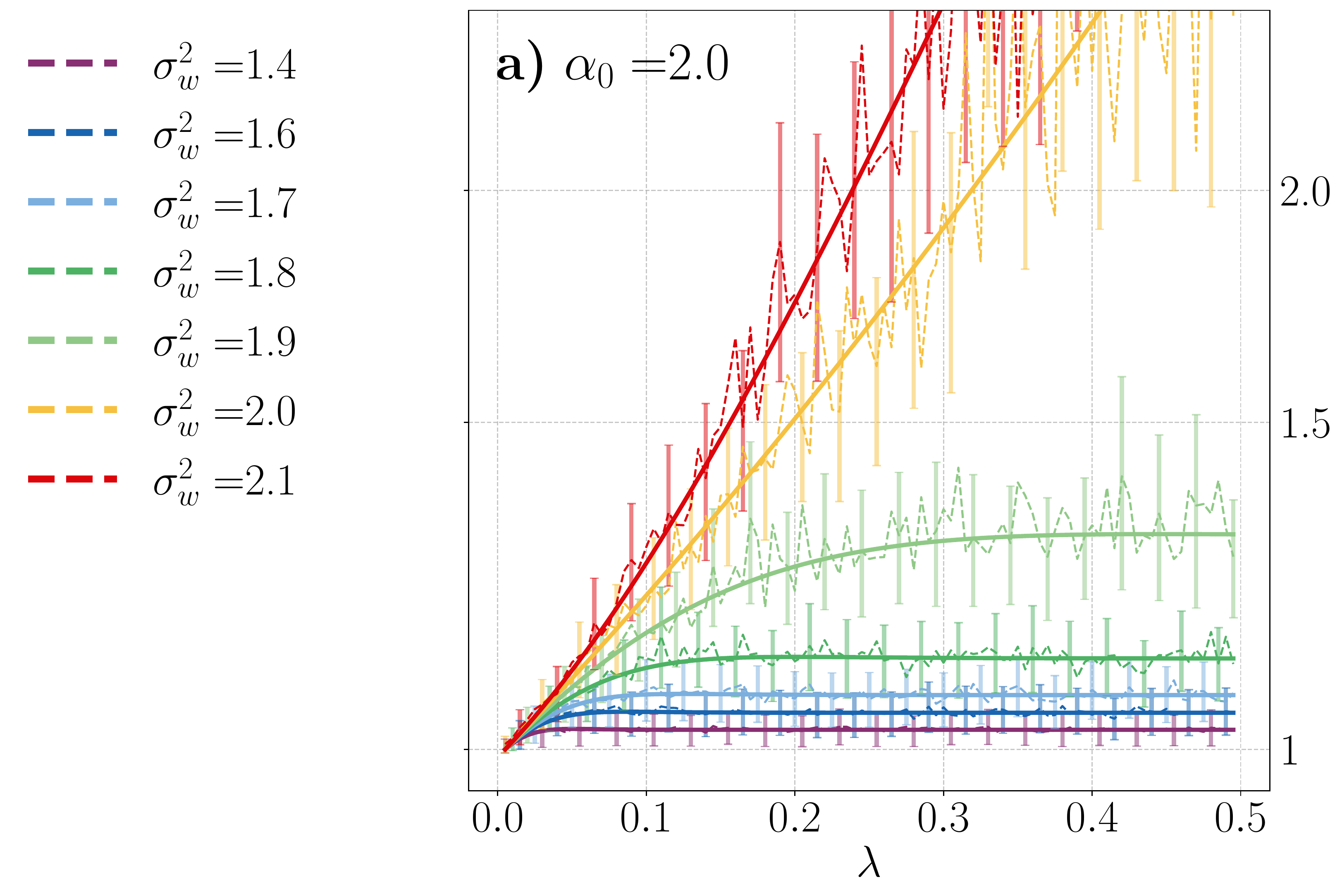}
    \includegraphics[width=0.275\textwidth]{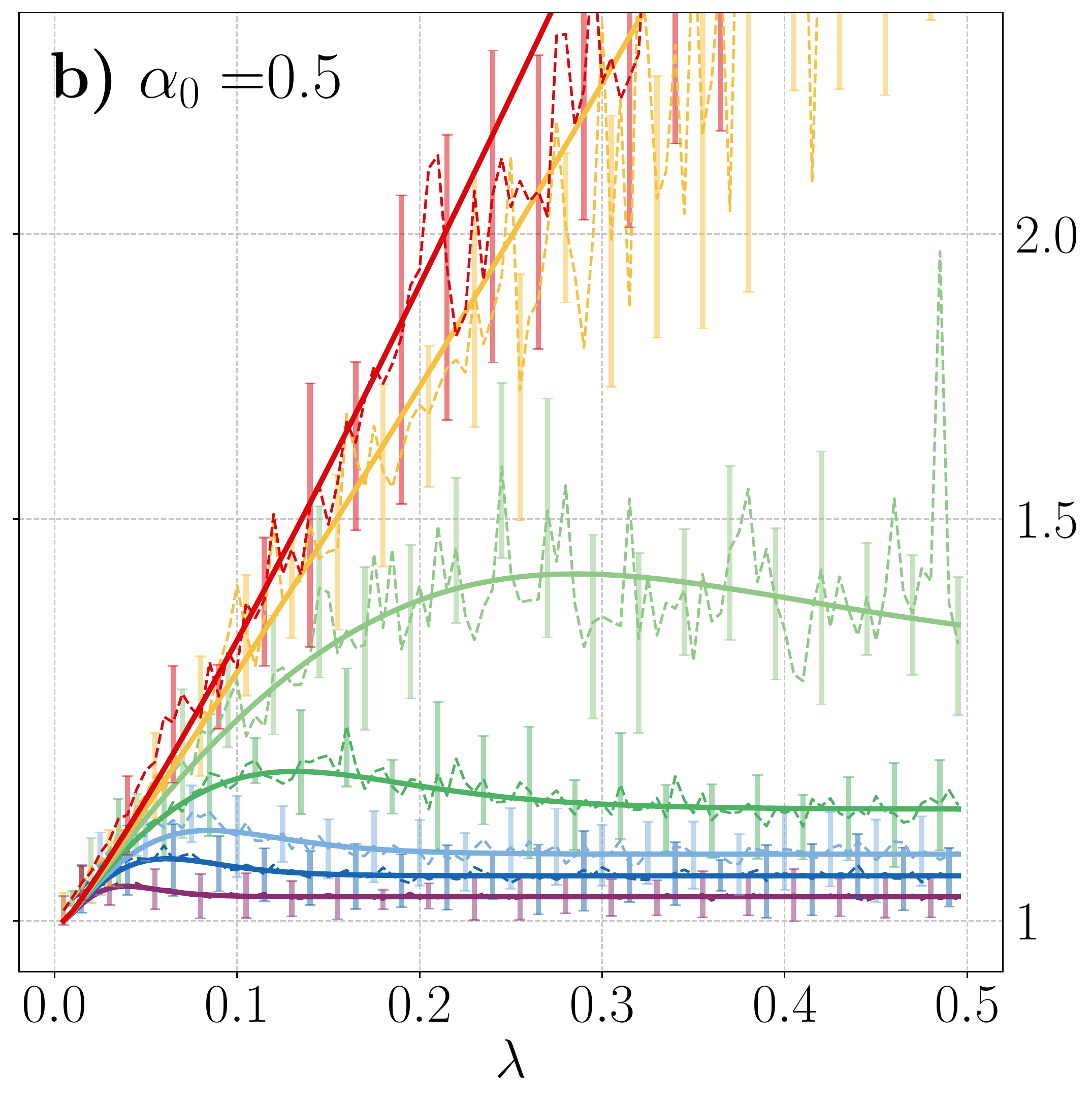}
    \includegraphics[width=0.275\textwidth]{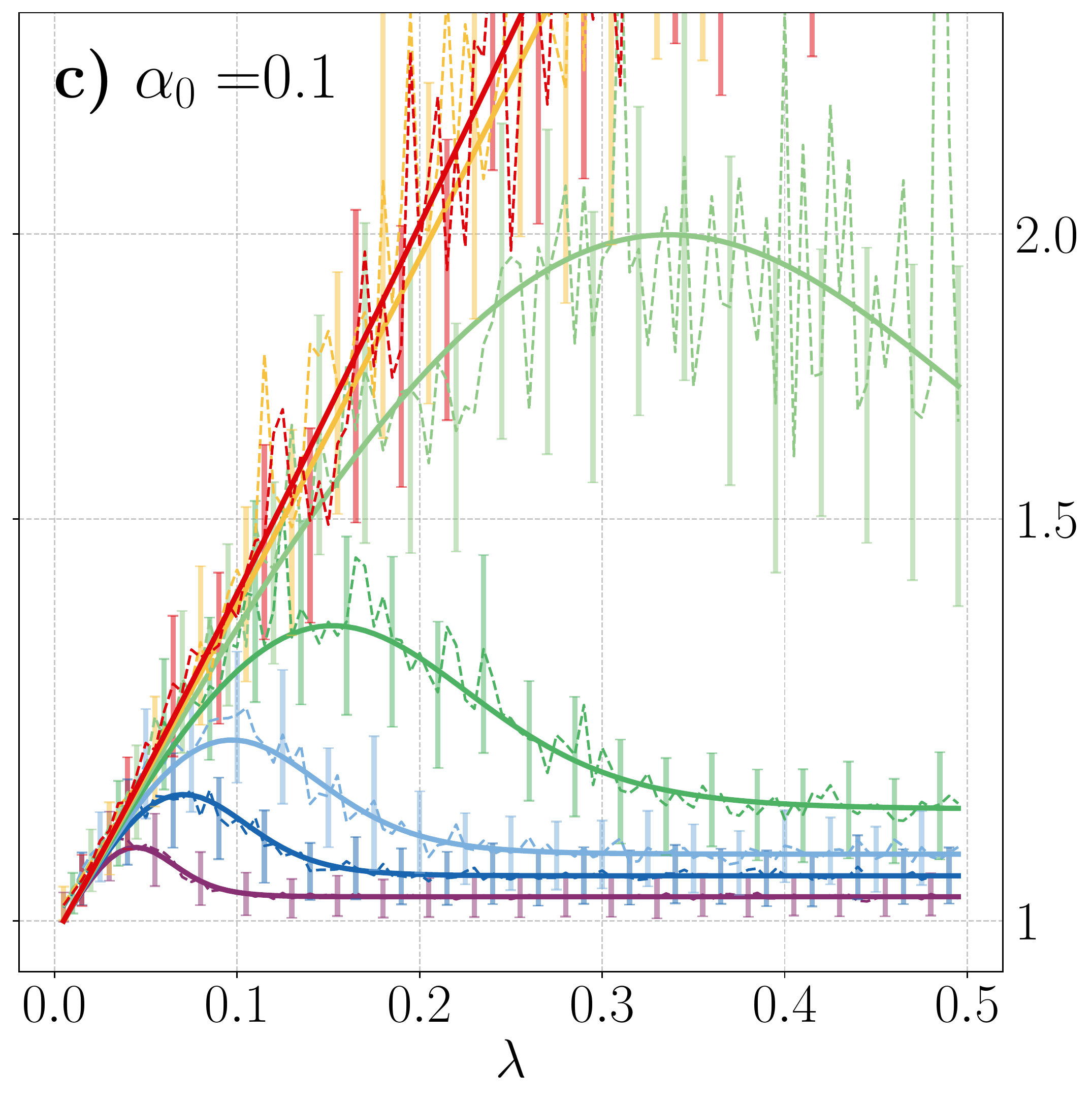}
    \caption{Ratio $\mathbb{E}[\Theta^2(x,x)]/\mathbb{E}^2[\Theta(x,x)]$ at initialization for fully-connected ReLU networks of constant width $M=200$ with the ratio $\alpha_0:=n_0/M \in\{2.0,0.5,0.1\}$. The initialization hyperparameter $\sigma_w^2$ is close to the EOC for all the lines. The dashed lines represent the experimental results (computed as described in Figure \ref{fig:NTK_var_lim}) and the solid lines show the theoretical predictions given by Theorem \ref{th:NTK_moments}. The error bars are shown only for a subset of points to keep the figure readable. We provide additional figures with continuous error bars in Appendix \ref{appendix:error_bars}.}
    \label{fig:NTK_var_eoc}
\end{figure*}

\begin{theorem}[Dispersion of the NTK at initialization in the limit]\label{th:NTK_dispersion_lim}
Consider a ReLU DNN as defined in \eqref{eq:nn} with constant width of hidden layers $M\in\mathbb{N}$, input dimension $n_0=\alpha_0 M$, $\alpha_0\in\mathbb{R}$ and output dimension $n_L=1$. The initialization is given by \eqref{eq:init} and the biases are initialized to zero, i.e. $\sigma_b=0$. Then, in the infinite-depth-and-width limit $M\to\infty,\ L\to\infty,\ L/M \to \lambda\in\mathbb{R}$, the following holds for the dispersion of the NTK: 

    1. In the \textbf{chaotic phase} ($a:=~{\sigma_w^2}/{2} > 1$), the NTK dispersion grows exponentially with depth-to-width ratio $\lambda:={L}/{M}$ as follows:
    \begin{equation}
    \begin{split}
        \dfrac{\mathbb{E}[\Theta^2(x,x)]}{\mathbb{E}^2[\Theta(x,x)]}\xrightarrow[]{} \dfrac{1}{2\lambda} e^{5\lambda} \Biggl( 1 - \dfrac{1}{4\lambda}(1 - e^{-4\lambda}) \Biggr).
    \end{split}
    \end{equation}
    
    2. At the \textbf{EOC} ($a=1$), the NTK dispersion grows exponentially with depth-to-width ratio $\lambda$ as well, but with a slower rate given by:
    \begin{equation}
    \begin{split}
        \dfrac{\mathbb{E}[\Theta^2(x,x)]}{\mathbb{E}^2[\Theta(x,x)]} \to  \dfrac{1}{(1+\alpha_0)^2} \Biggl[ e^{5\lambda}\Bigl(\dfrac{1}{2\lambda} + \dfrac{2\alpha_0^2-8\alpha_0}{25\lambda^2}\Bigr)\\
        + (e^\lambda - e^{5\lambda}) \dfrac{1-4\alpha_0}{8\lambda^2}+
        \dfrac{2\alpha_0}{5\lambda}\Bigl( \dfrac{4-\alpha_0}{5\lambda} - 1-\alpha_0\Bigr)\Biggr].
    \end{split}
    \end{equation}
    
    3. In the \textbf{ordered phase} ($a < 1$), the NTK dispersion tends to one:
    \begin{equation}
        \dfrac{\mathbb{E}[\Theta^2(x,x)]}{\mathbb{E}^2[\Theta(x,x)]} \to  1.
    \end{equation}

\end{theorem}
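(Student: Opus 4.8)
The plan is to reduce everything to the squared forward norms $\sum_i(\x_i^\ell)^2$ and the squared backward norms $\sum_j(\g_j^\ell)^2$, whose moments obey explicit one-step recursions that become exponentials in the limit. Substituting the backpropagation identities \eqref{eq:backprop} into the definition \eqref{eq:ntk} and using $\sigma_b=0$ (biases zero at init but still trainable) yields the exact decomposition
\begin{equation}
\Theta(x,x)=\sum_{\ell=1}^{L}\Bigl(\sum_{j=1}^{n_\ell}(\g_j^\ell)^2\Bigr)\Bigl(\sum_{i=1}^{n_{\ell-1}}(\x_i^{\ell-1})^2+1\Bigr),
\end{equation}
so the whole statement is about the first two moments of this sum. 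The first moment is easy: conditionally on the forward pass the weights $\W^{\ell+1},\dots,\W^L$ are independent, which gives $\mathbb{E}[\sum_j(\g_j^\ell)^2\mid\text{fwd}]=\prod_{k=\ell}^{L-1}B^k$ with $B^k:=\frac{\sigma_w^2}{n_k}\sum_i\mathbbm{1}_{\{\h_i^k>0\}}$; since each activation count is $\mathrm{Binomial}(n_k,1/2)$ \emph{independently} of $\|\x^{k-1}\|$, a tower/martingale argument gives $\mathbb{E}[\prod_{k=\ell}^{L-1}B^k\mid\x^{\ell-1}]=a^{L-\ell}$. Together with the forward recursion $\mathbb{E}[\sum_i(\x_i^\ell)^2]=a\,\mathbb{E}[\sum_i(\x_i^{\ell-1})^2]$ (from $\mathbb{E}[\phi(g)^2]=\tfrac12\mathrm{Var}\,g$) this closes $\mathbb{E}[\Theta(x,x)]$ in closed form.

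For the second moment I would expand $\mathbb{E}[\Theta^2]$ over pairs $(\ell,\ell')$ and, conditioning on the forward pass, reduce each term to a joint moment of the forward norms $\sum_i(\x_i^{\ell-1})^2,\sum_i(\x_i^{\ell'-1})^2$ and of the overlapping products of counts $B^k$ entering $\sum_j(\g_j^\ell)^2\sum_j(\g_j^{\ell'})^2$. The engine is a small family of coupled one-step recursions, each multiplying by $1+c/M+o(1/M)$: for the forward second moment $\mathbb{E}[(\sum_i(\x_i^\ell)^2)^2]=a^2(1+\tfrac5M)\,\mathbb{E}[(\sum_i(\x_i^{\ell-1})^2)^2]$, where the $5$ is the per-neuron relative variance $\mathrm{Var}(\phi(g)^2)/\mathbb{E}[\phi(g)^2]^2=5$; an analogous backward recursion with rate $4$ (per-active-neuron relative variance $\mathrm{Var}(g^2)/\mathbb{E}[g^2]^2=2$ for Gaussian $g$, times $\approx M/2$ active neurons); plus the count fluctuations coupling the two. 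Over $\Theta(L)$ layers such a factor converges to $e^{c\lambda}$ since $L/M\to\lambda$, which is exactly the origin of the $e^{5\lambda}$ term and, through combinations such as $e^{5\lambda}e^{-4\lambda}=e^{\lambda}$, of the $e^{\lambda}$ term. Finally the layer weightings $e^{5t}$ and $e^{4(\lambda-t)}$ attached to a neuron at position $t=\ell/M$, once summed over $\ell,\ell'$ and divided by $\mathbb{E}[\Theta]^2$, turn into integrals over $[0,\lambda]$ and $[0,\lambda]^2$, producing the $\tfrac1{2\lambda},\tfrac1{\lambda^2}$ prefactors and the $1-e^{-4\lambda}$ from an integral of the type $\int_0^\lambda e^{-4t}\,dt$. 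The $\alpha_0$-dependence at the EOC enters because there both the weight term and the ``$+1$'' bias term scale like $L$, so their relative weight $\tfrac1{\alpha_0}$ survives, whereas in the chaotic phase the weight term dominates (hence no $\alpha_0$).

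The main obstacle is this second-moment bookkeeping: for $\ell\neq\ell'$ the backward factors share every count $B^k$ with $k\ge\max(\ell,\ell')$ and the forward factors share every layer below $\min(\ell,\ell')$, so $\mathbb{E}[\cdots]$ does not factor and one must propagate a finite-dimensional moment vector through a transfer recursion, retaining every $O(1/M)$ contribution while discarding the genuinely $o(1/M)$ ones \emph{uniformly} in $(\ell,\ell')$ so that the discrete products and Riemann sums converge to the claimed exponentials and integrals. I expect the activation-count/forward-norm cross-correlations (which vanish only at subleading order) to be the fiddliest part, and I would isolate them by conditioning on the forward pass first and expanding the counts $\sum_i\mathbbm{1}_{\{\h_i^k>0\}}$ around $n_k/2$. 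The ordered phase I would treat separately and it is comparatively easy: there $a<1$ forces $a^2(1+c/M)<1$ for large $M$, so both $\mathbb{E}[\Theta]$ and $\mathbb{E}[\Theta^2]$ are geometric series dominated by the $O(1)$ layers nearest the output, on which the variance-inflation factors $(1+c/M)^{L-\ell}$ stay bounded and tend to $1$; hence $\Theta(x,x)$ concentrates and the ratio tends to $1$, while the chaotic and EOC cases retain the full exponential growth because every layer contributes comparably.
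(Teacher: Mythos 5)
Your route is essentially the paper's own: the same decomposition $\Theta(x,x)=\sum_{\ell=1}^L\|\g^\ell\|^2\bigl(\|\x^{\ell-1}\|^2+1\bigr)$, the same reduction to per-layer multiplicative factors $1+c/M$ that become $e^{c\lambda}$ in the joint limit (rate $5$ for the squared forward and backward chains, rate $1$ for the mixed forward--backward products in the overlap region, whence $e^{5\lambda}$, $e^{\lambda}$ and the $1-e^{-4\lambda}$ terms via $\int_0^\lambda e^{-4t}\,dt$), and the same phase mechanism: the weight part dominates the dispersion in the chaotic phase, the bias (``$+1$'') part in the ordered phase, and both contribute at the EOC, which is where $\alpha_0$ enters. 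This is the plan executed in Lemmas \ref{lemma:x_ratio}, \ref{lemma:d_ratio}, \ref{lemma:theta_w}, \ref{lemma:theta_b} and \ref{lemma:theta_wb}.

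The genuine gap is the opening assertion that ``conditionally on the forward pass the weights $\W^{\ell+1},\dots,\W^L$ are independent.'' This is false: the forward pass is a deterministic function of all the weights, and conditioning on it fixes the linear constraints $\h^{k+1}=\W^{k+1}\x^{k}$ on each $\W^{k+1}$, so the backpropagation products $(\W^{k+1})^T\g^{k+1}$ are not conditionally fresh Gaussians. What you are invoking is the gradient independence assumption (Assumption \ref{assumption:gia}), and the entire point of Lemma \ref{lemma:gia} is that in the regime $L/M\to\lambda>0$ it cannot be taken for granted: by your own uniformity principle, any systematic $O(1/M)$ per-layer error compounds to a constant-order factor in the limit. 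The paper closes this by conditioning on the shared entries $\W^{k+1}_{ij}$, computing the resulting conditional expectations explicitly as Gaussian-CDF integrals, and showing with H\"older bounds that the weight-reuse corrections are only $O(M^{-3/2})$ per layer (hence harmless), while the correlations through the shared gates $\phi'(\h_i^k)$ do contribute a non-negligible factor $1+1/n_k$ per layer to $\mathbb{E}[\mathcal{N}_x^k\mathcal{N}_\delta^k]$ --- precisely your rate-$1$ cross term. You correctly single out the gate/count--forward-norm correlations as the delicate part, but the weight-reuse dependence is assumed away rather than estimated; supplying that estimate is the missing (and hardest) step, without which your transfer recursion may carry an uncontrolled per-layer error of exactly the order you must retain. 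A smaller point of the same kind: your backward second-moment rate has to come out to $5$ --- the conditional chi-square contribution $4$ plus the activation-count fluctuation $1$, matching $\mathbb{E}[(\mathcal{N}^{\ell}_\delta)^2]=a^2(1+5/n_\ell)$ in Lemma \ref{lemma:d_ratio}; as written, ``rate $4$ plus count fluctuations'' is consistent with this, but if the count term were dropped you would get $e^{4\lambda}$ in place of $e^{5\lambda}$ and a wrong limit.
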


Our numerical experiments in Figure \ref{fig:NTK_var_lim} demonstrate that Theorem \ref{th:NTK_dispersion_lim} provides accurate approximations for the behavior of sufficiently deep and wide DNNs. Indeed, the proofs listed in Appendix \ref{proofs:NTK_var} show that the expressions in the above theorem are true up to the approximation given by $(1+c/M)^L\approx e^{c\lambda}$ and $O(1/\sqrt{M})$ in the coefficients of the exponents in case of finite width and depth.

\begin{remark}
The EOC expression in Theorem~\ref{th:NTK_dispersion_lim} tends to the chaotic phase expression if $\alpha_0:= n_0/M$ tends to zero (i.e. when the input dimension is fixed). We discuss this effect in Appendix \ref{appendix:eoc_a0}.
\end{remark}
\begin{remark}
Model scaling introduced in papers on the so-called "lazy training" phenomenon \cite{chizat2019lazy} does not change the results of Theorem \ref{th:NTK_dispersion_lim}. We discuss lazy training and its effects on our analysis in Appendix \ref{appendix:lazy_training}.
\end{remark}

\subsection{Finite depth and width effects}\label{section:finite-width}

We notice that some features of the NTK dispersion are still not visible in the infinite-depth-and-width limit. One can see in Figure \ref{fig:NTK_var_lim} that the NTK variance in the ordered phase is not exactly zero for finite-width DNNs, contrary to the prediction in the limit. This is especially noticeable for initialization close to the EOC, where the transition between the two kinds of limiting behavior occurs. 
Moreover, Theorem \ref{th:NTK_dispersion_lim} cannot reveal the effects of the architecture since it considers only DNNs of constant width. Therefore, we provide non-asymptotic expressions for the first two moments of the NTK at initialization in the following theorem and show that these expressions accurately describe the behavior of finite-width DNNs.
\begin{theorem}[Moments of the NTK at initialization]\label{th:NTK_moments}
Consider a ReLU DNN defined in \eqref{eq:nn} with widths scaling defined in \eqref{eq:scaling} and the output dimension $n_L=1$. The initialization is given by \eqref{eq:init} and $\sigma_b=0$. Then the expectation of the NTK is determined by the following terms:
\begin{equation}
    \mathbb{E}[\Theta_W(x,x)] = \|\x^0\|^2 a^{L-1} \sum_{\ell=1}^L \dfrac{n_{\ell-1}}{n_0},
\end{equation}
\vspace{-1.5ex}
\begin{equation}
    \mathbb{E}[\Theta_b(x,x)] = \sum_{\ell=1}^L a^{L-\ell},
\end{equation}
where the NTK components $\Theta_W$ and $\Theta_b$ are defined in \eqref{eq:ntk}. Moreover, the second moment of the NTK is determined by:
\begin{equation}
\begin{split}
    \qquad\qquad\qquad\mathllap{\dfrac{\mathbb{E}[\Theta_W^2(x,x)]}{\|\x^0\|^4 a^{2(L-1)}} }  & =\mathcal{X}_{(1,L)} \Biggl[ \sum_{\ell=1}^L \dfrac{n_{\ell-1}^2}{n_{0}^2} \\
    &+ \sum_{\ell_1 < \ell_2}\dfrac{n_{\ell_2-1}n_{\ell_1-1}}{n_{0}^2} \dfrac{\mathcal{C}_{(\ell_1,\ell_2)}}{\mathcal{X}_{(\ell_1,\ell_2)}} \Biggr],
\end{split}
\end{equation}
\begin{equation}
\begin{split}
    \qquad\qquad\qquad\mathllap{\dfrac{\mathbb{E}[\Theta_b(x,x)^2]}{a^{2L}}}  = \sum_{\ell=1}^L & \dfrac{\mathcal{X}_{(\ell,L)}}{a^{2\ell}} + 2 \sum_{\ell_1<\ell_2} \dfrac{\mathcal{X}_{(\ell_2,L)}}{a^{\ell_1+\ell_2}},
\end{split}
\end{equation}
\vspace{-3ex}
\begin{equation}
\begin{split}
    &\dfrac{\mathbb{E}[\Theta_W(x,x)\Theta_b(x,x)]}{\|\x^0\|^2a^{2L-1} }
      = \sum_{\ell=1}^L \dfrac{n_{\ell-1}}{n_0} \dfrac{\mathcal{X}_{(\ell,L)}}{a^{\ell}} \\
    & +  \sum_{\ell_1 < \ell_2} \dfrac{\mathcal{X}_{(\ell_2,L)}}{a^{\ell_1}}\dfrac{n_{\ell_1-1}}{n_0} \Biggl(\dfrac{n_{\ell_2-1}}{n_{\ell_1-1}} \mathcal{C}_{(\ell_1,\ell_2)} + \dfrac{a^{\ell_1}}{a^{\ell_2}} \Biggr),
\end{split}
\end{equation}
where we denoted $\mathcal{X}_{(i,j)}: =\prod_{k=i}^{j-1} \Bigl(1 + \frac{5}{n_k} + O\bigl({M^{-3/2}}\bigr) \Bigr)$, $\mathcal{C}_{(i,j)}: =\prod_{k=i}^{j-1} \Bigl(1 + \frac{1}{n_k} + O\bigl({M^{-3/2}}\bigr) \Bigr)$ and $a:=\sigma_w^2/2$.
\end{theorem}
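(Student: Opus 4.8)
The plan is to reduce both components of the diagonal NTK to moments of the forward squared norms $q^\ell := \|\x^\ell\|^2$ and backward squared norms $p^\ell := \|\g^\ell\|^2$. By \eqref{eq:backprop} the summands in \eqref{eq:ntk} are $(\g_i^\ell\x_j^{\ell-1})^2$ and $(\g_i^\ell)^2$, so that $\Theta_W(x,x) = \sum_{\ell=1}^L q^{\ell-1}p^\ell$ and $\Theta_b(x,x) = \sum_{\ell=1}^L p^\ell$. Every quantity in the statement is therefore a first or second moment of sums of $q^{\ell-1}p^\ell$ and $p^\ell$, and the whole proof reduces to computing the joint moments $\mathbb{E}[q^{\ell_1-1}p^{\ell_1}q^{\ell_2-1}p^{\ell_2}]$, $\mathbb{E}[p^{\ell_1}p^{\ell_2}]$ and $\mathbb{E}[q^{\ell_1-1}p^{\ell_1}p^{\ell_2}]$ over all layer pairs. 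Because $\sigma_b=0$, conditioned on $\x^{\ell-1}$ the preactivation coordinates $\h_j^\ell$ are i.i.d.\ $\mathcal{N}(0,\tfrac{\sigma_w^2}{n_{\ell-1}}q^{\ell-1})$, which turns each one-step recursion into an elementary Gaussian-moment computation.

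First I would establish the forward recursions. Conditioning on $\x^{\ell-1}$ and using $\mathbb{E}[\phi(h)^2]=\tfrac12 s^2$ and $\mathbb{E}[\phi(h)^4]=\tfrac32 s^4$ for $h\sim\mathcal{N}(0,s^2)$ gives $\mathbb{E}[q^\ell\mid q^{\ell-1}] = a\frac{n_\ell}{n_{\ell-1}}q^{\ell-1}$ and $\mathbb{E}[(q^\ell)^2\mid q^{\ell-1}] = a^2\frac{n_\ell^2}{n_{\ell-1}^2}(1+\tfrac{5}{n_\ell})(q^{\ell-1})^2$ with $a=\sigma_w^2/2$. Iterating from the deterministic $q^0=\|\x^0\|^2$ yields $\mathbb{E}[q^\ell]=a^\ell\frac{n_\ell}{n_0}\|\x^0\|^2$ and the product $\prod_k(1+5/n_k)$ defining $\mathcal{X}$, while the off-diagonal forward moment follows from the linear conditional expectation $\mathbb{E}[q^{\ell_1-1}q^{\ell_2-1}] = a^{\ell_2-\ell_1}\frac{n_{\ell_2-1}}{n_{\ell_1-1}}\mathbb{E}[(q^{\ell_1-1})^2]$. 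The backward recursions are analogous but require the gradient-independence step: from \eqref{eq:backprop} I would condition on the activation patterns $\phi'(\h^\ell)$ and treat the backpropagated weights as fresh Gaussians, using that the number of active units concentrates at $n_\ell/2$ up to $O(1/\sqrt{M})$. This gives $\mathbb{E}[p^\ell\mid p^{\ell+1}]=a\,p^{\ell+1}$, hence $\mathbb{E}[p^\ell]=a^{L-\ell}$, a fourth-moment factor matching $\mathcal{X}_{(\ell,L)}$, and $\mathbb{E}[p^{\ell_1}\mid p^{\ell_2}]=a^{\ell_2-\ell_1}p^{\ell_2}$ for $\ell_1<\ell_2$. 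These already deliver $\mathbb{E}[\Theta_b(x,x)]=\sum_\ell a^{L-\ell}$ and, by splitting $\Theta_b^2$ into diagonal $\mathbb{E}[(p^\ell)^2]$ and off-diagonal $2\,a^{\ell_2-\ell_1}\mathbb{E}[(p^{\ell_2})^2]$, the stated second moment of $\Theta_b$.

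The crux is the joint forward--backward moments entering $\mathbb{E}[\Theta_W^2]$ and $\mathbb{E}[\Theta_W\Theta_b]$. The key claim I would prove is the factorization $\mathbb{E}[q^{\ell_1-1}p^{\ell_1}q^{\ell_2-1}p^{\ell_2}] = \mathbb{E}[q^{\ell_1-1}q^{\ell_2-1}]\,\mathbb{E}[p^{\ell_1}p^{\ell_2}]\,\mathcal{C}_{(\ell_1,\ell_2)}$ for $\ell_1\le\ell_2$, where $\mathcal{C}_{(\ell_1,\ell_2)}=\prod_{k=\ell_1}^{\ell_2-1}(1+1/n_k+\dots)$ captures the correlation between the forward factor $q^{\ell_2-1}$ and the backward factor $p^{\ell_1}$, which traverse the \emph{same} weight matrices $\W^{\ell_1+1},\dots,\W^{\ell_2-1}$ and ReLU masks on the overlapping segment. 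For $\ell_1=\ell_2$ the overlap is empty, $\mathcal{C}=1$, the forward and backward fourth moments factor, and the telescoping $\mathcal{X}_{(1,\ell)}\mathcal{X}_{(\ell,L)}=\mathcal{X}_{(1,L)}$ produces the diagonal term $\mathcal{X}_{(1,L)}\sum_\ell n_{\ell-1}^2/n_0^2$; for $\ell_1<\ell_2$ the factor $\mathcal{C}_{(\ell_1,\ell_2)}$ together with $\mathcal{X}_{(1,L)}/\mathcal{X}_{(\ell_1,\ell_2)}=\mathcal{X}_{(1,\ell_1)}\mathcal{X}_{(\ell_2,L)}$ yields the off-diagonal coefficient $\mathcal{C}_{(\ell_1,\ell_2)}/\mathcal{X}_{(\ell_1,\ell_2)}$. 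I expect this overlap computation --- evaluating the second moment of a forward norm and a backward norm that share a layer's weights and ReLU mask, and showing it contributes exactly $1+1/n_k$ per shared layer --- to be the main obstacle, both because the shared randomness breaks the clean conditioning used above and because one must verify that gradient independence perturbs the multipliers only by the stated $O(M^{-3/2})$. Finally, for $\mathbb{E}[\Theta_W\Theta_b]=\sum_{\ell_1,\ell_2}\mathbb{E}[q^{\ell_1-1}p^{\ell_1}p^{\ell_2}]$ the same machinery applies: the case of equal indices gives the first sum, while the two orderings of the $\Theta_W$-index and the $\Theta_b$-index give the two parenthesized contributions --- one in which the extra backward factor lies deeper and factorizes linearly (the $a^{\ell_1}/a^{\ell_2}$ term) and one in which it overlaps the forward path and produces the $\mathcal{C}_{(\ell_1,\ell_2)}$ term --- after relabeling into a single $\sum_{\ell_1<\ell_2}$.
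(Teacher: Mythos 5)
Your overall route is the same as the paper's: the paper also writes $\Theta_W(x,x)=\sum_{\ell}\|\g^\ell\|^2\|\x^{\ell-1}\|^2$ and $\Theta_b(x,x)=\sum_\ell\|\g^\ell\|^2$, reduces everything to pairwise joint moments across layers via telescoping products of the ratios $\mathcal{N}_x^k=\|\x^k\|^2/\|\x^{k-1}\|^2$ and $\mathcal{N}_\delta^j=\|\g^j\|^2/\|\g^{j+1}\|^2$ (Lemmas \ref{lemma:x_ratio}, \ref{lemma:d_ratio}, \ref{lemma:theta_w}, \ref{lemma:theta_b}, \ref{lemma:theta_wb}), and your factorization $\mathbb{E}[q^{\ell_1-1}p^{\ell_1}q^{\ell_2-1}p^{\ell_2}]=\mathbb{E}[q^{\ell_1-1}q^{\ell_2-1}]\,\mathbb{E}[p^{\ell_1}p^{\ell_2}]\,\mathcal{C}_{(\ell_1,\ell_2)}$, with $\mathcal{C}=1$ when the paths do not overlap, is precisely what the paper's Lemma \ref{lemma:gia} establishes. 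Your forward-pass conditional moment computations also coincide with Lemma \ref{lemma:x_ratio}.

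The problem is that the step you defer as "the main obstacle" is not a technicality to be verified at the end — it is essentially the whole proof, and your sketch of the backward pass would not survive it as written. The paper's Lemma \ref{lemma:gia} represents the ratios through variables $\mathcal{U}_i^\ell$ (rows of $\W^\ell$ against the normalized activation) and $\mathcal{V}_j^{\ell+1}$ (columns of $\W^{\ell+1}$ against the normalized backpropagated error), notes that each pair $(\mathcal{U}_i,\mathcal{V}_j)$ is coupled only through the single shared entry $\W_{ij}$, conditions on that entry, and evaluates the resulting Gaussian integrals ($\mathbb{E}[\Phi(Aw)]$, $\mathbb{E}[w^2\Phi(Aw)]$, $\mathbb{E}[w^3\Phi(Aw)]$, etc.) to show that (i) products of backward ratios factorize \emph{exactly} in expectation, (ii) mixed forward--backward products pick up exactly $1+1/n_k+O(M^{-3/2})$ per overlapping layer, and (iii) products of squared backward ratios factorize up to $1+O(M^{-3/2})$. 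Two specific points in your plan conflict with this. First, your step $\mathbb{E}[p^\ell\mid p^{\ell+1}]=a\,p^{\ell+1}$ obtained by "treating the backpropagated weights as fresh Gaussians" is itself a gradient-independence claim that needs proof: $\g^{\ell+1}$ depends on $\W^{\ell+1}$ through the mask $\phi'(\h^{\ell+1})$, so the backprop step reuses a matrix correlated with the vector it multiplies, and consecutive backward ratios share a weight matrix just as the forward and backward chains do; that the net effect vanishes for expectations (statement (i) above) is a result, not an assumption. Second, your proposed accuracy — active-unit counts concentrating at $n_\ell/2$ "up to $O(1/\sqrt{M})$" — is fatally too coarse: a per-layer multiplicative error of order $M^{-1/2}$ compounds over $L\sim\lambda M$ layers into a factor $e^{O(\sqrt{M})}$, swamping the very $e^{5\lambda}$- and $e^{\lambda}$-type corrections the theorem is tracking. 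This is exactly why the paper must pin every per-layer factor down to multiplicative accuracy $O(M^{-3/2})$, so that the accumulated error over $\Theta(M)$ layers is $O(M^{-1/2})\to 0$.
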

These expressions are derived in Appendix \ref{proofs:NTK_var} as a part of the proof of Theorem \ref{th:NTK_dispersion_lim} and they simplify to the results in the limit by noticing that $\mathcal{X}_{(1,L)}\to e^{5\lambda}$ and $\mathcal{C}_{(1,L)}\to e^{\lambda}$.

Figure \ref{fig:NTK_var_eoc} examines how well the above expressions approximate the NTK of DNNs with varying ratios $\alpha_0:=n_0/M$ between the input dimension and the width of hidden layers. One can see that the NTK variance in the ordered phase indeed grows as the initialization approaches the EOC. This effect is due to the terms proportional to $\bigl((a-1)M\bigr)^{-1}$ in the moments of $\Theta_b(x,x)$ and $\Theta_W(x,x)\Theta_b(x,x)$. When the initialization is close enough to the EOC, $(a-1)^{-1}$ becomes comparable with finite $M$, and therefore the behavior diverges from the limit. 

\begin{figure*}
    \centering
    \includegraphics[width=\textwidth]{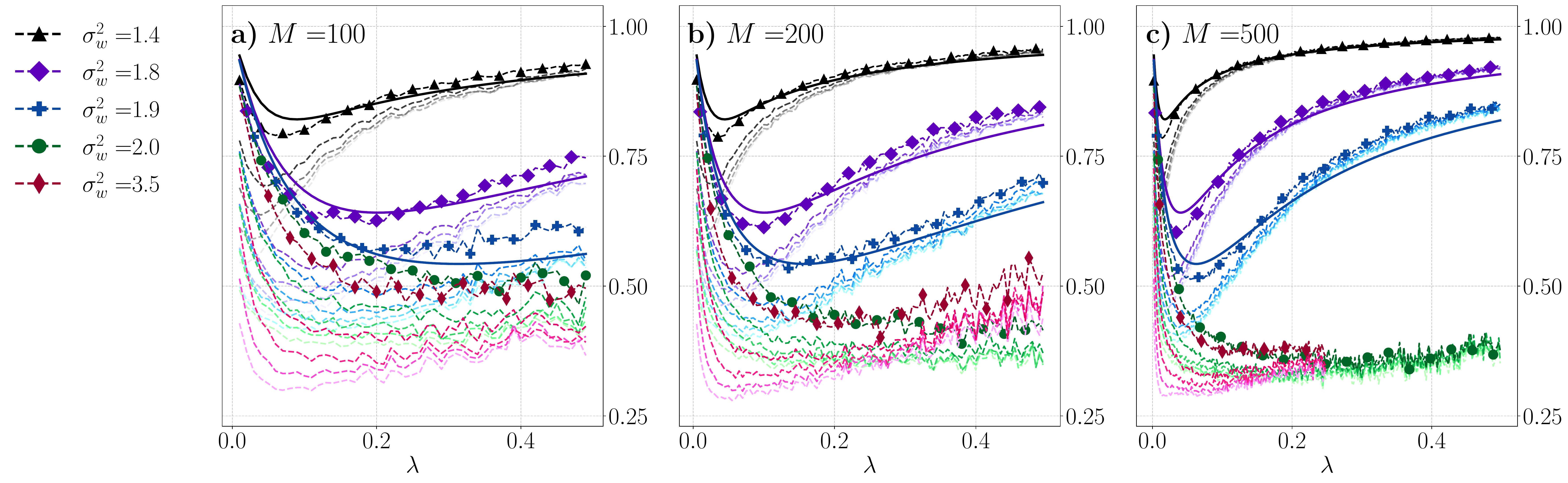}
    \caption{Ratio $\mathbb{E}[\Theta(x,\Tilde{x})]/\mathbb{E}[\Theta(x,x)]$ at initialization for fully-connected ReLU networks of constant width $M\in\{100,200,500\}$ with $\alpha_0=1$. Colors and markers indicate different values of $\sigma_w^2$. There are 5 dashed lines for each $\sigma_w^2$ value, which correspond to 5 values of the initial angle between input samples $\langle \x^0,\Tilde{\x}^0\rangle \in \{0.1,0.3,0.5,0.7,0.9\}$. Darker lines (which also display larger values of the ratio of interest) correspond to larger product $\langle \x^0,\Tilde{\x}^0\rangle$. Expectations are computed by sampling 500 random initializations of each DNN configuration. The solid lines show the estimate for the ratio of interest given by \eqref{eq:non_diag_ord} in the ordered phase.}
    \label{fig:non_diag}
\end{figure*}

Another remarkable observation is that the NTK dispersion may decrease with depth in the ordered phase for DNNs that increase the dimensionality (i.e. $n_0\leq n_1 \leq \dots n_{L-1}$), which means that deeper networks can be more robust. Indeed, in Subfigures b) and c) of Figure \ref{fig:NTK_var_eoc}, the dispersion reaches its peak at a certain depth and then decreases. We provide additional results characterizing this effect in DNNs with non-constant width in hidden layers in Appendix \ref{appendix:architecture}.

\subsection{Non-diagonal elements of the NTK}\label{section:non-diag}
The results stated so far only concern the diagonal elements of the NTK. To generalize to the whole kernel, we provide the following theorem proven in Appendix \ref{appendix:non-diag}:

\begin{theorem}[Non-diagonal elements of the NTK]\label{th:non-diag}
Consider a ReLU DNN from Theorem \ref{th:NTK_moments}. The following bounds hold for the ratio of non-diagonal and diagonal elements of the NTK:
\begin{equation}
    1\geq \lim_{\substack{L\to\infty,M\to\infty\\L/M\to\lambda\in\mathbb{R}}} \dfrac{\mathbb{E}[\Theta(x,\Tilde{x})]}{\mathbb{E}[\Theta(x,x)]} \geq \dfrac{1}{4}.
\end{equation}
Moreover, the dispersion of the non-diagonal elements is bounded by the dispersion of diagonal ones:
\begin{equation}\label{eq:non_diag_disp}
     \lim_{\substack{L\to\infty\\M\to\infty\\L/M\to\lambda}}\dfrac{\mathbb{E}[\Theta^2(x,\Tilde{x})]}{\mathbb{E}^2[\Theta(x,\Tilde{x})]} \leq 16\lim_{\substack{L\to\infty\\M\to\infty\\L/M\to\lambda}} \dfrac{\mathbb{E}[\Theta^2(x,x)]}{\mathbb{E}^2[\Theta(x,x)]}.
\end{equation}
\end{theorem}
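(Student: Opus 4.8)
The plan is to reduce both NTK components to forward and backward correlations and to analyze their expected values layer by layer. Using the backpropagation identities \eqref{eq:backprop}, the two parts of \eqref{eq:ntk} factorize as
\begin{equation*}
\Theta_W(x,\Tilde{x})=\sum_{\ell=1}^{L}\langle\g^\ell(x),\g^\ell(\Tilde{x})\rangle\,\langle\x^{\ell-1}(x),\x^{\ell-1}(\Tilde{x})\rangle,\qquad \Theta_b(x,\Tilde{x})=\sum_{\ell=1}^{L}\langle\g^\ell(x),\g^\ell(\Tilde{x})\rangle .
\end{equation*}
I would take expectations term by term, decoupling the forward factor $\x^{\ell-1}$ from the backward factor $\g^\ell$ by the same gradient-independence argument used to establish Theorems~\ref{th:NTK_dispersion_lim}--\ref{th:NTK_moments}. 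This reduces the claim to two scalar recursions: the forward correlation $r^\ell:=\mathbb{E}\langle\x^\ell(x),\x^\ell(\Tilde{x})\rangle/\mathbb{E}\|\x^\ell(x)\|^2$ follows the ReLU arc-cosine map $r^{\ell}=g(r^{\ell-1})$ with $g(c)=\frac1\pi\bigl(\sqrt{1-c^2}+(\pi-\arccos c)\,c\bigr)$, and the backward ratio $s^\ell:=\mathbb{E}\langle\g^\ell(x),\g^\ell(\Tilde{x})\rangle/\mathbb{E}\|\g^\ell(x)\|^2$ accumulates the derivative-overlap factor $\mathbb{E}[\phi'(\h^\ell(x))\phi'(\h^\ell(\Tilde{x}))]/\mathbb{E}[\phi'(\h^\ell(x))^2]=1-\theta^\ell/\pi$, where $\theta^\ell=\arccos r^{\ell-1}$.

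The upper bound is immediate: since $\Theta$ is a Gram matrix of gradients it is positive semidefinite, so $\Theta(x,\Tilde{x})\le\sqrt{\Theta(x,x)\Theta(\Tilde{x},\Tilde{x})}$ pointwise, and Cauchy--Schwarz in expectation together with the symmetry of the two inputs gives $\mathbb{E}[\Theta(x,\Tilde{x})]\le\sqrt{\mathbb{E}[\Theta(x,x)]\,\mathbb{E}[\Theta(\Tilde{x},\Tilde{x})]}=\mathbb{E}[\Theta(x,x)]$. For the lower bound I would use that ReLU activations are non-negative, so $r^\ell\ge0$ and every angle $\theta^\ell$ lies in $[0,\pi/2]$ beyond the first layer, and that the arc-cosine map increases the correlations monotonically toward their fixed point---equal to $1$ in the ordered phase and at the EOC. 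The binding case is the EOC, where convergence to $1$ is polynomial: linearizing $g(c)=c+\frac{2\sqrt2}{3\pi}(1-c)^{3/2}+\dots$ near $c=1$ gives $1-r^\ell\sim\frac{9\pi^2}{2}\ell^{-2}$, hence $\theta^\ell\sim 3\pi/\ell$. Substituting into $s^\ell=\prod_{k=\ell}^{L}(1-\theta^k/\pi)\sim\exp(-3\sum_{k=\ell}^{L}k^{-1})$ yields $s^\ell\sim(\ell/L)^3$; since at the EOC the diagonal weights $a^{L-\ell}$ are uniform and the forward factor $r^{\ell-1}\to1$ for any fixed $\ell/L>0$, the ratio of both sums converges to the Riemann integral $\frac1L\sum_{\ell}s^\ell\to\int_0^1 t^3\,dt=\frac14$. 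In the ordered and chaotic phases the geometric weights $a^{L-\ell}$ concentrate on the top layers $\ell\approx L$, where $s^\ell\approx1$, so the ratio stays close to $1$; thus $\frac14$ is attained only at the EOC and is the global infimum.

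The hard part is this lower bound, and specifically the asymptotic analysis of the EOC recursion: establishing the exact $\theta^\ell\sim3\pi/\ell$ rate with control of the subleading corrections, so that the exponent $3$ in $s^\ell\sim(\ell/L)^3$ (and hence the value $\int_0^1 t^3\,dt=\frac14$) is correct, and justifying the passage from the discrete averages to the integral uniformly in the input angle and across all three phases. The second delicate point is making the forward/backward decoupling legitimate in the joint limit $L,M\to\infty$ with $L/M\to\lambda$, since the two factors share the intermediate weights; I expect this to require the same concentration estimates that underlie Theorem~\ref{th:NTK_moments}.

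Finally, the dispersion bound \eqref{eq:non_diag_disp} follows cheaply once the lower bound is in hand. Positive semidefiniteness again gives $\Theta^2(x,\Tilde{x})\le\Theta(x,x)\Theta(\Tilde{x},\Tilde{x})$ pointwise, so taking expectations and applying Cauchy--Schwarz yields $\mathbb{E}[\Theta^2(x,\Tilde{x})]\le\sqrt{\mathbb{E}[\Theta^2(x,x)]\,\mathbb{E}[\Theta^2(\Tilde{x},\Tilde{x})]}=\mathbb{E}[\Theta^2(x,x)]$ by symmetry. Combining this numerator bound with the denominator bound $\mathbb{E}[\Theta(x,\Tilde{x})]\ge\frac14\mathbb{E}[\Theta(x,x)]$ from the first part gives $\mathbb{E}[\Theta^2(x,\Tilde{x})]/\mathbb{E}^2[\Theta(x,\Tilde{x})]\le 16\,\mathbb{E}[\Theta^2(x,x)]/\mathbb{E}^2[\Theta(x,x)]$, which is exactly \eqref{eq:non_diag_disp}.
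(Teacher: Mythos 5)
Your route matches the paper's own proof in its essentials: the same decomposition of $\Theta(x,\Tilde{x})$ into $\sum_{\ell}\langle\g^\ell,\Tilde{\g}^\ell\rangle\bigl(\langle\x^{\ell-1},\Tilde{\x}^{\ell-1}\rangle+1\bigr)$, the same forward correlation map $g$ and backward overlap factor $f(t)=\frac{1}{\pi}(\pi/2+\arcsin t)$ (your $1-\theta/\pi$ is identical to the paper's $f$), the same Cauchy--Schwarz upper bound, and the same assembly of the dispersion bound from $\mathbb{E}[\Theta^2(x,\Tilde{x})]\le\mathbb{E}[\Theta^2(x,x)]$ together with $\mathbb{E}[\Theta(x,\Tilde{x})]\ge\frac14\mathbb{E}[\Theta(x,x)]$. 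Your explicit EOC asymptotics ($1-r^\ell\sim\frac{9\pi^2}{2}\ell^{-2}$, hence $s^\ell\sim(\ell/L)^3$ and $\frac1L\sum_\ell s^\ell\to\int_0^1t^3\,dt=\frac14$) is correct and in fact sharper than what the paper writes down: the paper only asserts the inequality $\sum_{\ell=1}^L g^{\circ(\ell-1)}(\rho_x^0)\prod_{k=\ell}^{L-1}f\bigl(g^{\circ(k-1)}(\rho_x^0)\bigr)\ge L/4$, while your computation shows where the constant $\frac14$ actually comes from.

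However, your handling of the chaotic phase is wrong, and this is a genuine gap. You claim that for $a>1$ "the geometric weights $a^{L-\ell}$ concentrate on the top layers $\ell\approx L$, where $s^\ell\approx1$, so the ratio stays close to $1$," and conclude that $\frac14$ is attained only at the EOC. Two problems. First, for $a>1$ the weights $a^{L-\ell}$ are largest at the \emph{bottom} layers $\ell\approx1$, where $s^\ell\sim(\ell/L)^3\approx0$; if those weights were the operative ones, the ratio would tend to $0$, not $1$, and the claimed bound $\ge\frac14$ would fail outright. Second, and more importantly, in the chaotic phase the kernel is dominated by $\Theta_W$, not $\Theta_b$ (cf.\ Lemmas \ref{lemma:theta_w} and \ref{lemma:theta_b}: $\mathbb{E}[\Theta_b]/\mathbb{E}[\Theta_W]\propto \alpha_0 a/((a-1)L)\to0$), and in $\Theta_W$ the forward factor $a^{\ell-1}$ exactly cancels the backward factor $a^{L-\ell}$, so every layer carries the \emph{same} weight $a^{L-1}$. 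The chaotic-phase ratio is therefore the uniform average $\frac1L\sum_\ell \rho^{\ell-1}s^\ell\to\int_0^1t^3\,dt=\frac14$ --- the same value as at the EOC; the chaotic phase is precisely the binding case, which is how the paper proves the bound (using $\mathbb{E}[\Theta_W(x,x)]=a^{L-1}\sum_\ell n_{\ell-1}/n_0$ with the $\ge L/4$ sum bound), and only the ordered phase yields a ratio near $1$, consistent with \eqref{eq:non_diag_ord}. The missing ingredient in your argument is the phase-dependent dominance analysis ($\Theta_W$ dominates for $a>1$, $\Theta_b$ for $a<1$, both contribute at $a=1$); without it you attach the wrong layer weights to the chaotic phase, so neither the lower bound nor, consequently, the factor $16$ in \eqref{eq:non_diag_disp} is established there. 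The fix is cheap with your own machinery --- apply your uniform-weight Riemann-sum computation to $\Theta_W$ --- but as written the chaotic case is broken.
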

Of course, the bound in \eqref{eq:non_diag_disp} is too loose for practical applications if the goal is to prove that the NTK is approximately deterministic. However, we note that the ratio of non-diagonal and diagonal elements can be close to the lower bound only in the chaotic phase. In the ordered phase, our proof suggests the following bound for sufficiently wide and deep networks: 
\begin{equation}\label{eq:non_diag_ord}
    \dfrac{\mathbb{E}[\Theta(x,\Tilde{x})]}{\mathbb{E}[\Theta(x,x)]} \gtrsim \dfrac{\sum_{\ell=1}^La^{L-\ell}\prod_{k=\ell}^{L-1}g(\rho_{k-1})}{\sum_{\ell=1}^La^{L-\ell}},
\end{equation}
where $g(t):= \frac{1}{\pi}(\pi/2+\arcsin t)$ and $\rho_{k}$ is the infinite-width approximation of the cosine distance between $\x^k$ and $\Tilde{\x}^k$, which only increases with depth and is given by applying the function $r(t)~:=~\frac{1}{\pi}\bigl(\sqrt{1-t^2}~+~t\pi/2~+~t\arcsin t\bigr)$ to $\langle \x^0,\Tilde{\x}^0\rangle$ consecutively $k$ times. The function $r(\cdot)$ arises from the expectation of a product of two correlated Gaussian variables under ReLU function. 

We provide empirical results on the ratio of non-diagonal and diagonal elements of the NTK in Figure \ref{fig:non_diag}. We also plot the estimate for the ratio given by \eqref{eq:non_diag_ord} in the same figure. One can see that the ratio quickly increases with depth in the ordered phase. Moreover, the lower bound in \eqref{eq:non_diag_ord} gives a good approximation for the experimental results. Then for a given network in the ordered phase one can replace the coefficient $16$ in the bound \eqref{eq:non_diag_disp} with $1/c^2$, where $c$ is a better estimate for the lower bound of $\mathbb{E}[\Theta(x,\Tilde{x})]/\mathbb{E}[\Theta(x,x)]$ and can be close to one in the ordered phase.

We also provide experiments on the dispersion of the non-diagonal elements in Appendix \ref{appendix:non_diag_exp}. Our results indicate that, in practice, the dispersion here is only slightly higher than the prediction for the diagonal elements. The general picture stays the same as in Figure \ref{fig:NTK_var_lim}: the dispersion is low and does not grow with depth in the ordered phase but increases exponentially with the depth-to-width ratio at the EOC and in the chaotic phase. The finite-width effects represented in Figure \ref{fig:NTK_var_eoc} also remain the same for the non-diagonal elements.

\subsection{Proof ideas}\label{section:proof-ideas}
All our proofs are based on the following decomposition of the NTK:
\begin{equation}
    \Theta(x,x) = \sum_{\ell=1}^L \|\g^\ell(x)\|^2 \bigl(\|\x^{\ell-1}(x)\|^2 +1\bigr),
\end{equation}
which directly follows from \eqref{eq:ntk} and the representation of the gradients in backpropagation \eqref{eq:backprop}. Using forward-propagation equations \eqref{eq:nn} and backpropagation equations \eqref{eq:backprop}, we derive the first two moments for the ratios $\mathcal{N}_x^\ell:=~\|\x^{\ell}\|^2/\|\x^{\ell-1}\|^2$ and $\mathcal{N}_\delta^\ell:=\|\g^{\ell}\|^2/\|\g^{\ell+1}\|^2$ in Lemmas \ref{lemma:x_ratio} and \ref{lemma:d_ratio}. We then notice that $\mathcal{N}_x^\ell$ are uncorrelated in different layers of the networks, as well as $\mathcal{N}_\delta^\ell$, while $\mathcal{N}_x^\ell$ and $\mathcal{N}_\delta^\ell$ in the same layer can be weakly correlated and we quantify the effects of this dependence in Lemma~\ref{lemma:gia}. Given the moments of $\mathcal{N}_x^\ell$ and $\mathcal{N}_\delta^\ell$ and the results on their correlations, we can represent summands of the NTK as the following telescopic products:
\begin{equation}
    \|\g^\ell\|^2 \|\x^{\ell-1}\|^2 = \|\x^0\|^2\|\g^L\|^2 \prod_{k=1}^{\ell-1} \mathcal{N}_x^k \prod_{p=\ell}^{L-1} \mathcal{N}_\delta^p
\end{equation}
and use this decomposition to compute the expectation and the second moment of the NTK. We derive the first two moments for $\Theta_W(x,x)$ and $\Theta_b(x,x)$ separately in Lemmas \ref{lemma:theta_w} and \ref{lemma:theta_b}. These two components have very different properties in the infinite-depth-and-width limit and, as we show in the proof of Theorem~\ref{th:NTK_dispersion_lim}, the behavior of the NTK is determined by $\Theta_W(x,x)$ in the chaotic phase and by $\Theta_b(x,x)$ in the ordered phase. We also derive the expectation of $\Theta_W(x,x)\Theta_b(x,x)$ in Lemma~\ref{lemma:theta_wb} to complete the calculations of the second moment of the NTK.

 We note that many papers on the NTK use the so-called gradient independence assumption (GIA), which leads to the independence of $\mathcal{N}_x^\ell$ and $\mathcal{N}_\delta^\ell$. This assumption often leads to correct results in the infinite-width limit, as discussed in \citet{yang2019scaling}. However, in our case of infinite depth and width, it may have a non-negligible effect even for simple fully-connected networks with all the weights initialized independently. Thus, we have to calculate this effect explicitly in our proofs. We also note that \citet{li2021future} used a similar technique involving telescoping products of weakly-correlated variables to derive the distribution of the activation norms of ResNets.

\section{Training dynamics of the NTK}\label{section:train}
\begin{figure*}
    \centering
    \includegraphics[width=0.24\textwidth]{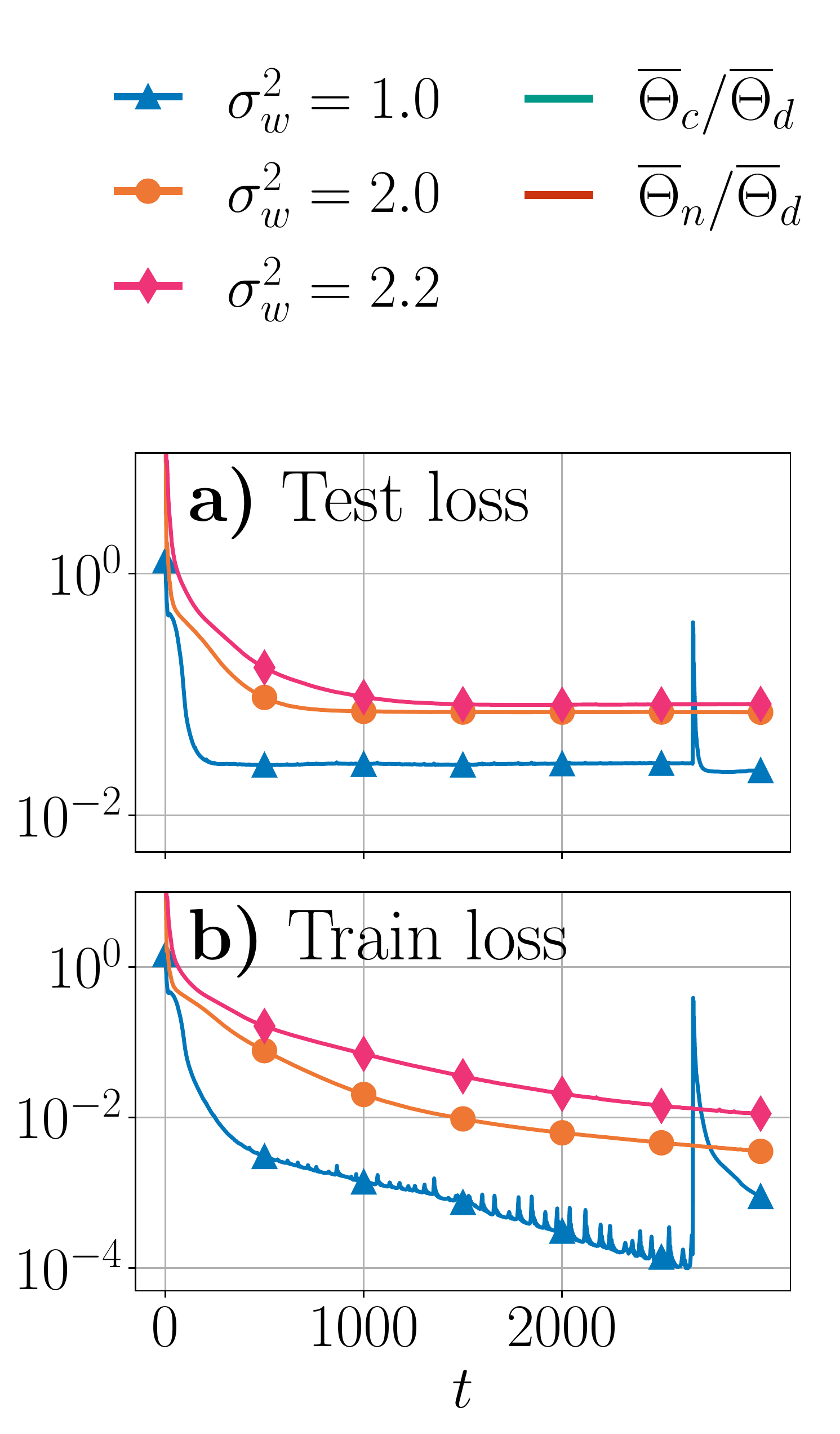}
    \includegraphics[width=0.7\textwidth]{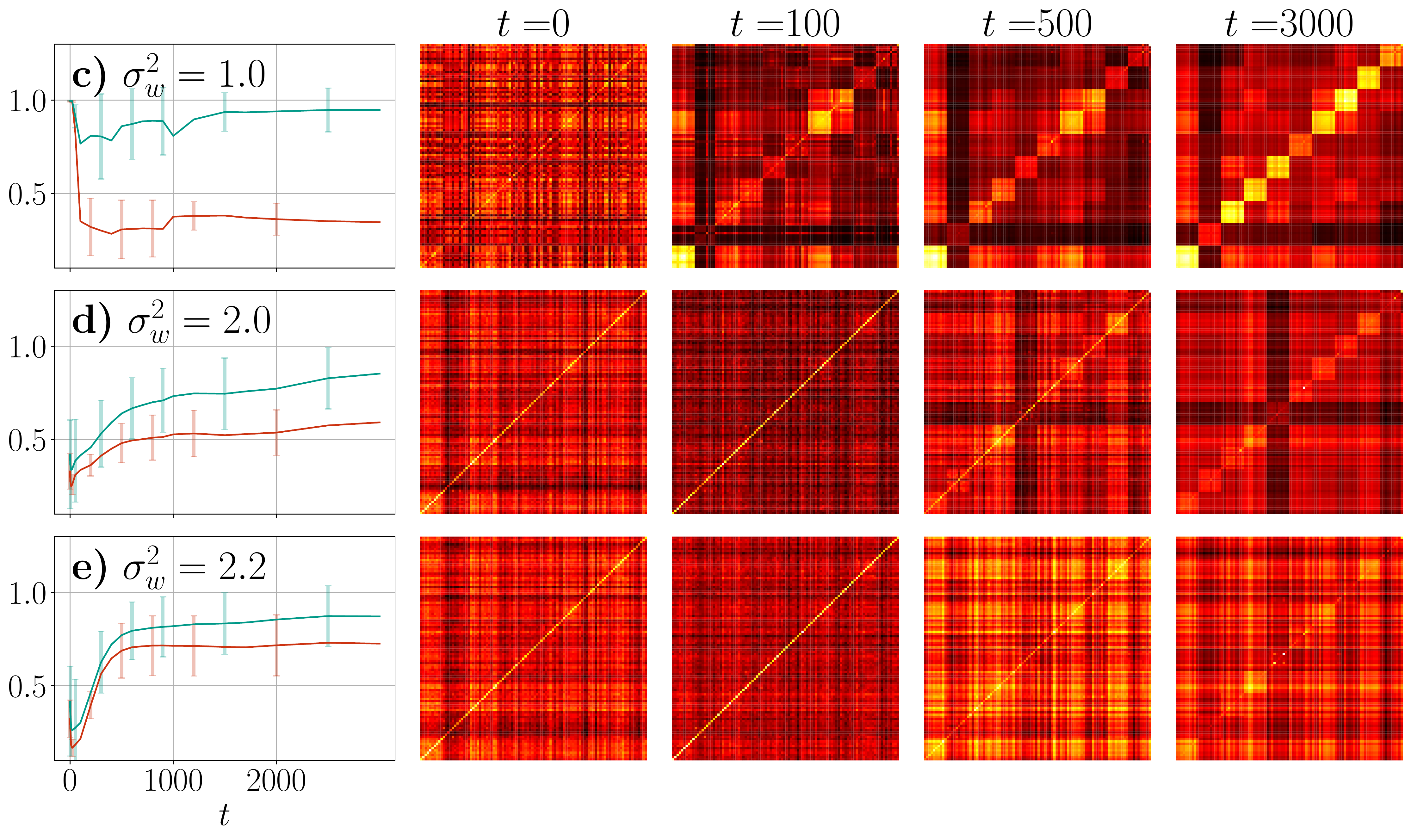}
    \caption{Structure of the NTK matrix in different stages of training for fully-connected ReLU DNNs with $L=20$ and $M=300$. The DNNs are initialized with $\sigma_w^2\in\{1.0,2.0,2.2\}$ and trained on MNIST using Adam algorithm with learning rate $10^{-5}$. Subplots \textbf{a)} and \textbf{b)} show the test and the train loss achieved by each DNN. Subplots \textbf{c)}, \textbf{d)} and \textbf{e)} characterize label-awareness of the NTK. Variables $\overline{\Theta}_d$, $\overline{\Theta}_c$ and $\overline{\Theta}_n$ are defined in \eqref{eq:structure}. The heatmaps show the NTK matrix on MNIST subsample of size $100$ at epoch $t\in\{0,100,500,3000\}$. The subsample contains $10$ elements of each class and is arranged so that consecutive diagonal blocks of size $10$ contain pairwise NTK values on each class. The color range in the heatmaps is adjusted to include the interval between the maximal and the minimal values of the NTK in a given epoch, i.e. the colors correspond to different values for different epochs. Brighter colors indicate larger values.}
    \label{fig:NTK_structure}
\end{figure*}
In the infinite-width limit, the NTK stays constant during training, which allows to study the gradient flow dynamics of infinitely-wide DNNs analytically. In this section, we discuss when this result holds outside of the infinite-width limit and how the empirical NTK changes during training.

\subsection{The first GD step}\label{section:gd}
\citet{hanin2019finite} proved that the NTK of overparametrized fully-connected ReLU networks initialized at the EOC can evolve non-trivially during GD training if depth and width of the network are comparable. In particular, their result bounds the relative change of the diagonal elements of the NTK $\Theta(x,x)$ in the first GD step carried out on a single sample $x$ above and below by an exponential function of the depth-to-width ratio $\lambda$. We generalize this result to different initializations with the following theorem proven in Appendix \ref{appendix:train}:

\begin{theorem}[GD step of the NTK]\label{theorem:NTK_gd}
Consider a ReLU DNN from Theorem \ref{th:NTK_moments}. A single GD update on a sample $(x,y)\in\mathcal{D}$ results in the following changes of the NTK:
\begin{enumerate}
    \item In the \textbf{chaotic phase} ($a:={\sigma_w^2}/{2} > 1$), the changes to the NTK value are infinite in the limit for a constant learning rate $\eta\in\mathbb{R}$:
     \begin{equation}
            \dfrac{\mathbb{E}[\Delta{\Theta}(x,x)] }{\mathbb{E}[{\Theta}(x,x)] }  \xrightarrow[\substack{M\to\infty, L\to\infty, \\L/M \to \lambda\in\mathbb{R}}]{} \infty.
        \end{equation}
    \item In the \textbf{ordered phase} ($a < 1$), the NTK stays constant in the limit:
       \begin{equation}
            \dfrac{\mathbb{E}[\Delta{\Theta}(x,x)] }{\mathbb{E}[{\Theta}(x,x)] }  \xrightarrow[\substack{M\to\infty, L\to\infty, \\L/M \to \lambda\in\mathbb{R}}]{} 0.
        \end{equation}
\end{enumerate}
\end{theorem}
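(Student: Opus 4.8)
The plan is to compute the leading-order change of $\Theta(x,x)$ under a single gradient step and compare it against the explicit $\mathbb{E}[\Theta(x,x)]$ of Theorem~\ref{th:NTK_moments}. Writing the update as $\Delta\w=-\eta\,\frac{\partial\mathcal{L}}{\partial f(x)}\nabla_\w f(x)$ and using $\Theta(x,x)=\|\nabla_\w f(x)\|^2$, a first-order Taylor expansion of the post-update kernel gives
\begin{equation}
\Delta\Theta(x,x)=-\eta\,\frac{\partial\mathcal{L}}{\partial f(x)}\,\bigl\langle\nabla_\w\Theta(x,x),\nabla_\w f(x)\bigr\rangle+O(\eta^2),
\end{equation}
so the whole problem reduces to controlling the triple contraction $S(x):=\bigl\langle\nabla_\w\Theta(x,x),\nabla_\w f(x)\bigr\rangle$ of the gradient with the Hessian of $f$, together with the loss-derivative prefactor. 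For ReLU networks $f$ is multilinear in the per-layer weight matrices away from the measure-zero kink set, so the mixed second derivatives vanish within a single layer and $S(x)$ reduces to a sum over ordered layer pairs $\ell_1<\ell_2$.

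Next I would make this layer-pair structure explicit through the backpropagation equations~\eqref{eq:backprop}. Differentiating $\partial f/\partial\W^{\ell_1}_{ij}=\g^{\ell_1}_i\x^{\ell_1-1}_j$ with respect to a weight in a deeper layer $\ell_2$ produces a factor $\|\x^{\ell_1-1}\|^2$ from the forward pass up to $\ell_1$, the backward signals $\g^{\ell_1}$ and $\g^{\ell_2}$, the forward quantities $\x^{\ell_2-1}$ and $\phi^{'}(\h^{\ell_2-1})$ at layer $\ell_2$, and a backward Jacobian segment linking $\ell_2$ to $\ell_1$; the gate second derivatives contribute only on the kink set and are negligible. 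This is precisely the factorized forward/middle/backward shape handled by the telescopic-product machinery of the paper, so I would rewrite each layer-pair contribution as a product of the ratios $\mathcal{N}_x^\ell$ and $\mathcal{N}_\delta^\ell$ and take expectations via Lemmas~\ref{lemma:x_ratio} and \ref{lemma:d_ratio}, importing the forward--backward correlation corrections of Lemma~\ref{lemma:gia} for the middle segment between $\ell_1$ and $\ell_2$.

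With $\mathbb{E}[\Delta\Theta(x,x)]$ assembled as a double sum over $(\ell_1,\ell_2)$ of products of per-layer factors of the form $1+O(1/M)$, I would extract the dominant exponential rate and divide by $\mathbb{E}[\Theta(x,x)]$, which grows like $L\,a^{L-1}$ when $a>1$ and stays bounded when $a<1$. The loss-derivative prefactor introduces the output $f(x)$, whose typical scale is $a^{(L-1)/2}$; keeping track of both the $\mathbb{E}[f(x)S(x)]$ and $\mathbb{E}[S(x)]$ pieces, the combined growth outpaces $\mathbb{E}[\Theta(x,x)]$ when $a>1$, forcing the ratio to $\infty$, and is suppressed when $a<1$, forcing it to $0$. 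The ordered-phase statement then parallels the $\Theta_b$-dominated bounded behavior of Theorem~\ref{th:NTK_dispersion_lim} (case~3), while the chaotic case parallels its exponentially growing $\Theta_W$-dominated behavior, with the EOC being exactly the borderline at which this argument would recover the exponential-in-$\lambda$ bound of \citet{hanin2019finite}.

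The main obstacle is the accurate evaluation of $\mathbb{E}[\Delta\Theta(x,x)]$ rather than a mere bound, since the triple gradient--Hessian contraction is a genuinely higher-order object than the second moments computed for Theorem~\ref{th:NTK_dispersion_lim}. Concretely, one must (i) justify that the ReLU kink contributions to the Hessian are negligible in expectation, (ii) control the forward--backward correlations on the middle segment, where the gradient-independence assumption fails in the infinite-depth-and-width regime and must be corrected as in Lemma~\ref{lemma:gia}, and (iii) handle the loss-derivative factor $\partial\mathcal{L}/\partial f(x)$, whose correlation with $S(x)$ through the shared randomness of $f(x)$ selects the dominant term and hence the precise rate. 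Finally, one should check that the $O(\eta^2)$ remainder does not alter the limits: in the ordered phase it vanishes at least as fast as the leading term, and in the chaotic phase the leading term already diverges, so the first-order truncation governs both conclusions.
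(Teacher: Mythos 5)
Your proposal follows essentially the same route as the paper's proof: a first-order expansion of $\Delta\Theta(x,x)$ in $\eta$, reduction of the gradient--Hessian contraction to a sum over ordered layer pairs (using that within-layer mixed second derivatives vanish for ReLU), evaluation of the resulting forward/middle/backward factors via the telescoping-product machinery of Lemmas \ref{lemma:x_ratio}, \ref{lemma:d_ratio} and \ref{lemma:gia}, and a rate comparison against $\mathbb{E}[\Theta(x,x)]$ in each phase. The one point where you overestimate the difficulty is the ``accurate evaluation'' you list as the main obstacle: the paper never computes $\mathbb{E}[\Delta\Theta(x,x)]$ exactly, but instead keeps only the adjacent pairs $\ell'=\ell+1$ (where the identity $\sum_i \x_i^\ell\g_i^\ell = f(x)$ turns the contraction into $f(x)^2$ times norm products, giving a sign-definite term $\propto a^{2L-1}$) to obtain a diverging lower bound in the chaotic phase, and bounds the contraction above by norm products plus mean-zero terms (expectation $\propto a^{2L}L^2$) to obtain a vanishing upper bound in the ordered phase.
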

This result shows that deep networks can potentially behave according to the NTK theory during GD training only in case of initialization in the ordered phase. We refer to experiments in \citet{seleznova2020analyzing}, which confirm that the relative change of the NTK during training on MNIST is significant and grows with depth in the chaotic phase and at the EOC but not in the ordered phase. However, it is unclear how to generalize this result to realistic scenarios of DNN training, which include randomly selected batches of arbitrary size and optimization algorithms beyond vanilla GD. Our experiments in the next subsection show that the NTK evolution is in general non-trivial even in the ordered phase.

\begin{remark}
Deep networks rescaled as in \citet{chizat2019lazy} can exhibit lazy training (with random NTK at initialization) in the chaotic phase only if the scaling parameter grows exponentially with depth $L$. We discuss the lazy training phenomenon and its effects on our results in Appendix \ref{appendix:lazy_training}.
\end{remark}

\subsection{Changes of the NTK structure}\label{section:structure}
 The NTK at initialization is label-agnostic, i.e. its value on a pair $(x,\Tilde{x})$ is independent of whether the labels of $x$ and $\Tilde{x}$ are the same or not. Clearly, label-agnostic features cannot provide an optimal representation system for an arbitrary task and many authors studied the benefits of adding label information to kernels \cite{shawe2002kernel,gonen2011multiple,tishby2015deep}. In particular, \citet{chen2020label} argued that label-agnosticism can explain the performance gap between trained DNNs and the NTK and demonstrated that adding label-awareness improves the performance of the infinite-width NTK. Thus, it is important to characterize label-awareness of the empirical NTK and how the training process leads to it to understand the properties of DNNs.

We saw in Section \ref{section:non-diag} that the NTK at initialization has an approximately diagonal structure with the diagonal values larger than the non-diagonal ones. On the contrary, the "optimal kernel" for a classification task would be block-diagonal with blocks of larger values corresponding to samples of the same class. Thus, we expect the NTK to naturally change towards the block-diagonal structure during the training process. Our experiments in Figure~\ref{fig:NTK_structure} confirm this intuition in a simple setting of fully-connected ReLU networks trained on MNIST. Let us define the following variables that characterize label-awareness of the NTK matrix:
\begin{equation}\label{eq:structure}
\begin{split}
    \overline{\Theta}_d &:= \dfrac{1}{|\mathcal{X}|}\sum_{x\in\mathcal{X}}\Theta(x,x),\\
    \overline{\Theta}_c &:= \dfrac{1}{K}\sum_{k=1}^K \dfrac{1}{|\mathcal{X}_k|(|\mathcal{X}_k|-1|)}\sum_{\substack{x_i\neq x_j,\\x_i,x_j\in \mathcal{X}_k}}\Theta(x_i,x_j), \\
    \overline{\Theta}_n &:= \dfrac{1}{K}\sum_{k=1}^K \dfrac{1}{|\mathcal{X}_k|(|\mathcal{X}|-|\mathcal{X}_k|)}\sum_{\substack{x_i\in\mathcal{X}_k,\\x_j\not\in \mathcal{X}_k}}\Theta(x_i,x_j),
\end{split}
\end{equation}
where $\mathcal{X}=\cup_{k=1}^K \mathcal{X}_k$ is the decomposition of the dataset $\mathcal{X}$ into $K$ classes. Then $\overline{\Theta}_d$ is the mean diagonal value, $\overline{\Theta}_c$ is the mean value of the NTK on samples from the same class and $\overline{\Theta}_n$ is the mean value on samples from different classes. Figure \ref{fig:NTK_structure} suggests that a larger gap between $\overline{\Theta}_n/\overline{\Theta}_d$ and $\overline{\Theta}_c/\overline{\Theta}_d$ may be related to better performance of DNNs. Moreover, the gap between $1$ and the ratio $\overline{\Theta}_c/\overline{\Theta}_d$ may characterize overfitting. Therefore, we believe that the structure of the NTK can be a proxy for generalization of DNNs even outside of the NTK regime. One can also see that the structure of the NTK changes more rapidly in the early stages of training, which is coherent with the conclusion in \citet{fort2020deep} that useful features are mostly learned in the first epochs of training. Thus, dynamics of the NTK may provide information about the state of the training process.

\section{Conclusions and future work}
This paper adds to the line of research on the statistical properties of the NTK and the correspondence between finite-width DNNs and their infinite-width approximations. Our results in Section \ref{section:var} precisely quantify variability of the NTK at initialization for a given fully-connected ReLU DNN and assess how well the kernel is approximated by its infinite-width limit. Combining our findings from Section \ref{section:var} with the results on the GD update of the NTK in Section \ref{section:gd}, we conclude that the NTK regime can approximate trained networks with non-trivial depth-to-width ratio only in the ordered phase. At the same time, the behavior of overparametrized DNNs outside of the NTK regime is very poorly understood so far. It is unclear how to characterize DNNs' training dynamics in the general case and what role the properties of the (random and dynamic) NTK play here. We make a step into this direction in Section \ref{section:structure} by demonstrating how the NTK acquires a block-diagonal structure during training. We believe that precisely characterizing the effects of this NTK structure on the generalization of DNNs is a promising direction for future work. In general, we hope to establish new connections between the NTK and other aspects of DNN training outside of the NTK regime.

\section*{Acknowledgements}
G.K. acknowledges partial support by the NSF–Simons Research Collaboration on the Mathematical and Scientific Foundations of Deep Learning (MoDL) (NSF DMS 2031985) and DFG SPP 1798, KU 1446/27-2 and KU 1446/21-2.

\bibliography{references}
\bibliographystyle{icml2022}

\newpage
\appendix
\onecolumn
\section{Proofs}
\subsection{Variability of the NTK at initialization}\label{proofs:NTK_var}

\begin{lemma}[Forward-propagation of variance]\label{lemma:x_ratio}{Consider a fully-connected DNN defined in \eqref{eq:nn} initialized as in \eqref{eq:init}. The activation function in the hidden layers is ReLU, i.e. $\phi(x)= x \mathbbm{1}\{x>0\}$. Assume further that the biases are initialized to zero, i.e. $\sigma_b=0$. Then the following holds for the ratios of the activation norms in consecutive layers of the network, denoted $\mathcal{N}^{\ell}_x := {\|\x^\ell\|^2}/{\|\x^{\ell-1}\|^2}$, $\ell=1,\dots,L-1$:

\begin{equation}
    \mathbb{E}[\mathcal{N}^{\ell}_x] =  \dfrac{\sigma_w^2}{2}\dfrac{n_\ell}{n_{\ell-1}}, \quad \mathbb{E}[(\mathcal{N}^{\ell}_x)^2] =  \Bigl(\dfrac{\sigma_w^2}{2}\Bigr)^2\Bigl(\dfrac{n_\ell}{n_{\ell-1}}\Bigr)^2\Bigl(1 + \dfrac{5}{n_\ell} \Bigr).
\end{equation}

\begin{equation}
    \dfrac{\mathcal{N}^{\ell}_x - \mathbb{E}[\mathcal{N}^{\ell}_x]}{\sqrt{\mathbb{V}[\mathcal{N}^{\ell}_x]}} \xrightarrow[n_\ell\to\infty]{d} \mathcal{N}(0,1),
\end{equation}}
where $\mathcal{N}(0,1)$ is the standard normal distribution. Moreover, random variables $\{\mathcal{N}_x^{\ell}\}_{\ell=0,\dots,L-1}$ are mutually independent.
\end{lemma}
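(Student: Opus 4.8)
The plan is to exploit the positive homogeneity of ReLU together with the rotational invariance of the i.i.d.\ Gaussian weight matrix to show that, conditionally on $\x^{\ell-1}$, the variable $\mathcal{N}_x^\ell$ is distributed as a normalized sum of i.i.d.\ terms whose law does \emph{not} depend on $\x^{\ell-1}$. Concretely, I would first condition on $\sigma(\W^1,\dots,\W^{\ell-1})$, which fixes $\x^{\ell-1}$. Writing $\hat{\x}:=\x^{\ell-1}/\|\x^{\ell-1}\|$ and using $\bias^\ell=0$, the pre-activations are $\h^\ell=\|\x^{\ell-1}\|\,\W^\ell\hat{\x}$; since $\W^\ell$ has i.i.d.\ $\mathcal{N}(0,\sigma_w^2/n_{\ell-1})$ entries independent of the earlier layers, the vector $u:=\W^\ell\hat{\x}$ is $\mathcal{N}(0,(\sigma_w^2/n_{\ell-1})I_{n_\ell})$ for \emph{any} unit vector $\hat{\x}$. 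By positive homogeneity $\phi(\|\x^{\ell-1}\|u)=\|\x^{\ell-1}\|\phi(u)$, so the norm factor cancels:
\begin{equation}
    \mathcal{N}_x^\ell=\frac{\|\x^\ell\|^2}{\|\x^{\ell-1}\|^2}=\sum_{i=1}^{n_\ell}\phi(u_i)^2=\frac{\sigma_w^2}{n_{\ell-1}}\sum_{i=1}^{n_\ell} z_i^2\,\mathbbm{1}\{z_i>0\},
\end{equation}
where $z_i$ are i.i.d.\ standard normal and $u_i=(\sigma_w/\sqrt{n_{\ell-1}})z_i$. The crucial point is that this conditional law is independent of $\x^{\ell-1}$, which drives every subsequent step.

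The moments then follow from the elementary computation for $Y:=z^2\mathbbm{1}\{z>0\}$ with $z\sim\mathcal{N}(0,1)$: by symmetry $\mathbb{E}[Y]=\tfrac12\mathbb{E}[z^2]=\tfrac12$ and $\mathbb{E}[Y^2]=\tfrac12\mathbb{E}[z^4]=\tfrac32$, so $\mathbb{V}[Y]=\tfrac54$. Summing $n_\ell$ i.i.d.\ copies and multiplying by $\sigma_w^2/n_{\ell-1}$ gives $\mathbb{E}[\mathcal{N}_x^\ell]=\tfrac{\sigma_w^2}{2}\tfrac{n_\ell}{n_{\ell-1}}$, and expanding the square into the $n_\ell$ diagonal and $n_\ell(n_\ell-1)$ off-diagonal terms yields $\mathbb{E}[(\mathcal{N}_x^\ell)^2]=(\sigma_w^2/n_{\ell-1})^2\bigl(\tfrac32 n_\ell+\tfrac14 n_\ell(n_\ell-1)\bigr)$. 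Factoring out $(\sigma_w^2/2)^2(n_\ell/n_{\ell-1})^2$ leaves $1+5/n_\ell$, the correction being exactly $\mathbb{V}[Y]/(n_\ell\mathbb{E}[Y]^2)=(\tfrac54)/(\tfrac14 n_\ell)$.

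For the asymptotic normality, I would observe that $\mathcal{N}_x^\ell$ is an affine function of $\sum_{i=1}^{n_\ell}Y_i$, a sum of $n_\ell$ i.i.d.\ terms with finite variance, so the classical Lindeberg--Lévy central limit theorem applies; centering by $\mathbb{E}[\mathcal{N}_x^\ell]$ and scaling by $\sqrt{\mathbb{V}[\mathcal{N}_x^\ell]}$ preserves the Gaussian limit, giving convergence in distribution to $\mathcal{N}(0,1)$ as $n_\ell\to\infty$.

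The mutual independence is where a little care is needed, and I regard it as the main subtlety. I would record that the standardized pre-activation vector $z^{(\ell)}:=\tfrac{\sqrt{n_{\ell-1}}}{\sigma_w\|\x^{\ell-1}\|}\h^\ell\in\mathbb{R}^{n_\ell}$ has conditional law $\mathcal{N}(0,I_{n_\ell})$ given $\sigma(\W^1,\dots,\W^{\ell-1})$ that does not depend on the conditioning value; hence $z^{(\ell)}$ is independent of $\sigma(\W^1,\dots,\W^{\ell-1})$. Since each of $\mathcal{N}_x^1,\dots,\mathcal{N}_x^{\ell-1}$ is measurable with respect to $\W^1,\dots,\W^{\ell-1}$, whereas $\mathcal{N}_x^\ell$ is a Borel function of $z^{(\ell)}$ alone, we obtain $\mathcal{N}_x^\ell\perp(\mathcal{N}_x^1,\dots,\mathcal{N}_x^{\ell-1})$ for every $\ell$. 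This chained conditional-independence statement is precisely the criterion for the whole collection $\{\mathcal{N}_x^\ell\}_\ell$ to be mutually independent.
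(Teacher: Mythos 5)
Your proposal is correct and follows essentially the same route as the paper's proof: rotational invariance of the i.i.d.\ Gaussian rows of $\W^\ell$ plus positive homogeneity of ReLU reduce $\mathcal{N}_x^\ell$ to $\frac{\sigma_w^2}{n_{\ell-1}}\sum_{i=1}^{n_\ell}\phi^2(z_i)$ with $z_i$ i.i.d.\ standard normal, after which the moment identities ($\mathbb{E}[\phi^2(z)]=\tfrac12$, $\mathbb{V}[\phi^2(z)]=\tfrac54$) and the Lindeberg--L\'evy CLT give the claims. Your explicit chained conditional-independence argument for mutual independence is a slightly more careful rendering of the paper's remark that the standardized variables have a conditional law not depending on the earlier layers, but it is the same underlying idea.
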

\begin{proof}
The squared norm of the activation vector in layer $\ell$ is given by 
\begin{equation*}
\begin{split}
    \|\x^{\ell}\|^2= \sum_{i=1}^{n_{\ell}}\phi^2(\W_{i\cdot}^{\ell}\x^{\ell-1}+b_i^{\ell})
\end{split}
\end{equation*}
Here $\x^{\ell-1}$ depends only on $\{(\W^j,\bias^j)\}_{j=1,\dots \ell-1}$, therefore $\x^{\ell-1}$ is independent of $(\W^{\ell},\bias^{\ell})$. Since elements of $\W^\ell$ are i.i.d Gaussian, the distribution of $\W^\ell\x^{\ell-1}$ depends only on the norm of $\x^{\ell-1}$ and not on the direction. Then we can write the following equalities in distribution: 
\begin{equation*}
\begin{split}
    \W^{\ell}_{i\cdot}\x^{\ell-1} &{=} \sqrt{\dfrac{\sigma_w^2}{n_{\ell-1}}}\|\x^{\ell-1}\| \mathcal{U}^\ell_i,\\
    \phi^2\Bigl(\W^{\ell}_{i\cdot}\x^{\ell-1} + b_i^{\ell}\Bigr) &{=} \Bigl(\sqrt{\dfrac{\sigma_w^2}{n_{\ell-1}}}\|\x^{\ell-1}\| +\sigma_b\Bigr)^2\phi^2(\mathcal{U}^\ell_i),
\end{split}
\end{equation*}
where we introduced i.i.d random variables $\mathcal{U}^\ell_i {=} \Bigl\langle \sqrt{\frac{n_{\ell-1}}{\sigma_w^2}}(\W^{\ell}_{i\cdot})^T ,\frac{\x^{\ell-1}}{\|\x^{\ell-1}\|} \Bigr\rangle \sim\mathcal{N}(0,1)$, $i=1,\dots,n_{\ell}$, which are independent of $\x^{\ell-1}$, and used the fact that $\phi(\alpha x)= \alpha \phi(x)$ for $\alpha \in \mathbb{R}^{+}$. Therefore for the norm of the activation vector we have the following:
\begin{equation*}
\|\x^{\ell}\|^2= \sum_{i=1}^{n_{\ell}}\phi^2(\W_{i\cdot}^{\ell}\x^{\ell-1}+b_i^{\ell}) {=} \Bigl(\sqrt{\dfrac{\sigma_w^2}{n_{\ell-1}}}\|\x^{\ell-1}\| +\sigma_b\Bigr)^2 \sum_{i=1}^{n_{\ell}} \phi^2(\mathcal{U}^\ell_i),
\end{equation*}
where only the first bracket depends on $\x^{\ell-1}$. 
Then in case of zero biases, i.e. $\sigma_b=0$, for the ratio between the norms of consecutive activation vectors we  have 
\begin{equation*}
    \mathcal{N}_x^\ell {=} \dfrac{\sigma_w^2}{n_{\ell-1}} \sum_{i=1}^{n_{\ell}} \phi^2(\mathcal{U}^\ell_i),
\end{equation*}
where the variables $\mathcal{U}_i^\ell, i=1,\dots,n_\ell$ depend only on the weights in the given layer $\W^\ell$. Then the ratios $\mathcal{N}_x^\ell$ in different layers are independent and we can obtain the desired moments of $\mathcal{N}_x^\ell$ as follows: 
\begin{equation*}
\begin{split}
    \mathbb{E}[\mathcal{N}^{\ell}_x] &= \dfrac{\sigma_w^2}{n_{\ell-1}} \sum_{i=1}^{n_{\ell}} \mathbb{E}[\phi^2(\mathcal{U}^\ell_i)] = \dfrac{\sigma_w^2}{2}\dfrac{n_\ell}{n_{\ell-1}},\\
    \mathbb{E}[(\mathcal{N}^{\ell}_x)^2] &= \Bigl(\dfrac{\sigma_w^2}{n_{\ell-1}}\Bigr)^2 \sum_{i=1}^{n_{\ell}} \mathbb{V}[\phi^2(\mathcal{U}^\ell_i)] + \mathbb{E}^2[(\mathcal{N}_x^\ell)] = \Bigl(\dfrac{\sigma_w^2}{2}\Bigr)^2\Bigl(\dfrac{n_\ell}{n_{\ell-1}}\Bigr)^2\Bigl(1 + \dfrac{5}{n_\ell} \Bigr),
\end{split}
\end{equation*}
where we used the moments of variables $\phi(\mathcal{U}_i)$, which can be calculated by integration:
\begin{equation*}
    \mathbb{E}[\phi^2(\mathcal{U}^\ell_i)]=\dfrac{1}{2}, \quad \mathbb{V}[\phi^2(\mathcal{U}^\ell_i)]=\dfrac{5}{4}, \quad i=1,\dots, n_\ell.
\end{equation*}
Moreover, by the central limit theorem we have
\begin{equation*}
    \dfrac{\mathcal{N}^{\ell}_x - \mathbb{E}[\mathcal{N}^{\ell}_x]}{\sqrt{\mathbb{V}[\mathcal{N}^{\ell}_x]}} = \dfrac{2\Bigl(\dfrac{1}{n_\ell}\sum_{i=1}^{n_{\ell}} \phi^2(\mathcal{U}^\ell_i) - \dfrac{1}{2} \Bigr)}{\sqrt{{5}/{n_{\ell}}}} \xrightarrow[n_{\ell}\to\infty]{d} \mathcal{N}(0,1).
\end{equation*}
\end{proof}

\begin{lemma}[Backpropagation of variance]\label{lemma:d_ratio}{Consider the same setting as in Lemma \ref{lemma:x_ratio}. Then the following holds for the ratios of norms of backpropagated errors (defined in \eqref{eq:backprop}) in consecutive layers, denoted $\mathcal{N}^{\ell}_\delta := {\|\g^\ell\|^2}/{\|\g^{\ell+1}\|^2}$, $\ell=1,\dots,L-1$: 

\begin{equation}
    \mathbb{E}[\mathcal{N}^{\ell}_\delta] =  \dfrac{\sigma_w^2}{2}, \quad \mathbb{E}[(\mathcal{N}^{\ell}_x)^2] =  \Bigl(\dfrac{\sigma_w^2}{2}\Bigr)^2\Bigl(1 + \dfrac{5}{n_\ell} \Bigr).
\end{equation}

\begin{equation}
    \dfrac{\mathcal{N}^{\ell}_\delta - \mathbb{E}[\mathcal{N}^{\ell}_\delta]}{\sqrt{\mathbb{V}[\mathcal{N}^{\ell}_\delta]}} \xrightarrow[n_\ell\to\infty]{d} \mathcal{N}(0,1),
\end{equation}
where $\mathcal{N}(0,1)$ is the standard normal distribution.}
\end{lemma}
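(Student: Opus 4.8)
The plan is to mirror the proof of Lemma~\ref{lemma:x_ratio}, transposing the forward-propagation argument to the backward recursion in \eqref{eq:backprop}. Writing the ReLU derivative as $\phi'(\h_i^\ell)=\mathbbm{1}\{\h_i^\ell>0\}$ and using $\phi'(\cdot)^2=\phi'(\cdot)$, I would first record the pointwise identity $(\g_i^\ell)^2=\mathbbm{1}\{\h_i^\ell>0\}\,v_i^2$, where $v_i:=\langle\W_{\cdot i}^{\ell+1},\g^{\ell+1}\rangle$ is the $i$-th entry of $(\W^{\ell+1})^T\g^{\ell+1}$, so that $\|\g^\ell\|^2=\sum_{i=1}^{n_\ell}\mathbbm{1}\{\h_i^\ell>0\}\,v_i^2$. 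Two independence facts then drive the computation: the activation pattern $\{\mathbbm{1}\{\h_i^\ell>0\}\}_i$ depends only on $\W^1,\dots,\W^\ell$ (via $\h^\ell=\W^\ell\x^{\ell-1}$) and is therefore independent of the backward signal carried by layers $\ell+1,\dots,L$; and, since the biases vanish, conditioned on $\x^{\ell-1}$ the pre-activations $\h_i^\ell=\langle\W_{i\cdot}^\ell,\x^{\ell-1}\rangle$ are i.i.d.\ centered Gaussians, so the signs $\mathbbm{1}\{\h_i^\ell>0\}$ are i.i.d.\ $\mathrm{Bernoulli}(1/2)$ (the conditional law being free of $\x^{\ell-1}$).

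Next I would exploit the isotropy of $\W^{\ell+1}$ exactly as the direction-independence of $\W^\ell\x^{\ell-1}$ was used in Lemma~\ref{lemma:x_ratio}. Conditionally on $\g^{\ell+1}$, each $v_i$ is an inner product of an independent, isotropic Gaussian column $\W_{\cdot i}^{\ell+1}\sim\mathcal{N}(0,(\sigma_w^2/n_\ell)I)$ with the fixed vector $\g^{\ell+1}$, giving the representation $v_i\overset{d}{=}\sqrt{\sigma_w^2/n_\ell}\,\|\g^{\ell+1}\|\,\mathcal{V}_i$ with $\mathcal{V}_i\sim\mathcal{N}(0,1)$ i.i.d.\ across $i$ (distinct columns are independent). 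Substituting yields
\begin{equation*}
    \mathcal{N}_\delta^\ell=\frac{\|\g^\ell\|^2}{\|\g^{\ell+1}\|^2}\overset{d}{=}\frac{\sigma_w^2}{n_\ell}\sum_{i=1}^{n_\ell}\mathbbm{1}\{\h_i^\ell>0\}\,\mathcal{V}_i^2,
\end{equation*}
the exact backward analogue of the forward expression $\mathcal{N}_x^\ell=\tfrac{\sigma_w^2}{n_{\ell-1}}\sum_i\phi^2(\mathcal{U}_i^\ell)$. The key observation is that the summand $\mathbbm{1}\{\h_i^\ell>0\}\mathcal{V}_i^2$ has the same law as $\phi^2(\mathcal{U}_i)=\mathbbm{1}\{\mathcal{U}_i>0\}\mathcal{U}_i^2$ (a $\mathrm{Bernoulli}(1/2)$ mask times an independent $\chi^2_1$ variable), so I can reuse the moments $\mathbb{E}=\tfrac12$, $\mathbb{V}=\tfrac54$ computed there. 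Summing the i.i.d.\ contributions gives $\mathbb{E}[\mathcal{N}_\delta^\ell]=\sigma_w^2/2$ and $\mathbb{E}[(\mathcal{N}_\delta^\ell)^2]=(\sigma_w^2/2)^2(1+5/n_\ell)$, and the central limit statement follows from the classical CLT applied to the normalized sum, identically to the end of Lemma~\ref{lemma:x_ratio}.

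The main obstacle is the single step that is genuinely harder than in the forward case: unlike $\x^{\ell-1}$, which is exactly independent of $\W^\ell$, the backpropagated vector $\g^{\ell+1}$ is \emph{not} independent of $\W^{\ell+1}$, since $\g^{\ell+1}=\mathrm{diag}(\phi'(\h^{\ell+1}))(\W^{\ell+2})^T\g^{\ell+2}$ and the mask $\phi'(\h^{\ell+1})$ is determined by $\h^{\ell+1}=\W^{\ell+1}\x^\ell$. Hence the conditional representation $v_i\overset{d}{=}\sqrt{\sigma_w^2/n_\ell}\,\|\g^{\ell+1}\|\,\mathcal{V}_i$ is the gradient-independence approximation rather than an exact identity: the shared entry $\W_{ji}^{\ell+1}$ appears both explicitly in $v_i$ and inside the indicator for $\h_j^{\ell+1}$. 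I expect this coupling to be weak, since each such entry contributes only an $O(1/\sqrt{n_\ell})$ fraction of $\h_j^{\ell+1}$, so its effect on the first two moments is of lower order than the stated $5/n_\ell$ term and is absorbed into the $O(M^{-3/2})$ corrections tracked in the $\mathcal{X}_{(i,j)}$ factors. Making this precise is exactly the gradient-independence bookkeeping isolated in Lemma~\ref{lemma:gia}, which I would invoke (or replace by a direct bound on the covariance between the explicit weight and the mask) to certify that the clean moments above hold up to the advertised error.
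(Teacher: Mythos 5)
Your proposal is correct and takes essentially the same route as the paper's proof: the same identity $\|\g^\ell\|^2=\sum_{i}\phi'(\h_i^\ell)\bigl(\langle\W^{\ell+1}_{\cdot i},\g^{\ell+1}\rangle\bigr)^2$, the same reduction of each summand to a $\mathrm{Bernoulli}(1/2)$ mask times an independent $\chi^2_1$ variable (hence the per-summand moments $1/2$ and $5/4$, giving $\mathbb{E}[\mathcal{N}^{\ell}_\delta]=\sigma_w^2/2$ and $\mathbb{E}[(\mathcal{N}^{\ell}_\delta)^2]=(\sigma_w^2/2)^2(1+5/n_\ell)$), and the same appeal to the CLT. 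The obstacle you flag at the end is real but is in fact glossed over by the paper itself: its proof simply asserts that the $\mathcal{V}_i^{\ell+1}$ are i.i.d.\ standard normal and independent of $\g^{\ell+1}$, and all bookkeeping of the residual coupling between the columns of $\W^{\ell+1}$ and the masks inside $\g^{\ell+1}$ is deferred to Lemma~\ref{lemma:gia}, which is precisely the resolution you propose.
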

\begin{proof}
The recursive formula for the backpropagated errors is given by
\begin{equation*}
    \g_i^\ell = \phi'(\h_i^\ell)\sum_{j=1}^{n_{\ell+1}}\W^{\ell+1}_{ji}\g_j^{\ell+1} = \phi'(\h_i^\ell) (\W^{\ell+1})^T_{i\cdot}\g^{\ell+1}.
\end{equation*}
Then, in the same way as in Lemma \ref{lemma:x_ratio}, we have the following for the squared norm of $\g^\ell$:
\begin{equation*}
    \|\g^\ell\|^2 =  \sum_{i=1}^{n_\ell} (\phi'(\h_i^\ell))^2 \Bigl( (\W^{\ell+1})^T_{i\cdot}\g^{\ell+1} \Bigr)^2 {=} \dfrac{\sigma_w^2}{n_\ell}\|\g^{\ell+1}\|^2 \sum_{i=1}^{n_\ell} (\phi'(\h_i^\ell))^2 (\mathcal{V}^{\ell+1}_i)^2,
\end{equation*}
where we introduced i.i.d. random variables $\mathcal{V}_i^{\ell+1} {=} \Bigl\langle \sqrt{\frac{n_{\ell}}{\sigma_w^2}}\W^{\ell+1}_{\cdot i} ,\frac{\g^{\ell+1}}{\|\g^{\ell+1}\|}\Bigr\rangle \sim \mathcal{N}(0,1)$, $i=1,\dots,n_\ell$, which are independent of $\g^{\ell+1}$. One can also see that $\phi'(\h^\ell)$ can only depend on $\{(\W^j,\bias^j)\}_{j=1,\dots \ell}$, therefore it is independent of $\|\g\|^{\ell+1}$ and of $\mathcal{V}^{\ell+1}_i, i = 1,\dots,n_\ell$. Moreover, $\phi'(\h_i^\ell) = \phi'(\W^\ell_{i\cdot}\x^{\ell-1}) = \phi'(\mathcal{U}_i^\ell)$  for all $i = 1,\dots,n_\ell$, therefore $\phi'(\h^\ell)$ depends only on $\W^\ell$. Then we can write the following for the ratio of interest and its moments:
\begin{equation*}
\begin{split}
    \mathcal{N}^\ell_\delta &{=} \dfrac{\sigma_w^2}{n_\ell} \sum_{i=1}^{n_\ell} \phi'(\mathcal{U}_i^\ell) (\mathcal{V}^{\ell+1}_i)^2,\\
    \mathbb{E}[\mathcal{N}^\ell_\delta] = \dfrac{\sigma_w^2}{2},&\qquad \mathbb{E}[(\mathcal{N}^\ell_\delta)^2] = \Bigl(\dfrac{\sigma_w^2}{2}\Bigr)^2\Bigl(1 + \dfrac{5}{n_\ell} \Bigr),
\end{split}   
\end{equation*}
where we calculated the moments of the summands as
\begin{equation*}
    \mathbb{E}[(\phi'(\h_i^\ell))^2 \mathcal{V}_i^2] = \mathbb{E}[(\phi'(\h_i^\ell))^2]\mathbb{E}[\mathcal{V}_i^2] = \dfrac{1}{2},
    \quad \mathbb{V}[(\phi'(\h_i^\ell))^2 \mathcal{V}_i^2]= \mathbb{E}[(\phi'(\h_i^\ell))^4]\mathbb{E}[\mathcal{V}_i^4] - \dfrac{1}{4}= \dfrac{5}{4}.
\end{equation*}
Here we used that, in case of  ReLU activation, $\phi'(\h^\ell_i), i=1,\dots,n_\ell$ are Bernoulli variables with probability of 1 and 0 equal to $1/2$, since $\h^\ell$ is symmetric around zero. Therefore, $\mathbb{E}[(\phi'(\h_i^\ell))^2]=\mathbb{E}[(\phi'(\h_i^\ell))^4] = 1/2$.

Same as in Lemma \ref{lemma:x_ratio}, the limiting distribution of $\mathcal{N}^\ell_\delta$ is given by the central limit theorem.

\end{proof}

As we note in Section \ref{section:proof-ideas}, many papers that study the NTK adopt the following assumption:
\begin{assumption}[Gradient independence assumption (GIA)]\label{assumption:gia}
{Matrix $(\W^\ell)^T$ in backpropagation equations \eqref{eq:backprop} and matrix $\W^\ell$ in forward-propagation equations \eqref{eq:nn} are independent for all $\ell\in\{1,\dots,L\}$.}
\end{assumption}
This assumption is of course not true; however, the products $(\W^\ell)_{i\cdot}^T\x=\sum_{k=1}^{n_\ell} \W^\ell_{ki}\x_k$ and $\W^\ell_{j\cdot}\x=\sum_{k=1}^{n_{\ell-1}} \W^\ell_{jk}\x_k$ are only dependent through the single summand containing $\W^\ell_{ij}$. Thus, the correlations caused by this dependence are of order $O(1/M)$ and can be disregarded in the infinite-width limit. However, in our case of the infinite-depth-and-width limit terms of order $O(1/M)$ can have a non-trivial impact on the computations. Therefore, we calculate the effects of the dependence between the forward-propagated chain and the backpropagated chain in the following lemma.

\begin{lemma}[Gradient independence assumption (GIA)]\label{lemma:gia}{Consider the same setting as in Lemma \ref{lemma:x_ratio}. Then the following statements hold:
\begin{enumerate}
    \item GIA does not change the expectation of $\|\g^\ell\|^2/\|\g^{\ell+k+1}\|^2$:
    \begin{equation}
    \mathbb{E}\Bigl[\prod_{p=0}^k \mathcal{N}^{\ell+p}_\delta\Bigr] = \prod_{p=0}^k \mathbb{E}[\mathcal{N}^{\ell+p}_\delta]\\
\end{equation}
\item GIA changes the expectation of  $\|\g^\ell\|^2/\|\g^{\ell+k+1}\|^2 \cdot \|\x^{\ell+k}\|^2/\|\x^{\ell-1}\|^2$ by a term that has a non-trivial depth-and-width limit where $M\to\infty, L\to\infty, L/M\to\lambda\in\mathbb{R}$. In particular, we have:
\begin{equation}
    \mathbb{E}\Bigl[\prod_{p=0}^k \mathcal{N}^{\ell+p}_\delta\mathcal{N}^{\ell+p}_x\Bigr] = \prod_{p=0}^k \mathbb{E}[\mathcal{N}^{\ell+p}_\delta]\mathbb{E}[\mathcal{N}^{\ell+p}_x] \Bigl(1 + \dfrac{1}{n_{\ell+p}} + O\Bigl(\dfrac{1}{M^{3/2}}\Bigr) \Bigr),\\
\end{equation}
where $n_\ell = \alpha_\ell M, \alpha_\ell \in \mathbb{R}, \ell = 1,\dots, L-1$
\item 
GIA does not change the expectation of $(\|\g^\ell\|^2/\|\g^{\ell+k+1}\|^2)^2$ in the infinite-depth-and-width limit where $M\to\infty, L\to\infty, L/M\to\lambda\in\mathbb{R}$. In particular, we have:
\begin{equation}
    \mathbb{E}\Bigl[\prod_{p=0}^k (\mathcal{N}^{\ell+p}_\delta)^2\Bigr] = \prod_{p=0}^k \mathbb{E}[(\mathcal{N}^{\ell+p}_\delta)^2]\Bigl( 1 + O\Bigl(\dfrac{1}{M^{3/2}}\Bigr)\Bigr),\\
\end{equation}
where $n_\ell = \alpha_\ell M, \alpha_\ell \in \mathbb{R}, \ell = 1,\dots, L-1$:
\end{enumerate}
}
\end{lemma}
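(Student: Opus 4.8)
The plan is to work directly with the explicit representations of the ratios established in the proofs of Lemmas~\ref{lemma:x_ratio} and \ref{lemma:d_ratio}, namely
\begin{equation*}
\mathcal{N}_x^m = \frac{\sigma_w^2}{n_{m-1}}\sum_{i=1}^{n_m}\phi^2(\mathcal{U}_i^m), \qquad \mathcal{N}_\delta^m = \frac{\sigma_w^2}{n_m}\sum_{i=1}^{n_m}\phi'(\mathcal{U}_i^m)(\mathcal{V}_i^{m+1})^2,
\end{equation*}
where $\mathcal{U}_i^m$ is the row-$i$ projection of $\W^m$ onto $\x^{m-1}/\|\x^{m-1}\|$ and $\mathcal{V}_i^{m+1}$ the column-$i$ projection of $\W^{m+1}$ onto $\g^{m+1}/\|\g^{m+1}\|$. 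The only dependence between distinct factors of each product comes from a reused weight matrix: $\W^{m+1}$ enters $\mathcal{N}_\delta^{m}$ through its \emph{columns} (inside $\mathcal{V}^{m+1}$) and re-enters $\mathcal{N}_x^{m+1},\mathcal{N}_\delta^{m+1}$ through its \emph{rows} (inside $\mathcal{U}^{m+1}$). A given row and column overlap in a single entry, so the induced correlation between a row-projection $\mathcal{U}_j^{m+1}$ and a column-projection $\mathcal{V}_i^{m+1}$ is $\rho_{ij}=u_i v_j$ with $u,v$ unit vectors, hence $\rho_{ij}=O(1/M)$ entrywise. First I would reduce all three statements to evaluating, layer by layer, the joint moments of the forward quantities $\phi'(\mathcal{U}^{m+1}),\phi^2(\mathcal{U}^{m+1})$ against the backward quantity $(\mathcal{V}^{m+1})^2$ sharing the matrix $\W^{m+1}$.

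Everything rests on two elementary Gaussian identities for a standard normal pair $(\mathcal{U},\mathcal{V})$ with correlation $\rho$,
\begin{equation*}
\mathbb{E}[\phi'(\mathcal{U})\mathcal{V}^2]=\tfrac12, \qquad \mathbb{E}[\phi^2(\mathcal{U})\mathcal{V}^2]=\tfrac12+\rho^2,
\end{equation*}
together with the within-layer relation $\phi'(\mathcal{U}_i)\phi^2(\mathcal{U}_i)=\phi^2(\mathcal{U}_i)$. For Part~1 only the first identity is needed: since $\mathcal{V}^2$ is even and pairs with the indicator $\phi'$, the adjacent-factor coupling contributes exactly the factorized value $\tfrac12$ irrespective of $\rho$, so a layer-by-layer conditioning argument that peels off one $\mathcal{N}_\delta$ factor at a time yields $\mathbb{E}[\prod_p\mathcal{N}_\delta^{\ell+p}]=\prod_p\mathbb{E}[\mathcal{N}_\delta^{\ell+p}]$ with no correction. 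For Part~2 the relevant correction is within-layer: in $\mathbb{E}[\mathcal{N}_\delta^{\ell+p}\mathcal{N}_x^{\ell+p}]$ the diagonal $i=j$ terms use $\phi'\phi^2=\phi^2$ and contribute $\tfrac12$ rather than the off-diagonal $\tfrac14$, which after accounting for the normalizations produces exactly the factor $(1+1/n_{\ell+p})$; factors at non-adjacent layers share no matrix and are exactly independent, while the genuine cross-layer coupling (the $\rho^2$ term) sums to $\tfrac{\sigma_w^4}{n_{\ell+p}^2}\sum_{i,j}u_i^2v_j^2=O(1/M^2)$ relative, absorbed into the $O(M^{-3/2})$ remainder. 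Part~3 mirrors Part~1 at the second-moment level: squaring keeps the backward factors as even powers, which pair with indicators with vanishing leading correction (the $4$th-moment analogue $\mathbb{E}[\phi'(\mathcal{U})\mathcal{V}^4]=\tfrac32$ is again $\rho$-independent), so only higher-order residuals survive.

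To turn these per-layer computations into the stated product formulas I would telescope across layers, collecting one factor per reused matrix, and verify that each remainder is $O(M^{-3/2})$ uniformly in the layer index, matching the per-factor form in which Parts~2 and 3 are stated. The main obstacle is that the backward direction $\g^{m}/\|\g^{m}\|$ is itself a function of the layer-$m$ sign pattern $\phi'(\h^m)$, through $\g^m=\mathrm{diag}(\phi'(\h^m))(\W^{m+1})^T\g^{m+1}$, so $(\mathcal{U}^m,\mathcal{V}^m)$ are not exactly jointly Gaussian and the $\{\mathcal{V}_i^m\}_i$ are not exactly i.i.d.\ conditionally on that direction; the two clean identities above therefore hold only up to this entanglement. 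Handling it cleanly is where the real work lies: I would condition on the full activation (sign) pattern of the network, so that the forward and backward passes become linear in the surviving weights, and then use Gaussian integration by parts (Stein's identity) to evaluate the residual moments such as $\mathbb{E}[\W_{ji}\,\phi'(\h_j)]$, showing that the entanglement contributes nothing to the first moment (Part~1) and only $O(M^{-3/2})$ to the mixed product and the second moment (Parts~2 and 3). Establishing that these residual bounds are uniform in the layer index, so that the telescoping product remains controlled as $M,L\to\infty$ with $L/M\to\lambda$, is the final and most delicate step.
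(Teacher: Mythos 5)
Your overall route is the same as the paper's: you use the same explicit representations of $\mathcal{N}_x^m$ and $\mathcal{N}_\delta^m$ in terms of row projections $\mathcal{U}$ and column projections $\mathcal{V}$, exploit the fact that a row and a column of the shared weight matrix overlap in a single entry, obtain Part~1 from the correlation-insensitivity of indicator-type pairings, and obtain the $(1+1/n_{\ell+p})$ factor in Part~2 from the within-layer diagonal terms via $\phi'(\mathcal{U}_i)\,\mathcal{U}_i^2=\phi^2(\mathcal{U}_i)$. Up to packaging (you use correlation-parametrized Gaussian moment identities where the paper conditions on the shared entries and evaluates error-function integrals), this mirrors the paper's computation.

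There is, however, a genuine gap in your accounting of the off-diagonal corrections in Parts~2 and~3. Your plan rests on the bivariate identities $\mathbb{E}[\phi'(\mathcal{U})\mathcal{V}^2]=\tfrac12$ and $\mathbb{E}[\phi^2(\mathcal{U})\mathcal{V}^2]=\tfrac12+\rho^2$, but the per-layer moments that actually arise are tri- and quadrivariate: a single column projection $\mathcal{V}^{\ell+p}_{i_{p-1}}$ is simultaneously correlated with two distinct row projections $\mathcal{U}^{\ell+p}_{i_p}$ and $\mathcal{U}^{\ell+p}_{j_p}$, through the two entries $\W^{\ell+p}_{i_p i_{p-1}}$ and $\W^{\ell+p}_{j_p i_{p-1}}$. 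The resulting expectations contain terms bilinear in the two correlations, i.e.\ proportional to $b_{i_p}b_{j_p}a_{i_{p-1}}^2$ (odd in the components of the backward direction), which a pure $\rho^2$ bookkeeping does not see at all. These, not the $\rho^2$ terms, are the dominant corrections: after summation over indices they can only be controlled via H\"older's inequality ($|\sum_j b_j|\le\sqrt{n}$, as the paper does), and they enter at order $O(M^{-3/2})$ per layer, not the $O(1/M^2)$ you assign to the cross coupling. Since the lemma's remainder is exactly $O(M^{-3/2})$, these terms are of the same order as the whole error budget and cannot be absorbed without being identified and bounded; executing your plan with only the two bivariate identities would simply fail to produce them. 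Separately, your final paragraph raises a legitimate point: the backward direction $\g^{\ell+p}/\|\g^{\ell+p}\|$ is entangled with the sign pattern of the same layer, so the projections are not exactly jointly Gaussian. The paper's own proof silently treats the backward direction as independent of the relevant weight entries, so your proposal to close this hole by conditioning on the sign pattern and using Stein's identity is an additional (if unexecuted) contribution rather than a defect.
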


\begin{proof}
In Lemmas \ref{lemma:x_ratio} and \ref{lemma:d_ratio} we derived the following equations for $\mathcal{N}^\ell_\delta$ and $\mathcal{N}^\ell_x$: 
\begin{equation*}
\begin{split}
    \mathcal{N}^\ell_\delta &{=} \dfrac{\sigma_w^2}{n_\ell} \sum_{i=1}^{n_\ell} (\phi'(\h_i^\ell))^2 (\mathcal{V}_i^{\ell+1})^2 {=} \dfrac{\sigma_w^2}{n_\ell} \sum_{i=1}^{n_\ell} (\phi'(\mathcal{U}_i^\ell))^2 (\mathcal{V}^{\ell+1}_i)^2,\\
    \mathcal{N}_x^\ell &{=} \dfrac{\sigma_w^2}{n_{\ell-1}} \sum_{i=1}^{n_{\ell}} \phi^2(\mathcal{U}^\ell_i),
\end{split}
\end{equation*}
where $\mathcal{U}_i^\ell$ depends only on the $i$-th row of the weights matrix $\W_{i\cdot}^\ell$ and $\mathcal{V}_j^\ell$ depends only on $j$-th column of the same matrix $\W_{\cdot j}^\ell$ for $i=1,\dots,n_\ell, j=1,\dots,n_{\ell-1}, \ell = 1,\dots,L-1$. Therefore, variables $\mathcal{U}_i^\ell$ and $\mathcal{V}_j^\ell$ are only dependent through the single weight $\W_{ij}^\ell$, which nevertheless makes $\mathcal{N}_\delta^\ell$ and $\mathcal{N}_\delta^{\ell+1}$ dependent for any $\ell=1,\dots,L-2$. One can also see that $\mathcal{N}_\delta^\ell$ and $\mathcal{N}_x^\ell$ are dependent through $\{\mathcal{U}_i^\ell\}_{i=1,\dots,n_\ell}$. The objective of this lemma is to determine the effects of these weak dependencies on the expectation of products that appear in the NTK. 

\textbf{Part 1.}
We first consider the product of ratios of the backpropagated errors:
    \begin{equation*}
    \begin{split}
        \mathbb{E}\Bigl[\prod_{p=0}^k \mathcal{N}^{\ell+p}_\delta\Bigr] &= \prod_{p=0}^k \dfrac{\sigma_w^2}{n_{\ell+p}} \sum_{i_0=1}^{n_\ell}\dots\sum_{i_k=1}^{n_{\ell+k}}\mathbb{E}[ \phi'(\mathcal{U}_{i_0}^\ell) (\mathcal{V}^{\ell+1}_{i_0})^2\phi'(\mathcal{U}_{i_1}^{\ell+1}) (\mathcal{V}^{\ell+2}_{i_1})^2\cdot\dots\cdot\phi'(\mathcal{U}_{i_k}^{\ell+k}) (\mathcal{V}^{\ell+k+1}_{i_k})^2]\\
        &= \prod_{p=0}^k \dfrac{\sigma_w^2}{n_{\ell+p}}\sum_{i_0=1}^{n_\ell}\dots\sum_{i_k=1}^{n_{\ell+k}}\mathbb{E}[\phi'(\mathcal{U}_{i_0}^\ell)] \mathbb{E}[(\mathcal{V}^{\ell+k+1}_{i_k})^2] \prod_{p=1}^k \mathbb{E}[(\mathcal{V}^{\ell+p}_{i_{p-1}})^2\phi'(\mathcal{U}_{i_p}^{\ell+p})] \\
        &= \dfrac{1}{2}\prod_{p=0}^k \dfrac{\sigma_w^2}{n_{\ell+p}}\sum_{i_0=1}^{n_\ell}\dots\sum_{i_k=1}^{n_{\ell+k}}\prod_{p=1}^k \mathbb{E}[(\mathcal{V}^{\ell+p}_{i_{p-1}})^2\phi'(\mathcal{U}_{i_p}^{\ell+p})].
    \end{split}
    \end{equation*}
    As $\mathcal{U}_{i_p}^{\ell+p}$ that $\mathcal{V}_{i_{p-1}}^{\ell+p}$ depend only through $\W_{i_p i_{p-1}}^{\ell+p}$, we can condition the expectation of their product as follows:
    \begin{equation*}
        \mathbb{E}[(\mathcal{V}^{\ell+p}_{i_{p-1}})^2\phi'(\mathcal{U}_{i_p}^{\ell+p})] = \mathbb{E}\bigl[\mathbb{E}[(\mathcal{V}^{\ell+p}_{i_{p-1}})^2 \mid \W_{i_p i_{p-1}}^{\ell+p}]\cdot \mathbb{E}[\phi'(\mathcal{U}_{i_p}^{\ell+p}) \mid \W_{i_p i_{p-1}}^{\ell+p}]\bigr].
    \end{equation*}
    To simplify the notation, let us denote $w_{i_p i_{p-1}} := \sqrt{\frac{n_{\ell+p-1}}{\sigma_w^2}}\W_{i_p i_{p-1}}^{\ell+p} \sim \mathcal{N}(0,1)$,  $a_j := \x^{\ell+p}_j/\|\x^{\ell+p}\|$ and $b_k ~:=~\g^{\ell+p}_k/\|\g^{\ell+p}\|$. Then we have $\mathcal{V}^{\ell+p}_{i_{p-1}} = \sum_{k=1}^{n_{\ell+p}} w_{k i_{p-1}} b_k = w_{i_p i_{p-1}} b_{i_p} + \sum_{k\neq i_p}w_{k i_{p-1}} b_k$ and \newline $\mathcal{U}^{\ell+p}_{i_{p}} = \sum_{j=1}^{n_{\ell+p-1}} w_{j i_{p-1}} a_j = w_{i_p i_{p-1}} a_{i_{p-1}} + \sum_{j\neq i_{p-1}}w_{i_{p}j} a_j$. We can then open the conditional expectations:
    \begin{equation*}
        \begin{split}
            \mathbb{E}[(\mathcal{V}^{\ell+p}_{i_{p-1}})^2 \mid \W_{i_p i_{p-1}}^{\ell+p}] &= w^2_{i_p i_{p-1}}b_{i_p}^2 + \mathbb{E}[(\sum_{k\neq i_p}w_{k i_{p-1}} b_k)^2] = 1 - b_{i_p}^2(1 - w^2_{i_p i_{p-1}}),\\ 
            \mathbb{E}[\phi'(\mathcal{U}_{i_p}^{\ell+p}) \mid \W_{i_p i_{p-1}}^{\ell+p}] &= \mathbb{P}[ \sum_{j\neq i_{p-1}}w_{i_{p}j} b_k > - w_{i_p i_{p-1}} a_{i_{p-1}}] = \Phi\Bigl(\dfrac{w_{i_p i_{p-1}} a_{i_{p-1}}}{\sqrt{1 - a^2_{i_{p-1}}}}\Bigr),
        \end{split}
    \end{equation*}
    where $\Phi(\cdot)$ is the CDF of the standard normal distribution. Here we used that $\sum_{k\neq i_p}w_{k i_{p-1}} b_k \sim \mathcal{N}(0, 1-b_{i_p}^2)$ and $\sum_{j\neq i_{p-1}}w_{i_{p}j} a_j \sim \mathcal{N}(0,1-a_{i_{p-1}}^2)$. Then we have:
    
\begin{equation*}
  \begin{multlined}
      \mathbb{E}[(\mathcal{V}^{\ell+p}_{i_{p-1}})^2\phi'(\mathcal{U}_{i_p}^{\ell+p})] =  (1 - b_{i_p}^2)\mathbb{E}[\Phi(A \cdot w_{i_p i_{p-1}})] + b_{i_p}^2\mathbb{E}[w^2_{i_p i_{p-1}}\Phi(A \cdot w_{i_p i_{p-1}})]
       = \dfrac{1}{2},
  \end{multlined}
  \end{equation*}
  where we used the following integrals:
 \begin{equation*} 
  \begin{split}
      \mathbb{E}[\Phi(A \cdot w_{i_p i_{p-1}})] &= \dfrac{1}{2} + \dfrac{1}{2}\int_{-\infty}^{\infty} \dfrac{1}{\sqrt{2\pi}}\erf\Bigl(\frac{A}{\sqrt 2} w_{i_p i_{p-1}}\Bigr) \exp \Bigl(-\frac{w_{i_p i_{p-1}}^2}{2}\Bigr) dw_{i_p i_{p-1}} = \dfrac{1}{2},\\
   &\begin{multlined}
   \mathllap{\mathbb{E}[w_{i_p i_{p-1}}^2\Phi(A \cdot w_{i_p i_{p-1}})]} = \dfrac{1}{2}\mathbb{E}[w_{i_p i_{p-1}}^2]\\ + \dfrac{1}{2}\int_{-\infty}^{\infty} \dfrac{1}{\sqrt{2\pi}}w_{i_p i_{p-1}}^2\erf\Bigl(\frac{A}{\sqrt 2} w_{i_p i_{p-1}}\Bigr) \exp \Bigl(-\frac{w_{i_p i_{p-1}}^2}{2}\Bigr) dw_{i_p i_{p-1}} = \dfrac{1}{2}.
  \end{multlined}
    \end{split}
\end{equation*}
Thus, the expectation of the product of the ratios of backpropagated errors is exactly equal to the product of their expectations:
\begin{equation*}
\begin{multlined}
    \mathbb{E}\Bigl[\prod_{p=0}^k \mathcal{N}^{\ell+p}_\delta\Bigr] =\dfrac{1}{2}\prod_{p=0}^k \dfrac{\sigma_w^2}{n_{\ell+p}}\sum_{i_0=1}^{n_\ell}\dots\sum_{i_k=1}^{n_{\ell+k}} \dfrac{1}{2^k} = \Bigl( \dfrac{\sigma_w^2}{2}\Bigr)^{k+1} = \prod_{p=0}^k \mathbb{E}[\mathcal{N}_\delta^{\ell+p}],
\end{multlined}
\end{equation*}
which completes the proof of the first statement.

\textbf{Part 2.} We now consider the expectation of products of the activations' ratios and the backpropagated errors' ratios for the same layers. The product in a single layer is given by:

\begin{equation*}
\begin{split}
     \mathcal{N}^\ell_\delta\mathcal{N}_x^\ell &= \dfrac{\sigma_w^2}{n_{\ell-1}}\dfrac{\sigma_w^2}{n_{\ell}} \sum_{i=1}^{n_{\ell}}\sum_{j=1}^{n_{\ell}} \phi^2(\mathcal{U}^\ell_j)(\phi'(\mathcal{U}_i^\ell))^2 (\mathcal{V}^{\ell+1}_i)^2 = \dfrac{\sigma_w^2}{n_{\ell-1}}\dfrac{\sigma_w^2}{n_{\ell}} \sum_{i=1}^{n_{\ell}}\sum_{j=1}^{n_{\ell}} \phi'(\mathcal{U}^\ell_i)\phi'(\mathcal{U}_j^\ell) (\mathcal{U}^\ell_j)^2(\mathcal{V}^{\ell+1}_i)^2,
\end{split}
\end{equation*}
where we noticed that $\phi(\mathcal{U}_i^\ell)\phi'(\mathcal{U}_i^\ell) = \phi(\mathcal{U}_i^\ell) = \mathcal{U}_i^\ell\phi'(\mathcal{U}_i^\ell)$. Then for the product involving multiple layers we have:
\begin{equation*}
\begin{split}
    \qquad\qquad\qquad\qquad&\begin{multlined}
    \mathllap{\prod_{p=0}^k \mathcal{N}^{\ell+p}_\delta\mathcal{N}^{\ell+p}_x} = \dfrac{\sigma_w^2}{n_{\ell-1}}\dfrac{\sigma_w^2}{n_{\ell+k}}\prod_{j=0}^{k-1}\Bigl(\dfrac{\sigma_w^2}{n_{\ell+j}}\Bigr)^2 \sum_{i_0=1}^{n_{\ell}}\sum_{j_0=1}^{n_{\ell}} \dots \\
    \dots \sum_{i_k=1}^{n_{\ell+k}}\sum_{j_k=1}^{n_{\ell+k}} \phi'(\mathcal{U}^\ell_{i_0})\phi'(\mathcal{U}_{j_0}^\ell) (\mathcal{U}^\ell_{j_0})^2(\mathcal{V}^{\ell+1}_{i_0})^2\dots \phi'(\mathcal{U}^{\ell+k}_{i_k})\phi'(\mathcal{U}_{j_k}^{\ell+k}) (\mathcal{U}^{\ell+k}_{j_k})^2(\mathcal{V}^{\ell+k+1}_{i_k})^2,
    \end{multlined}
\end{split}
\end{equation*}
And the expectation can be decomposed into products as follows:
\begin{equation*}
\begin{split}
    \qquad\qquad\qquad\qquad&\begin{multlined}
      \mathllap{\mathbb{E}\Bigl[\prod_{p=0}^k \mathcal{N}^{\ell+p}_\delta\mathcal{N}^{\ell+p}_x\Bigr]} =
     \dfrac{\sigma_w^2}{n_{\ell-1}}\dfrac{\sigma_w^2}{n_{\ell+k}}\prod_{j=0}^{k-1}\Bigl(\dfrac{\sigma_w^2}{n_{\ell+j}}\Bigr)^2 \sum_{i_0=1}^{n_{\ell}}\sum_{j_0=1}^{n_{\ell}}\dots \\
     \dots \sum_{i_k=1}^{n_{\ell+k}}\sum_{j_k=1}^{n_{\ell+k}}\mathbb{E}[\phi'(\mathcal{U}^\ell_{i_0})\phi'(\mathcal{U}_{j_0}^\ell) (\mathcal{U}^\ell_{j_0})^2] \mathbb{E}[(\mathcal{V}^{\ell+k+1}_{i_k})^2]
     \cdot \prod_{p=1}^k \mathbb{E}[(\mathcal{V}^{\ell+p}_{i_{p-1}})^2\phi'(\mathcal{U}^{\ell+p}_{i_p})\phi'(\mathcal{U}_{j_p}^{\ell+p}) (\mathcal{U}^{\ell+p}_{j_p})^2]
     \end{multlined}\\
     \qquad\qquad\qquad\qquad&\begin{multlined} =  \dfrac{n_\ell}{4}\Bigl(1+\frac{1}{n_\ell}\Bigr) \dfrac{\sigma_w^2}{n_{\ell-1}}\dfrac{\sigma_w^2}{n_{\ell+k}}\prod_{j=0}^{k-1}\Bigl(\dfrac{\sigma_w^2}{n_{\ell+j}}\Bigr)^2\sum_{i_0=1}^{n_{\ell}}\dots\sum_{i_k=1}^{n_{\ell+k}}\sum_{j_k=1}^{n_{\ell+k}}
    \prod_{p=1}^k \mathbb{E}[(\mathcal{V}^{\ell+p}_{i_{p-1}})^2\phi'(\mathcal{U}^{\ell+p}_{i_p})\phi'(\mathcal{U}_{j_p}^{\ell+p}) (\mathcal{U}^{\ell+p}_{j_p})^2],
    \end{multlined}
\end{split}
\end{equation*}
where we used that $\mathbb{E}[(\mathcal{V}_{i_k}^{l+k+1})^2]=1$ and  $\sum_{j_0=1}^{n_\ell}\mathbb{E}[\phi'(\mathcal{U}^\ell_{i_0})\phi'(\mathcal{U}_{j_0}^\ell) (\mathcal{U}^\ell_{j_0})^2] = \frac{1}{4}({n_\ell}-1) + \frac{1}{2} = \frac{n_\ell}{4}\Bigl(1+\frac{1}{n_\ell}\Bigr)$. If $j_p\neq i_p$, we also know that the terms in $\mathbb{E}[(\mathcal{V}^{\ell+p}_{i_{p-1}})^2\phi'(\mathcal{U}^{\ell+p}_{i_p})\phi'(\mathcal{U}_{j_p}^{\ell+p}) (\mathcal{U}^{\ell+p}_{j_p})^2]$ are only dependent through $\{\W^{\ell+p}_{i_pi_{p-1}}, \W^{\ell+p}_{j_{p},i_{p-1}}\}$. Same as in Part 1, we can condition the product on these weights: 
\begin{equation*}
\begin{multlined}
    \mathbb{E}[(\mathcal{V}^{\ell+p}_{i_{p-1}})^2\phi'(\mathcal{U}^{\ell+p}_{i_p})\phi'(\mathcal{U}_{j_p}^{\ell+p}) (\mathcal{U}^{\ell+p}_{j_p})^2] = \mathbb{E}\bigl[ \mathbb{E}[(\mathcal{V}^{\ell+p}_{i_{p-1}})^2 \mid w_{i_pi_{p-1}}, w_{j_{p}i_{p-1}}]\cdot \mathbb{E}[\phi'(\mathcal{U}^{\ell+p}_{i_p})\mid w_{i_pi_{p-1}}] \cdot\\
    \cdot \mathbb{E}[\phi'(\mathcal{U}_{j_p}^{\ell+p}) \mid w_{j_{p}i_{p-1}}]
    \cdot \mathbb{E}[(\mathcal{U}^{\ell+p}_{j_p})^2 \mid w_{j_{p}i_{p-1}}]
    \bigr].
\end{multlined}
\end{equation*}
And we can again write the conditional expectations in case $j_p\neq i_p$ as follows:
\begin{equation*}
    \begin{split}
        \mathbb{E}[(\mathcal{V}^{\ell+p}_{i_{p-1}})^2 \mid w_{i_pi_{p-1}}, w_{j_{p}i_{p-1}}] &= 1 - b_{i_p}^2(1-w_{i_pi_{p-1}}^2) - b_{j_p}^2(1-w_{j_pi_{p-1}}^2) + 2b_{i_p}b_{j_p}w_{i_pi_{p-1}}w_{j_pi_{p-1}},\\
        \mathbb{E}[\phi'(\mathcal{U}^{\ell+p}_{i_p})\mid w_{i_pi_{p-1}}] &= \Phi\Bigl(\dfrac{w_{i_p i_{p-1}} a_{i_{p-1}}}{\sqrt{1 - a^2_{i_{p-1}}}}\Bigr),\\
        \mathbb{E}[\phi'(\mathcal{U}^{\ell+p}_{j_p})\mid w_{j_pi_{p-1}}] &= \Phi\Bigl(\dfrac{w_{j_p i_{p-1}} a_{i_{p-1}}}{\sqrt{1 - a^2_{i_{p-1}}}}\Bigr),\\
        \mathbb{E}[(\mathcal{U}^{\ell+p}_{j_p})^2 \mid w_{j_{p}i_{p-1}}] &= 1 - a_{i_{p-1}}^2(1 - w_{j_{p}i_{p-1}}^2).
    \end{split}
\end{equation*}
To calculate the expectation of the product here we will need to use that $\mathbb{E}[\Phi(A \cdot w)] = \mathbb{E}[w^2\Phi(A \cdot w)] =\frac{1}{2}$, which we already computed in Part 1. One can also easily see that $\mathbb{E}[w^4\Phi(A \cdot w)] = \frac{3}{2}$. The other expectations involved in the product can be calculated as follows:

\begin{equation*}
    \begin{split}
        \mathbb{E}[w\Phi(A \cdot w)] &= \dfrac{1}{2\sqrt{2\pi}}\int_{-\infty}^{\infty} w \erf\Bigl(\frac{A}{\sqrt 2} w\Bigr) \exp \Bigl(-\frac{w^2}{2}\Bigr) dw =\dfrac{1}{\sqrt{2\pi}}\dfrac{A}{\sqrt{A^2+1}},\\
        \mathbb{E}[w^3\Phi(A \cdot w)] &= \dfrac{1}{2\sqrt{2\pi}}\int_{-\infty}^{\infty} w^3 \erf\Bigl(\frac{A}{\sqrt 2} w\Bigr) \exp \Bigl(-\frac{w^2}{2}\Bigr) dw = \dfrac{1}{\sqrt{2\pi}}\dfrac{A}{\sqrt{A^2+1}}\Bigl(2 + \dfrac{1}{A^2+1} \Bigr).
    \end{split}
\end{equation*}
Expressions for the integrals above can be found e.g. in \cite{korotkov2020integrals}. Using all the above expressions, we can obtain the following expression for the considered expectation in case $j_p\neq i_p$:
\begin{equation*}
\begin{split}
     \mathbb{E}[(\mathcal{V}^{\ell+p}_{i_{p-1}})^2\phi'(\mathcal{U}^{\ell+p}_{i_p})\phi'(\mathcal{U}_{j_p}^{\ell+p}) (\mathcal{U}^{\ell+p}_{j_p})^2] &= \dfrac{1}{4}(1-b_{i_p}^2-b_{j_p}^2)(1-a_{i_{p-1}}^2) + \dfrac{1}{4}(b_{i_p}^2+b_{j_p}^2)(1-a_{i_{p-1}}^2)\\ 
     &+ \dfrac{1}{4}(1-b_{i_p}^2-b_{j_p}^2)a_{i_{p-1}}^2
     + \dfrac{1}{4}b^2_{i_p}a_{i_{p-1}}^2 + \dfrac{3}{4}b_{j_p}^2a_{i_{p-1}}^2 
     \\&+ 2b_{i_p}b_{j_p}(1-a_{i_{p-1}}^2) \dfrac{4A^2}{A^2+1} + 2b_{i_p}b_{j_p}a_{i_{p-1}}^2 \dfrac{4A^2}{A^2+1}\Bigl(2 + \dfrac{1}{A^2+1} \Bigr) \\
     &= \dfrac{1}{4} +\dfrac{1}{2}b_{j_p}^2a_{i_{p-1}}^2 + 8b_{i_p}b_{j_p}a_{i_{p-1}}^2(1+2a_{i_{p-1}}^2-a_{i_{p-1}}^4)
\end{split}
\end{equation*}
On the other hand, if $j_p = i_p$ we have $\phi'(\mathcal{U}_{i_p}^{l+p})\phi'(\mathcal{U}_{j_p}^{l+p})(\mathcal{U}_{j_p}^{l+p})^2 = \phi'(\mathcal{U}_{i_p}^{l+p}))(\mathcal{U}_{i_p}^{l+p})^2$ and therefore the expectation is given by:
\begin{equation*}
    \mathbb{E}[(\mathcal{V}^{\ell+p}_{i_{p-1}})^2\phi'(\mathcal{U}^{\ell+p}_{i_p}) (\mathcal{U}^{\ell+p}_{i_p})^2] = \dfrac{1}{2} + a_{i_{p-1}}^2b^2_{i_p}.
\end{equation*}
We now notice that index $j_p$ appears only in one expectation term in the product for each $p$. Therefore, we can sum over $j_p$ independently for all $p$:
\begin{equation*}
\begin{multlined}
    \sum_{j_p=1}^{n_{\ell+p}}\mathbb{E}[(\mathcal{V}^{\ell+p}_{i_{p-1}})^2\phi'(\mathcal{U}^{\ell+p}_{i_p})\phi'(\mathcal{U}_{j_p}^{\ell+p}) (\mathcal{U}^{\ell+p}_{j_p})^2] = \dfrac{n_{\ell+p}}{4}(1 + \dfrac{1}{n_{\ell+p}}) + \dfrac{1}{2}a_{i_{p-1}}^2 + \dfrac{1}{2}a_{i_{p-1}}^2 b_{i_p}^2\\ + \Bigl(\sum_{j_p=1}^{n_{\ell+p}}b_{j_p}\Bigr) 8b_{i_p}a_{i_{p-1}}^2(1+2a_{i_{p-1}}^2-a_{i_{p-1}}^4).
\end{multlined}
\end{equation*}
On the other hand, we need to sum over $i_{p-1}$ values sequentially over different values of $p$. First, we can calculate the following sum:
\begin{equation*}
\begin{multlined}
    \sum_{i_{p-1}=1}^{n_{\ell+p-1}}\sum_{j_p=1}^{n_{\ell+p}}\mathbb{E}[(\mathcal{V}^{\ell+p}_{i_{p-1}})^2\phi'(\mathcal{U}^{\ell+p}_{i_p})\phi'(\mathcal{U}_{j_p}^{\ell+p}) (\mathcal{U}^{\ell+p}_{j_p})^2] = \dfrac{n_{\ell+p-1}n_{\ell+p}}{4}(1 + \dfrac{1}{n_{\ell+p}}) + \dfrac{1}{2} + \dfrac{1}{2} b_{i_p}^2\\ + \Bigl(\sum_{j_p=1}^{n_{\ell+p}}b_{j_p}\Bigr) 8b_{i_p}(1+2\sum_{i_{p-1}=1}^{n_{\ell+p-1}}a_{i_{p-1}}^4-\sum_{i_{p-1}=1}^{n_{\ell+p-1}}a_{i_{p-1}}^6).
\end{multlined}
\end{equation*}
Then we can obtain the following bounds for the sum of $b_{j_p}$ and $a_{i_{p-1}}$ given by Hölder's inequality:
\begin{equation*}
\begin{split}
    \Bigl| \sum_{j_p=1}^{n_{\ell+p}}b_{j_p} \Bigr| &\leq \sum_{j_p=1}^{n_{\ell+p}} |b_{j_p}| = \|\mathbf{b}\|_1 \leq \sqrt{n_{\ell+p}}\|\mathbf{b}^2\| = \sqrt{n_{\ell+p}},\\
     0 &\leq 1+2\sum_{i_{p-1}=1}^{n_{\ell+p-1}}a_{i_{p-1}}^4  -\sum_{i_{p-1}=1}^{n_{\ell+p-1}}a_{i_{p-1}}^6 \leq 3,\\
     | b_{i_p} | &\leq 1, \quad b_{i_p}^2 \leq 1.
\end{split}
\end{equation*}
Therefore, we can rewrite the previous sum as
\begin{equation*}
     \sum_{i_{p-1}=1}^{n_{\ell+p-1}}\sum_{j_p=1}^{n_{\ell+p}}\mathbb{E}[(\mathcal{V}^{\ell+p}_{i_{p-1}})^2\phi'(\mathcal{U}^{\ell+p}_{i_p})\phi'(\mathcal{U}_{j_p}^{\ell+p}) (\mathcal{U}^{\ell+p}_{j_p})^2] = \dfrac{n_{\ell+p-1}n_{\ell+p}}{4}\Bigl(1 + \dfrac{1}{n_{\ell+p}} + O\bigl(\dfrac{1}{M^{3/2}}\bigr)\Bigr)
\end{equation*}
Finally, for the expectation of the whole product we have:
\begin{equation*}
\begin{split}
    \mathbb{E}\Bigl[\prod_{p=0}^k \mathcal{N}^{\ell+p}_\delta\mathcal{N}^{\ell+p}_x\Bigr] &=
     \dfrac{\sigma_w^2}{n_{\ell-1}}\dfrac{\sigma_w^2}{n_{\ell+k}}\prod_{j=0}^{k-1}\Bigl(\dfrac{\sigma_w^2}{n_{\ell+j}}\Bigr)^2 \dfrac{n_\ell n_{\ell+k}}{4}\Bigl(1+\frac{1}{n_\ell}\Bigr) \prod_{p=1}^{k}\dfrac{n_{\ell+p-1}n_{\ell+p}}{4}\Bigl(1 + \dfrac{1}{n_{\ell+p}} + O\bigl(\dfrac{1}{M^{3/2}}\bigr)\Bigr)\\
     &= \Bigl(\dfrac{\sigma_w^2}{2}\Bigr)^{2(k+1)} \dfrac{n_{\ell+k}}{n_{\ell-1}}\prod_{p=0}^k \Bigl(1 + \dfrac{1}{n_{\ell+p}} + O\bigl(\dfrac{1}{M^{3/2}}\bigr) \Bigr)\\
     &= \prod_{p=0}^k \mathbb{E}[\mathcal{N}^{\ell+p}_\delta]\mathbb{E}[\mathcal{N}^{\ell+p}_x] \Bigl(1 + \dfrac{1}{n_{\ell+p}} + O\bigl(\dfrac{1}{M^{3/2}}\bigr) \Bigr)
\end{split}
\end{equation*}

\textbf{Part 3.} Finally, we consider the expectation of a product of squared ratios of the backpropagated errors. In a single layer we have:
\begin{equation*}
    (\mathcal{N}^\ell_\delta)^2 = \Bigl(\dfrac{\sigma_w^2}{n_\ell}\Bigr)^2 \sum_{i=1}^{n_\ell}\sum_{j=1}^{n_\ell} \phi'(\mathcal{U}_i^\ell)\phi'(\mathcal{U}_j^\ell) (\mathcal{V}^{\ell+1}_i)^2(\mathcal{V}^{\ell+1}_j)^2.
\end{equation*}
And for the product in multiple layers we have:
\begin{equation*}
\begin{split}
    \qquad\qquad\qquad\qquad&\begin{multlined}
    \mathllap{\prod_{p=0}^k (\mathcal{N}^{\ell+p}_\delta)^2} = \prod_{p=0}^{k}\Bigl(\dfrac{\sigma_w^2}{n_{\ell+j}}\Bigr)^2 \sum_{i_0=1}^{n_{\ell}}\sum_{j_0=1}^{n_{\ell}} \dots \\
    \dots \sum_{i_k=1}^{n_{\ell+k}}\sum_{j_k=1}^{n_{\ell+k}} \phi'(\mathcal{U}^\ell_{i_0})\phi'(\mathcal{U}_{j_0}^{\ell}) (\mathcal{V}^{\ell+1}_{i_0})^2(\mathcal{V}^{\ell+1}_{j_0})^2\dots \phi'(\mathcal{U}^{\ell+k}_{i_k})\phi'(\mathcal{U}_{j_k}^{\ell+k}) (\mathcal{V}^{\ell+k+1}_{j_k})^2(\mathcal{V}^{\ell+k+1}_{i_k})^2,
    \end{multlined}\\
    \qquad\qquad\qquad\qquad&\begin{multlined}
      \mathllap{\mathbb{E}\Bigl[\prod_{p=0}^k (\mathcal{N}^{\ell+p}_\delta)^2]} =
     \prod_{p=0}^{k}\Bigl(\dfrac{\sigma_w^2}{n_{\ell+p}}\Bigr)^2 \sum_{i_0=1}^{n_{\ell}}\sum_{j_0=1}^{n_{\ell}}\dots \\
     \dots \sum_{i_k=1}^{n_{\ell+k}}\sum_{j_k=1}^{n_{\ell+k}}\mathbb{E}[\phi'(\mathcal{U}^\ell_{i_0})\phi'(\mathcal{U}_{j_0}^\ell)] \mathbb{E}[(\mathcal{V}^{\ell+k+1}_{i_k})^2(\mathcal{V}^{\ell+k+1}_{j_k})^2]
     \cdot \prod_{p=1}^k \mathbb{E}[(\mathcal{V}^{\ell+p}_{i_{p-1}})^2(\mathcal{V}^{\ell+p}_{j_{p-1}})^2\phi'(\mathcal{U}^{\ell+p}_{i_p})\phi'(\mathcal{U}_{j_p}^{\ell+p})]
     \end{multlined}
\end{split}
\end{equation*}
Here the expectations under product are more complicated since the variables in $\mathbb{E}[(\mathcal{V}^{\ell+p}_{i_{p-1}})^2(\mathcal{V}^{\ell+p}_{j_{p-1}})^2\phi'(\mathcal{U}^{\ell+p}_{i_p})\phi'(\mathcal{U}_{j_p}^{\ell+p})]$ are dependent through $\{\W^{\ell+p}_{i_pi_{p-1}}, \W^{\ell+p}_{j_{p},i_{p-1}}, \W^{\ell+p}_{i_pj_{p-1}},\W^{\ell+p}_{j_pj_{p-1}}\}$. Nevertheless, we can still decompose the expectation as before into the following terms:
\begin{equation*}
\begin{multlined}
    \mathbb{E}[(\mathcal{V}^{\ell+p}_{i_{p-1}})^2(\mathcal{V}^{\ell+p}_{j_{p-1}})^2\phi'(\mathcal{U}^{\ell+p}_{i_p})\phi'(\mathcal{U}_{j_p}^{\ell+p})] = \mathbb{E}\bigl[ \mathbb{E}[(\mathcal{V}^{\ell+p}_{i_{p-1}})^2 \mid w_{i_pi_{p-1}}, w_{j_{p}i_{p-1}}]\cdot  
    \mathbb{E}\bigl[ \mathbb{E}[(\mathcal{V}^{\ell+p}_{j_{p-1}})^2 \mid w_{i_pj_{p-1}}, w_{j_{p}j_{p-1}}]\cdot\\ \mathbb{E}[\phi'(\mathcal{U}^{\ell+p}_{i_p})\mid w_{i_pi_{p-1}},w_{i_pj_{p-1}}]
    \cdot \mathbb{E}[\phi'(\mathcal{U}^{\ell+p}_{j_p})\mid w_{j_pi_{p-1}},w_{j_pj_{p-1}}]
    \bigr].
\end{multlined}
\end{equation*}
And each conditional expectations can again be calculated explicitly:
\begin{equation*}
    \begin{split}
        \mathbb{E}[(\mathcal{V}^{\ell+p}_{i_{p-1}})^2 \mid w_{i_pi_{p-1}}, w_{j_{p}i_{p-1}}] &= 1 - b_{i_p}^2(1-w_{i_pi_{p-1}}^2) - b_{j_p}^2(1-w_{j_pi_{p-1}}^2) + 2b_{i_p}b_{j_p}w_{i_pi_{p-1}}w_{j_pi_{p-1}},\\
        \mathbb{E}[(\mathcal{V}^{\ell+p}_{j_{p-1}})^2 \mid w_{i_pj_{p-1}}, w_{j_{p}j_{p-1}}] &= 1 - b_{i_p}^2(1-w_{i_pj_{p-1}}^2) - b_{j_p}^2(1-w_{j_pj_{p-1}}^2) + 2b_{i_p}b_{j_p}w_{i_pj_{p-1}}w_{j_pj_{p-1}},\\
        \mathbb{E}[\phi'(\mathcal{U}^{\ell+p}_{i_p})\mid w_{i_pi_{p-1}},w_{i_pj_{p-1}}] &= \Phi\Bigl(\dfrac{w_{i_p i_{p-1}} a_{i_{p-1}} + w_{i_p j_{p-1}} a_{j_{p-1}}}{\sqrt{1 - a^2_{i_{p-1}}-a^2_{j_{p-1}}}}\Bigr),\\
       \mathbb{E}[\phi'(\mathcal{U}^{\ell+p}_{j_p})\mid w_{j_pi_{p-1}},w_{j_pj_{p-1}}] &= \Phi\Bigl(\dfrac{w_{j_p i_{p-1}} a_{i_{p-1}} + w_{j_p j_{p-1}} a_{j_{p-1}}}{\sqrt{1 - a^2_{i_{p-1}}-a^2_{j_{p-1}}}}\Bigr).
    \end{split}
\end{equation*}
We open the expectation using the following expressions, which, as before, are integrals involving the error function computed e.g. in \cite{korotkov2020integrals}:
\begin{equation*}
    \begin{split}
        \mathbb{E}[\Phi(A_iw_i + A_jw_j)] = \mathbb{E}[w_i^2\Phi(A_iw_i + A_jw_j)] &= \dfrac{1}{2},\\
        \mathbb{E}\Biggl[w_{i}\Phi\Bigl(\dfrac{w_{i} a_{i_{p-1}} + w_{j} a_{j_{p-1}}}{\sqrt{1 - a^2_{i_{p-1}}-a^2_{j_{p-1}}}}\Bigr)\Biggr] &= \sqrt{\dfrac{1}{2\pi}}a_{i_{p-1}},\\ 
         \mathbb{E}\Biggl[w_{i}w_{j}\Phi\Bigl(\dfrac{w_{i} a_{i_{p-1}} + w_{j} a_{j_{p-1}}}{\sqrt{1 - a^2_{i_{p-1}}-a^2_{j_{p-1}}}}\Bigr)\Biggr] &= 0,\\
         \mathbb{E}\Biggl[w_{i}^2w_{j}^2\Phi\Bigl(\dfrac{w_{i} a_{i_{p-1}} + w_{j} a_{j_{p-1}}}{\sqrt{1 - a^2_{i_{p-1}}-a^2_{j_{p-1}}}}\Bigr)\Biggr] &= \dfrac{1}{2},\\ 
        \mathbb{E}\Biggl[w_{i}w_{j}^2\Phi\Bigl(\dfrac{w_{i} a_{i_{p-1}} + w_{j} a_{j_{p-1}}}{\sqrt{1 - a^2_{i_{p-1}}-a^2_{j_{p-1}}}}\Bigr)\Biggr] &= \dfrac{1}{\sqrt{2\pi}}a_{i_{p-1}}(1-a_{j_{p-1}}^2).
    \end{split}
\end{equation*}
Using all of the above, we get the following expression for the expectation in case $i_{p-1}\neq j_{p-1}$:
\begin{equation*}
    \mathbb{E}[(\mathcal{V}^{\ell+p}_{i_{p-1}})^2(\mathcal{V}^{\ell+p}_{j_{p-1}})^2\phi'(\mathcal{U}^{\ell+p}_{i_p})\phi'(\mathcal{U}_{j_p}^{\ell+p})] = 
    \begin{cases}
    \dfrac{1}{2} & i_p=j_p,\\[15pt]
    \dfrac{1}{4} +\dfrac{1}{\pi}b_{i_p}b_{j_p}(a_{i_{p-1}}^2 + a_{j_{p-1}}^2 - 2a_{i_{p-1}}^2a_{j_{p-1}}^2(b_{i_p}^2 + b_{j_p}^2)) & i_p\neq j_p,
    \end{cases}
\end{equation*}

In case $i_{p-1}= j_{p-1}$, we have 

\begin{equation*}
    \begin{split}
        \mathbb{E}[(\mathcal{V}^{\ell+p}_{i_{p-1}})^4 \mid w_{i_pi_{p-1}}, w_{j_{p}i_{p-1}}] &= 1 - b_{i_p}^2(1-w_{i_pi_{p-1}}^2) - b_{j_p}^2(1-w_{j_pi_{p-1}}^2) + 2b_{i_p}b_{j_p}w_{i_pi_{p-1}}w_{j_pi_{p-1}},\\
        \mathbb{E}[\phi'(\mathcal{U}^{\ell+p}_{i_p})\mid w_{i_pi_{p-1}}] &= \Phi\Bigl(\dfrac{w_{i_p i_{p-1}} a_{i_{p-1}}}{\sqrt{1 - a^2_{i_{p-1}}}}\Bigr),\\
       \mathbb{E}[\phi'(\mathcal{U}^{\ell+p}_{j_p})\mid w_{j_pi_{p-1}}] &= \Phi\Bigl(\dfrac{w_{j_p i_{p-1}} a_{i_{p-1}}}{\sqrt{1 - a^2_{i_{p-1}}}}\Bigr).
    \end{split}
\end{equation*}
Therefore, the expectation in this case is given by:
\begin{equation*}
    \mathbb{E}[(\mathcal{V}^{\ell+p}_{i_{p-1}})^4\phi'(\mathcal{U}^{\ell+p}_{i_p})\phi'(\mathcal{U}^{\ell+p}_{j_p})] = 
    \begin{cases}
    \dfrac{3}{2} & i_p=j_p,\\[15pt]
    \dfrac{3}{4} + \dfrac{2}{\pi}b_{i_p}b_{j_p}a^2_{i_{p-1}}(3 - a_{i_{p-1}}^2(b_{i_p}^2+b_{j_p}^2)) & i_p\neq j_p.
    \end{cases}
\end{equation*}
To compute the sum, we now notice that equality of indices in one layer ($i_p=j_p$) amounts to multiplying the product $\prod_{p=1}^k\mathbb{E}[(\mathcal{V}^{\ell+p}_{i_{p-1}})^2(\mathcal{V}^{\ell+p}_{j_{p-1}})^2\phi'(\mathcal{U}^{\ell+p}_{i_p})\phi'(\mathcal{U}_{j_p}^{\ell+p})]$ by $6+O(1/M^{3/2})$ and for every pair of indices $(i_p, j_p)$ there are only $n_{\ell+p}$ summands with this multiplier and $n_{\ell+p}(n_{\ell+p}-1)$ summands without it. We can also see that if we computed the sum with all the pairs of indices not equal, we would get $\prod_{p=0}^{k}\Bigl({n_{\ell+p}}/{2}\Bigr)^2\Bigl(1 -1/n_{\ell+p} +O(1/M^{3/2})\Bigr)$. Therefore, we get the desired expression for the expectation of the product:

\begin{equation*}
\begin{split}
    {\mathbb{E}\Bigl[\prod_{p=0}^k (\mathcal{N}^{\ell+p}_\delta)^2]} &=
     \prod_{p=0}^{k}\Bigl(\dfrac{\sigma_w^2}{n_{\ell+p}}\Bigr)^2\prod_{p=0}^{k}\Bigl(\dfrac{n_{\ell+p}}{2}\Bigr)^2 \Bigl(1 -\dfrac{1}{n_{\ell+p}} + \dfrac{6}{n_{\ell+p}} + O\Bigl(\dfrac{1}{M^{3/2}} \Bigr)\Bigr)\\
     &=\prod_{p=0}^{k}\Bigl(\dfrac{\sigma_w^2}{2}\Bigr)^2 
     \Bigl(1 + \dfrac{5}{n_{\ell+p}} + O\Bigl(\dfrac{1}{M^{3/2}} \Bigr)\Bigr)\\
     &= \prod_{p=0}^k \mathbb{E}[(\mathcal{N}^{\ell+p}_\delta)^2]\Bigl( 1 + O\Bigl(\dfrac{1}{M^{3/2}}\Bigr)\Bigr).
\end{split}
\end{equation*}

\end{proof}

\begin{lemma}[Dispersion of $\Theta_W(x,x)$]\label{lemma:theta_w}{Consider a fully-connected DNN of depth $L$ defined in \eqref{eq:nn} initialized as in \eqref{eq:init}. The input dimension is given by $n_0=\alpha_0 M$, the output dimension is $1$, and the hidden layers have constant width $M$.  The activation function in the hidden layers is ReLU, i.e. $\phi(x)= x \mathbbm{1}\{x>0\}$, and the output layer is linear. Assume also that the biases are initialized to zero, i.e. $\sigma_b=0$, and the input data is normalized. Then the component of the NTK corresponding to the weights $\Theta_W(x,x):= \sum_{\ell=1}^L \sum_{ij} \Bigl(\dfrac{\partial f(x)}{\partial \W_{ij}^\ell} \Bigr)^2$ has the following properties at initialization: 
\begin{equation}
    \mathbb{E}[\Theta_W(x,x)] = \Bigl(\dfrac{\sigma_w^2}{2}\Bigr)^{L-1} \Bigl(1 + \dfrac{M}{n_0}(L-1)\Bigr),
\end{equation}

\begin{equation}
    \dfrac{\mathbb{E}[\Theta_W^2(x,x)]}{\mathbb{E}^2[\Theta_W(x,x)]} \xrightarrow[\substack{M\to\infty, L\to\infty, \\L/M \to \lambda\in\mathbb{R}}]{} \dfrac{1}{2\lambda} e^{5\lambda} \Biggl( 1  - \dfrac{1}{4\lambda}(1 - e^{-4\lambda}) \Biggr).
\end{equation}
}
\end{lemma}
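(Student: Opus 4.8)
The plan is to reduce everything to the telescoping decomposition announced in Section~\ref{section:proof-ideas} and then compute the first two moments term by term. Since $\Theta_W(x,x)=\sum_{\ell=1}^L\|\g^\ell\|^2\|\x^{\ell-1}\|^2$ follows directly from \eqref{eq:backprop}, and since the linear scalar output gives $\|\g^L\|^2=1$ while the normalized input gives $\|\x^0\|^2=1$, each summand becomes a product $S_\ell:=\prod_{k=1}^{\ell-1}\mathcal{N}_x^k\prod_{p=\ell}^{L-1}\mathcal{N}_\delta^p$. The crucial structural observation is that the factors $\mathcal{N}_x^k$ ($k\le\ell-1$) depend only on $\W^1,\dots,\W^{\ell-1}$, whereas the factors $\mathcal{N}_\delta^p$ ($p\ge\ell$) depend only on $\W^\ell,\dots,\W^L$; these weight sets are disjoint, so the two sub-products are independent.

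First I would compute $\mathbb{E}[\Theta_W]=\sum_\ell\mathbb{E}[S_\ell]$. By the independence just noted, $\mathbb{E}[S_\ell]=\mathbb{E}[\prod_k\mathcal{N}_x^k]\,\mathbb{E}[\prod_p\mathcal{N}_\delta^p]$. The first factor equals $\prod_k\mathbb{E}[\mathcal{N}_x^k]$ because the $\mathcal{N}_x^k$ are mutually independent (Lemma~\ref{lemma:x_ratio}) and telescopes to $a^{\ell-1}n_{\ell-1}/n_0$ with $a=\sigma_w^2/2$; the second factor equals $\prod_p\mathbb{E}[\mathcal{N}_\delta^p]=a^{L-\ell}$ by Part~1 of Lemma~\ref{lemma:gia}, which shows the GIA correction vanishes exactly for a pure $\delta$-product. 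Summing $a^{L-1}n_{\ell-1}/n_0$ over $\ell$ and using $n_{\ell-1}=M$ for $\ell\ge2$, $n_0=\alpha_0 M$, yields the stated $a^{L-1}\bigl(1+\tfrac{M}{n_0}(L-1)\bigr)$.

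For the second moment I would expand $\mathbb{E}[\Theta_W^2]=\sum_\ell\mathbb{E}[S_\ell^2]+2\sum_{\ell_1<\ell_2}\mathbb{E}[S_{\ell_1}S_{\ell_2}]$. The diagonal terms factor as before and combine $\mathbb{E}[(\mathcal{N}_x^k)^2]$ from Lemma~\ref{lemma:x_ratio} with Part~3 of Lemma~\ref{lemma:gia} (the squared $\delta$-product factorizes up to $O(M^{-3/2})$), giving $\mathbb{E}[S_\ell^2]=a^{2(L-1)}(n_{\ell-1}/n_0)^2\mathcal{X}_{(1,L)}$. The cross terms are the heart of the argument: splitting the layer index $j$ into three regimes turns the product into $\prod_{j<\ell_1}(\mathcal{N}_x^j)^2\cdot\prod_{\ell_1\le j<\ell_2}\mathcal{N}_x^j\mathcal{N}_\delta^j\cdot\prod_{j\ge\ell_2}(\mathcal{N}_\delta^j)^2$. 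The first block is independent of the rest, but the middle and last blocks share $\W^{\ell_2}$ and each $\delta$-factor couples consecutive layers, so the factorization is only approximate. Here I would invoke Parts~2 and~3 of Lemma~\ref{lemma:gia}: the middle block contributes the correlation factor $\mathcal{C}_{(\ell_1,\ell_2)}$ from the same-layer $\mathcal{N}_x\mathcal{N}_\delta$ coupling, while the outer blocks contribute $\mathcal{X}$-factors, yielding $\mathbb{E}[S_{\ell_1}S_{\ell_2}]=a^{2(L-1)}\tfrac{n_{\ell_1-1}n_{\ell_2-1}}{n_0^2}\mathcal{X}_{(1,L)}\tfrac{\mathcal{C}_{(\ell_1,\ell_2)}}{\mathcal{X}_{(\ell_1,\ell_2)}}$, which is exactly the summand displayed in Theorem~\ref{th:NTK_moments}. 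I expect this bookkeeping to be the main obstacle: the weak $O(1/M)$ correlations along the whole $\delta$-chain and across the middle/outer block boundary are negligible in the pure infinite-width limit but survive when $L\sim M$, so they must be tracked to the correct order rather than discarded.

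Finally I would pass to the limit. Substituting constant width, $\mathcal{X}_{(1,L)}\to e^{5\lambda}$ and $\mathcal{C}_{(\ell_1,\ell_2)}/\mathcal{X}_{(\ell_1,\ell_2)}\approx e^{-4(\ell_2-\ell_1)/M}$, the diagonal sum is $O(M)$ and hence negligible against the cross sum, which is $O(M^2)$. Setting $d=\ell_2-\ell_1$ and approximating the double sum by $M^2\int_0^\lambda(\lambda-t)e^{-4t}\,dt=M^2\bigl(\tfrac{\lambda}{4}-\tfrac{1-e^{-4\lambda}}{16}\bigr)$, then dividing by $\mathbb{E}^2[\Theta_W]\sim a^{2(L-1)}\lambda^2M^2/\alpha_0^2$, the factors of $\alpha_0$ and $M^2$ cancel and I obtain $\tfrac{1}{2\lambda}e^{5\lambda}\bigl(1-\tfrac{1}{4\lambda}(1-e^{-4\lambda})\bigr)$, as claimed. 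The only remaining care is justifying the sum-to-integral passage, which rests on the same $(1+c/M)^L\approx e^{c\lambda}$ control used throughout the paper.
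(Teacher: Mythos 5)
Your proposal is correct and follows essentially the same route as the paper's proof: the same telescoping decomposition of each summand into $\prod_k\mathcal{N}_x^k\prod_p\mathcal{N}_\delta^p$, the same use of Lemmas \ref{lemma:x_ratio}, \ref{lemma:d_ratio} and \ref{lemma:gia} (Parts 1--3) for the expectation, the diagonal terms, and the three-block cross terms, and the same identification of the cross sum as the dominant $O(M^2)$ contribution against which the diagonal sum is negligible. The only cosmetic difference is that you evaluate the final double sum via the Riemann-integral approximation $M^2\int_0^\lambda(\lambda-t)e^{-4t}\,dt=M^2\bigl(\tfrac{\lambda}{4}-\tfrac{1-e^{-4\lambda}}{16}\bigr)$, whereas the paper sums the geometric series in closed form before passing to the limit; both yield the stated $\tfrac{1}{2\lambda}e^{5\lambda}\bigl(1-\tfrac{1}{4\lambda}(1-e^{-4\lambda})\bigr)$.
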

\begin{proof} 
Using backpropagation formulas for the gradients, we can rewrite the NTK as follows:
\begin{equation*}
    \begin{split}
        \Theta_W(x,x) &= \sum_{\ell=1}^L\sum_{i=1}^{n_\ell}\sum_{j=1}^{n_{\ell-1}} (\g_i^\ell)^2(\x^{\ell-1}_j)^2\\
                      &= \sum_{\ell=1}^L \| \g^\ell \times \x^{\ell-1}\|^2 = \sum_{\ell=1}^L \|\g^\ell\|^2
                      \|\x^{\ell-1}\|^2\\
                      &= \sum_{\ell=1}^L \|\g^L\|^2\|\x^0\|^2 \prod_{j=\ell}^{L-1}\dfrac{\|\g^j\|^2}{\|\g^{j+1}\|^2} \prod_{k=1}^{\ell-1}\dfrac{\|\x^k\|^2}{\|\x^{k-1}\|^2}\\
                      &= \sum_{\ell=1}^L \|\g^L\|^2\|\x^0\|^2 \prod_{j=\ell}^{L-1}\mathcal{N}_{\delta}^{j} \prod_{k=1}^{\ell-1}\mathcal{N}_{x}^{k}.
    \end{split}
\end{equation*}
Here for the simplicity of notation we omit the dependence on $\g^\ell$ and $\x^{\ell-1}$ on the input $x$.

If the last layer has a linear activation and the input data is normalized, we also have that $\|\g^L\|^2\|\x^0\|^2=1$. Then, using the results about expectations of $\mathcal{N}_\delta^{\ell}$ and $\mathcal{N}_x^{\ell}$ from Lemma \ref{lemma:x_ratio} and Lemma \ref{lemma:d_ratio}, as well as the results about correlations from Lemma \ref{lemma:gia}, we can write the following for the expectation of $\Theta_W(x,x)$:

\begin{equation*}
    \begin{split}
        \mathbb{E}[\Theta_W(x,x)]  =  \sum_{\ell=1}^L \prod_{j=\ell}^{L-1} \dfrac{\sigma_w^2}{2} \prod_{k=1}^{\ell-1} \dfrac{\sigma_w^2}{2} \dfrac{n_k}{n_{k-1}} = \Bigl(\dfrac{\sigma_w^2}{2}\Bigr)^{L-1} \sum_{\ell=1}^L \dfrac{n_{\ell-1}}{n_{0}}
    \end{split}
\end{equation*}
And for constant width of hidden layers, i.e. $n_\ell = M, \ell=1,\dots,L-1$, this simplifies to 
\begin{equation*}
    \begin{split}
        \mathbb{E}[\Theta_W(x,x)] = \Bigl(\dfrac{\sigma_w^2}{2}\Bigr)^{L-1} \Bigl(1 + \dfrac{M}{n_0}(L-1) \Bigr) 
        \propto \Bigl(\dfrac{\sigma_w^2}{2}\Bigr)^{L} \dfrac{ML}{n_0}
    \end{split}
\end{equation*}
Now we consider the second moment of the NTK, which is given by:
\begin{equation*}
    \begin{split}
        \mathbb{E}[\Theta_W^2(x,x)] &= \sum_{\ell=1}^L \mathbb{E}[\theta_\ell^2] + 2\sum_{1\leq l_1<l_2\leq L} \mathbb{E}[\theta_{\ell_1}\theta_{\ell_2}],\\
        \theta_\ell &= \prod_{j=\ell}^{L-1} \mathcal{N}_\delta^{j}\prod_{k=1}^{\ell-1}\mathcal{N}_x^{k}, \quad \ell=1,\dots,L.
    \end{split}
\end{equation*}
We can open the expectation of the squared terms defined above as follows:
\begin{equation*}
    \begin{split}
        \mathbb{E}[\theta_\ell^2] &= \mathbb{E}\Bigl[\prod_{j=\ell}^{L-1} (\mathcal{N}_\delta^{j})^2 \Bigr] \mathbb{E}\Bigl[\prod_{k=1}^{\ell-1} (\mathcal{N}_x^{k})^2\Bigr]\\
        &=  \Bigl(\dfrac{\sigma_w^2}{2}\Bigr)^{2(L-1)} \Bigl(\dfrac{n_{\ell-1}}{n_{0}}\Bigr)^2 \prod_{j=\ell}^{L-1} \Bigl(1+\dfrac{5}{n_j}+O\Bigl(\dfrac{1}{M^{3/2}} \Bigr)\Bigr) \prod_{k=1}^{\ell-1} \Bigl(1+\dfrac{5}{n_k}\Bigr),
    \end{split}
\end{equation*}  
which simplifies to the following expressions in case of constant width $M$:
\begin{equation*}
    \begin{split}
        \mathbb{E}[\theta_\ell^2] &= \Bigl(\dfrac{\sigma_w^2}{2}\Bigr)^{2(L-1)}\Bigl(\dfrac{M}{n_{0}}\Bigr)^2 \Bigl(1+\dfrac{5}{M}+O\Bigl(\dfrac{1}{M^{3/2}} \Bigr)\Bigr)^{L-1}, \quad \ell>1, \\
        \mathbb{E}[\theta_1^2] &= \Bigl(\dfrac{\sigma_w^2}{2}\Bigr)^{2(L-1)} \Bigl(1+\dfrac{5}{M}+O\Bigl(\dfrac{1}{M^{3/2}} \Bigr)\Bigr)^{L-1}.
    \end{split}
\end{equation*} 
And the mixed terms with $1\leq\ell_1<\ell_2\leq L$ can be calculated as follows:
 \begin{equation*}
    \begin{split}       
        \mathbb{E}[\theta_{\ell_1}\theta_{\ell_2}] &= \mathbb{E}\Bigl[ \prod_{j=\ell_2}^{L-1} (\mathcal{N}_\delta^{j})^2 \Bigr]   \mathbb{E}\Bigl[\prod_{p=\ell_1}^{\ell_2-1} \mathcal{N}_\delta^{p}\mathcal{N}_x^{p}\Bigr] \prod_{k=1}^{\ell_1-1} \mathbb{E}[(\mathcal{N}_x^{k})^2]\\
        &= \Bigl(\dfrac{\sigma_w^2}{2}\Bigr)^{2(L-1)}\Bigl(\dfrac{n_{\ell_2-1}n_{\ell_1-1}}{n_{0}^2}\Bigr) \prod_{j=\ell_2}^{L-1}\Bigl(1+\dfrac{5}{n_j}+ O\Bigl(\dfrac{1}{M^{3/2}}\Bigr)\Bigr) \prod_{p=\ell_1}^{\ell_2-1} \Bigl(1 + \dfrac{1}{n_{p}} + O\Bigl(\dfrac{1}{M^{3/2}}\Bigr) \Bigr) \prod_{k=1}^{\ell_1-1}  \Bigl(1+ \dfrac{5}{n_k}\Bigr),  \\
    \end{split}
\end{equation*}
which for constant width $M$ simplifies to 
 \begin{equation*}
    \begin{split}       
        \mathbb{E}[\theta_{\ell_1}\theta_{\ell_2}] &=\Bigl(\dfrac{\sigma_w^2}{2}\Bigr)^{2(L-1)} \Bigl(\dfrac{M}{n_{0}}\Bigr)^2\Bigl(1+\dfrac{5}{M}+O\Bigl(\dfrac{1}{M^{3/2}} \Bigr)\Bigr)^{L-1-\Delta_\ell }\Bigl(1+\dfrac{1}{M}+O\Bigl(\dfrac{1}{M^{3/2}} \Bigr)\Bigr)^{\Delta_\ell },  \quad \ell_1>1,\\
        \mathbb{E}[\theta_{1}\theta_{\ell_2}] &= \Bigl(\dfrac{\sigma_w^2}{2}\Bigr)^{2(L-1)} \Bigl(\dfrac{M}{n_{0}}\Bigr)\Bigl(1+\dfrac{5}{M}+O\Bigl(\dfrac{1}{M^{3/2}} \Bigr)\Bigr)^{L- \ell_2}\Bigl(1+\dfrac{1}{M}+O\Bigl(\dfrac{1}{M^{3/2}} \Bigr)\Bigr)^{\ell_2-1}.
    \end{split}
\end{equation*}
To make the notation lighter, we will denote $x:=1+{5}/{M}+O\Bigl({M^{-3/2}} \Bigr)$, $y:=1+{1}/{M}+O\Bigl({M^{-3/2}} \Bigr)$, $a:={\sigma_w^2}/{2}$ and $\lambda := L/M$ here and in the following proofs.
Then we can rewrite the two sums that comprise the second moment of the NTK as follows:
\begin{equation*}
\begin{split}  
    \sum_{\ell=1}^L\mathbb{E}[\theta_\ell^2] &= a^{2(L-1)}x^{L-1}\Bigl(\dfrac{M^2}{n_0^2}(L-1) + 1\Bigr) \\
    &= a^{2(L-1)}\dfrac{M^2L^2}{n_0^2} \Biggl[x^{L-1}\dfrac{1}{\lambda M} +O\Bigl(\dfrac{1}{M^{3/2}}\Bigr)  \Biggr] = a^{2(L-1)}\dfrac{M^2L^2}{n_0^2} \Biggl[x^{L-1}\dfrac{1}{\lambda M} +O\Bigl(\dfrac{1}{M^{3/2}}\Bigr)  \Biggr],
\end{split}
\end{equation*}

 \begin{equation*}
    \begin{split}       
        \sum_{1\leq \ell_1<\ell_2\leq L} \mathbb{E}[\theta_{\ell_1}\theta_{\ell_2}] = & \ a^{2(L-1)} \dfrac{M^2}{n_{0}^2} \sum_{\Delta_\ell = 1}^{L-2} (L-1-\Delta_\ell) \ x^{L-1-\Delta_\ell}y^{\Delta_\ell} + a^{2(L-1)} \dfrac{M}{n_{0}} \sum_{\ell_2=2}^Lx^{L-\ell_2}y^{\ell_2-1}\\
        = & \ a^{2(L-1)} \dfrac{M^4}{16n_{0}^2} \bigl( (L-2)yx^L - (L-1)y^2x^{L-1} + xy^L \bigr)+ a^{2(L-1)} \dfrac{M^2}{4n_{0}} (yx^{L-1}-y^L)\\
        = & \ a^{2(L-1)}\dfrac{M^2L^2}{n_0^2}\Biggl[ x^{L}\Biggl(\dfrac{1}{4\lambda}\Bigl(1-\dfrac{5}{M}\Bigr) - \dfrac{1}{16\lambda^2}\Bigl(1+\dfrac{5-4\alpha_0}{M}\Bigr) +  O\Bigl(\dfrac{1}{M^{3/2}}\Bigr)\Biggr)  +\\
        &\qquad\qquad\qquad + y^L\Biggl(\dfrac{1}{16\lambda^2}\Bigl(1+\dfrac{5-4\alpha_0}{M}\Bigr) + O\Bigl(\dfrac{1}{M^{3/2}}\Bigr)\Biggr) \Biggr]
        \end{split}
\end{equation*}
Therefore, the complete expression for the second moment of $\Theta_W(x,x)$ is given by:
\begin{equation*}
    \begin{split}  
    \sum_{\ell=1}^L \mathbb{E}[\theta_\ell^2] + 2\sum_{1\leq \ell_1<\ell_2\leq L} \mathbb{E}[\theta_{\ell_1}\theta_{\ell_2}] = & \ a^{2(L-1)}\dfrac{M^2L^2}{n_0^2}\Biggl[ x^{L}\Biggl(\dfrac{1}{2\lambda}\Bigl(1-\dfrac{3}{M}\Bigr) - \dfrac{1}{8\lambda^2}\Bigl(1+\dfrac{5-4\alpha_0}{M}\Bigr) + O\Bigl(\dfrac{1}{M^{3/2}}\Bigr)\Biggr)  +\\
    &\qquad\qquad\qquad + y^L\Biggl(\dfrac{1}{8\lambda^2}\Bigl(1+\dfrac{5-4\alpha_0}{M}\Bigr) + O\Bigl(\dfrac{1}{M^{3/2}}\Bigr)\Biggr) \Biggr]\\
    & \ a^{2(L-1)}\dfrac{M^2L^2}{n_0^2}\Biggl[ x^{L}\Biggl(\dfrac{1}{2\lambda} - \dfrac{1}{8\lambda^2}+ O\Bigl(\dfrac{1}{M}\Bigr)\Biggr) + y^L\dfrac{1}{8\lambda^2}+ O\Bigl(\dfrac{1}{M}\Bigr)\Biggr) \Biggr]\\
    \end{split}
\end{equation*}
One can see that in the limit $L\to\infty, \ M\to\infty, \ L/M\to\lambda\in \mathbb{R}$, we have $x^{L}\to e^{5\lambda}$ and $y^L\to e^{\lambda}$. Therefore, we can find the limit of the desired ratio:
\vspace{-0.1ex}
\begin{equation*}
    \begin{split}  
    \dfrac{\mathbb{E}[\Theta_W^2(x,x)]}{\Bigl(\dfrac{\sigma_w^2}{2}\Bigr)^{2(L-1)}\dfrac{L^2M^2}{n_0^2}} \xrightarrow[\substack{M\to\infty, L\to\infty, \\L/M \to \lambda\in\mathbb{R}}]{} \dfrac{1}{2\lambda} e^{5\lambda} \Biggl( 1 - \dfrac{1}{4\lambda}(1 - e^{-4\lambda}) \Biggr).
    \end{split}
\end{equation*}
\end{proof}

\begin{lemma}[Dispersion of $\Theta_b(x,x)$]\label{lemma:theta_b}{Consider a fully-connected DNN  of depth $L$ defined in \eqref{eq:nn} initialized as in \eqref{eq:init}. The input dimension is given by $n_0=\alpha_0 M$, the output dimension is $1$, and the hidden layers have constant width $M$.  The activation function in the hidden layers is ReLU, i.e. $\phi(x)= x \mathbbm{1}\{x>0\}$, and the output layer is linear. Assume also that the biases are initialized to zero, i.e. $\sigma_b=0$, and the input data is normalized. Then the component of the NTK corresponding to the biases $\Theta_b(x,x):= \sum_{\ell=1}^L \sum_{i} \Bigl(\dfrac{\partial f(x)}{\partial \bias_{i}^\ell} \Bigr)^2$ has the following properties at initialization: 

\begin{equation}
    \qquad\mathbb{E}[\Theta_b(x,x)] = \begin{cases}
        \dfrac{\Bigl(\dfrac{\sigma_w^2}{2}\Bigr)^L - 1}{\dfrac{\sigma_w^2}{2}-1}  &\text{if } \dfrac{\sigma_w^2}{2}\neq 1\\
        L &\text{if } \dfrac{\sigma_w^2}{2} = 1\\
\end{cases}
\end{equation}

\begin{equation}
    \dfrac{\mathbb{E}[\Theta_b^2(x,x)]}{\mathbb{E}^2[\Theta_b(x,x)]} \xrightarrow[\substack{M\to\infty, L\to\infty, \\L/M \to \lambda\in\mathbb{R}}]{} \begin{cases}
            1  &\text{if } \dfrac{\sigma_w^2}{2} < 1\\
            \\[1pt]
            \dfrac{2}{25\lambda^2}(e^{5\lambda}-1)-\dfrac{2}{5\lambda} &\text{if } \dfrac{\sigma_w^2}{2} = 1\\
            \\[1pt]
            e^{5\lambda} &\text{if } \dfrac{\sigma_w^2}{2} > 1\\
\end{cases}
\end{equation}
}
\end{lemma}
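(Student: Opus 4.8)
The plan is to build everything on the telescopic representation of the bias component. First I would observe that $\Theta_b(x,x)=\sum_{\ell=1}^L\|\g^\ell\|^2$ directly from the definition in \eqref{eq:ntk} together with $\partial f(x)/\partial\bias_i^\ell=\g_i^\ell$ in \eqref{eq:backprop}. Since the output layer is linear and scalar, the last preactivation equals $f(x)$ itself, so $\g^L=1$ and $\|\g^L\|^2=1$; each summand then telescopes as $\|\g^\ell\|^2=\prod_{j=\ell}^{L-1}\mathcal{N}_\delta^j$ with the ratios $\mathcal{N}_\delta^j$ of Lemma~\ref{lemma:d_ratio}. This reduces the whole statement to computing first and second moments of products of the $\mathcal{N}_\delta^j$.

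For the expectation I would invoke Part~1 of Lemma~\ref{lemma:gia}, which shows that products of the $\mathcal{N}_\delta^j$ factorize exactly despite the weak inter-layer correlations. With $\mathbb{E}[\mathcal{N}_\delta^j]=a=\sigma_w^2/2$ this gives $\mathbb{E}[\|\g^\ell\|^2]=a^{L-\ell}$, hence $\mathbb{E}[\Theta_b(x,x)]=\sum_{\ell=1}^L a^{L-\ell}$, which is precisely the stated geometric sum (equal to $(a^L-1)/(a-1)$ off the EOC and to $L$ at the EOC). For the second moment I would expand $\mathbb{E}[\Theta_b^2]=\sum_\ell\mathbb{E}[\|\g^\ell\|^4]+2\sum_{\ell_1<\ell_2}\mathbb{E}[\|\g^{\ell_1}\|^2\|\g^{\ell_2}\|^2]$. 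The diagonal terms are products of $(\mathcal{N}_\delta^j)^2$, handled by Part~3 of Lemma~\ref{lemma:gia}, giving $\mathbb{E}[\|\g^\ell\|^4]=a^{2(L-\ell)}\mathcal{X}_{(\ell,L)}$. The cross terms involve the mixed product $\prod_{j=\ell_1}^{\ell_2-1}\mathcal{N}_\delta^j\cdot\prod_{j=\ell_2}^{L-1}(\mathcal{N}_\delta^j)^2$; here I would show that the single-power prefix contributes its expectation exactly (the $\mathbb{E}[(\mathcal{V})^2\phi'(\mathcal{U})]=\tfrac12$ computation of Part~1 applies, including across the boundary layer $\ell_2$) while the squared suffix contributes the $\mathcal{X}_{(\ell_2,L)}$ correction of Part~3, yielding $\mathbb{E}[\|\g^{\ell_1}\|^2\|\g^{\ell_2}\|^2]=a^{2L-\ell_1-\ell_2}\mathcal{X}_{(\ell_2,L)}$. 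This reproduces the second-moment expression of Theorem~\ref{th:NTK_moments}.

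It then remains to evaluate the sums in the limit using $\mathcal{X}_{(\ell,L)}=(1+5/M+O(M^{-3/2}))^{L-\ell}$ and $(1+5/M)^L\to e^{5\lambda}$. In the ordered phase ($a<1$) both $\mathbb{E}[\Theta_b]$ and $\mathbb{E}[\Theta_b^2]$ converge to finite constants dominated by the layers nearest the output; a short computation gives $\mathbb{E}[\Theta_b^2]\to(1-a)^{-2}=\mathbb{E}^2[\Theta_b]$, so the ratio tends to $1$. In the chaotic phase ($a>1$) the sums are geometric with ratio $a^2(1+5/M)$ and are dominated by small $\ell$; factoring out $a^{2L}e^{5\lambda}$ and cancelling against $\mathbb{E}^2[\Theta_b]\sim a^{2L}/(a-1)^2$ leaves the clean value $e^{5\lambda}$.

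The EOC case ($a=1$) is where I expect the real difficulty. Here the diagonal contribution $\sum_\ell\mathcal{X}_{(\ell,L)}$ is only $O(M)$ and thus vanishes relative to $\mathbb{E}^2[\Theta_b]=L^2=\lambda^2M^2$, so the answer is entirely set by the $O(M^2)$ cross terms. These reduce to weighted sums of the form $\sum_m m(1+5/M)^m$, which I would approximate by the integral $M^2\int_0^\lambda t\,e^{5t}\,dt=M^2\bigl(\tfrac{\lambda e^{5\lambda}}{5}-\tfrac{e^{5\lambda}-1}{25}\bigr)$; combined with $\sum_m(1+5/M)^m\approx M\tfrac{e^{5\lambda}-1}{5}$ and the weight $(L-1-m)$, the leading $\lambda e^{5\lambda}$ pieces cancel and one is left with $2M^2\bigl(\tfrac{e^{5\lambda}-1}{25}-\tfrac{\lambda}{5}\bigr)$. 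Dividing by $\lambda^2M^2$ yields $\tfrac{2}{25\lambda^2}(e^{5\lambda}-1)-\tfrac{2}{5\lambda}$. The main obstacles are therefore twofold: justifying the clean factorization of the mixed cross-term products (especially the boundary layer), and carrying out the careful subleading bookkeeping at the EOC, where the dominant $O(M)$ and leading $O(M^2)$ contributions must be shown to cancel so that the nontrivial finite limit survives.
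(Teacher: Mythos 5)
Your proposal follows essentially the same route as the paper's proof: the same telescoping $\Theta_b(x,x)=\sum_{\ell}\prod_{j=\ell}^{L-1}\mathcal{N}_\delta^j$ with $\|\g^L\|^2=1$, the same use of Parts~1 and~3 of Lemma~\ref{lemma:gia} for the expectation, the diagonal terms, and the mixed cross terms (where you are in fact slightly more careful than the paper, which asserts the mixed factorization $\mathbb{E}\bigl[\prod_{j=\ell_1}^{\ell_2-1}\mathcal{N}_\delta^j\prod_{k=\ell_2}^{L-1}(\mathcal{N}_\delta^k)^2\bigr]=a^{2L-\ell_1-\ell_2}\mathcal{X}_{(\ell_2,L)}$ without restating the boundary-layer argument), and the same three-case limit analysis, with your EOC bookkeeping (cancellation of the $\lambda e^{5\lambda}$ terms, negligible $O(M)$ diagonal contribution) reproducing the paper's exact geometric-sum computation via an integral approximation. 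The proposal is correct and matches the paper's argument.
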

\begin{proof} Using backpropagation equations \eqref{eq:backprop}, we can obtain the following expression for $\Theta_b(x,x)$:

\begin{equation*}
\begin{split}
    \Theta_b(x,x) = \sum_{\ell=1}^L \|\g^\ell\|^2 &= \|\g^L\|^2 \sum_{\ell=1}^L \prod_{j=\ell}^{L-1} \dfrac{\|\g^j\|^2}{\|\g^{j+1}\|^2} =  \sum_{\ell=1}^L \prod_{j=\ell}^{L-1} \mathcal{N}_\delta^j.
\end{split}
\end{equation*}
In this lemma, we will again denote $a:=\sigma_w^2/2$ and $x:=1+5/M+O(1/M^{3/2})$. And in the following computations, we will need to consider cases with $a\neq 1$ and $a = 1$ separately.

\textbf{Case 1: $a\neq 1$.}
In this case the expectation is given by a sum of a geometric progression:
\begin{equation*}
\begin{split}
    \mathbb{E}[\Theta_b(x,x)] = \sum_{\ell=1}^L \prod_{j=\ell}^{L-1} \mathbb{E}[\mathcal{N}_{\delta}^{j}] = \sum_{\ell=1}^L a^{L-\ell} = \dfrac{a^{L}-1}{a - 1}
\end{split}
\end{equation*}
And for the second moment we can write:
\begin{equation*}
\begin{split}
    \mathbb{E}[\Theta_b(x,x)^2] &= \mathbb{E}\Biggl[\Bigl(\sum_{\ell=1}^L \prod_{j=\ell}^{L-1} \mathcal{N}_{\delta}^{j}\Bigr)^2\Biggr] = \sum_{\ell=1}^L \mathbb{E}\Bigl[\prod_{j=\ell}^{L-1} (\mathcal{N}_{\delta}^{j})^2\Bigr]+ 2 \sum_{1\leq \ell_1<\ell_2\leq L}\mathbb{E}\Bigl[\prod_{j=\ell_1}^{\ell_2-1}\mathcal{N}_{\delta}^{j}\prod_{k=\ell_2}^{L-1}\Bigl(\mathcal{N}_{\delta}^{k}\Bigr)^2\Bigr]\\
    & = \sum_{\ell=1}^L a^{2(L-\ell)}\prod_{j=\ell}^{L-1} \Bigl(1+\dfrac{5}{n_j} + O\Bigl(\dfrac{1}{M^{3/2}} \Bigr)\Bigr) + 2 \sum_{1\leq \ell_1<\ell_2\leq L}a^{2L-\ell_1-\ell_2}\prod_{k=\ell_2}^{L-1}\Bigl(1+\dfrac{5}{n_k} + O\Bigl(\dfrac{1}{M^{3/2}} \Bigr)\Bigr) 
\end{split}
\end{equation*}
For constant width $M$ the above expression simplifies to the following sum:
\begin{equation*}
\begin{split}
    \mathbb{E}[\Theta_b(x,x)^2] & = \sum_{\ell=1}^L a^{2(L-\ell)}x^{L-\ell} + 2 \sum_{1\leq \ell_1<\ell_2\leq L}a^{2L-\ell_1-\ell_2}x^{L-\ell_2}\\
    & = \sum_{\ell=1}^L a^{2(L-\ell)}x^{L-\ell} + 2 \sum_{\ell_1=1}^{L-1}a^{2(L-\ell)}x^{L-\ell} \sum_{\Delta_\ell=1}^{L-\ell}a^{-\Delta_\ell}x^{-\Delta_\ell}
\end{split}
\end{equation*}
And the involved terms can be further calculated explicitly as follows:
\begin{equation*}
\begin{split}
     \mathbb{E}[\Theta_b(x,x)^2] & = \dfrac{a^{2L}x^L-1}{a^2x-1} + \dfrac{2}{ax-1}\sum_{\ell=1}^{L-1}a^{2(L-l)}x^{L-l} (1 - a^{l-L}x^{l-L})\\
    & = \dfrac{a^{2L}x^L-1}{a^2x-1} + \dfrac{2}{ax-1} \Bigl( \dfrac{a^{2L}x^L-1}{a^2x-1} - 1  - \dfrac{a^{L}-a}{a-1} \Bigr) \\
    & = \dfrac{a^{2L}x^L-1}{a^2x-1} \dfrac{ax+1}{ax-1} - \dfrac{2}{ax-1}\dfrac{a^L-1}{a-1}\\
    & = \dfrac{1}{(a-1)^2}\Biggl[a^{2L}x^L \Bigl( 1 +O\Bigl(\dfrac{1}{M}\Bigr)\Bigr)  - 2a^L\Bigl(1 +O\Bigl(\dfrac{1}{M}\Bigr)\Bigr) + 1 +O\Bigl(\dfrac{1}{M}\Bigr)\Bigr)\Biggr]
\end{split}
\end{equation*}

If $a<1$, the expectation and the second moment have finite limits:

\begin{equation*}
\begin{split}
    \mathbb{E}[\Theta_b(x,x)] &\xrightarrow[\substack{M\to\infty, L\to\infty, \\L/M \to \lambda\in\mathbb{R}}]{} \dfrac{1}{1-a}\\
    \mathbb{E}[\Theta_b(x,x)^2] &\xrightarrow[\substack{M\to\infty, L\to\infty, \\L/M \to \lambda\in\mathbb{R}}]{} \dfrac{-1}{a^2-1}\dfrac{a+1}{a-1} - \dfrac{2}{a-1}\dfrac{-1}{a-1} = \dfrac{1}{(a-1)^2}
\end{split}
\end{equation*}
Therefore for $a<1$ we have
\begin{equation*}
    \dfrac{\mathbb{E}[\Theta_b^2(x,x)]}{\mathbb{E}^2[\Theta_b(x,x)]}\xrightarrow[\substack{M\to\infty, L\to\infty, \\L/M \to \lambda\in\mathbb{R}}]{} 1
\end{equation*}
On the other hand, if $a>1$ then the limits are infinite but there is a finite limit of the ratio:
\begin{equation*}
     \dfrac{\mathbb{E}[\Theta_b(x,x)^2]}{a^{2L}/(a-1)^2} \xrightarrow[\substack{M\to\infty, L\to\infty, \\L/M \to \lambda\in\mathbb{R}}]{} e^{5\lambda}
\end{equation*}

\item \textbf{Case 2: $a=1$.}
In this case, the expectation is just a sum of ones, so we have 
\begin{equation*}
\begin{split}
    \mathbb{E}[\Theta_b(x,x)] = \sum_{\ell=1}^L 1 = L
\end{split}
\end{equation*}
And the second moment can be calculated as follows:
\begin{equation*}
\begin{split}
    \mathbb{E}[\Theta_b(x,x)^2] & = \sum_{\ell=1}^L x^{L-\ell}  + 2 \sum_{\ell=1}^{L-1}x^{L-\ell} \sum_{\Delta_\ell=1}^{L-l}x^{-\Delta_\ell}
    = \dfrac{x^L-1}{x-1} + \dfrac{2}{x-1}\Bigl( \sum_{\ell=1}^{L-1} x^{L-\ell} - \sum_{\ell=1}^{L-1} 1\Bigr) \\
    &= \dfrac{x^L-1}{x-1} + \dfrac{2}{x-1}\Bigl( \dfrac{x^L-x}{x-1} -L +1 \Bigr) = M^2 \Biggl[ x^L \Bigl( \dfrac{1}{5M} + \dfrac{2}{25}\Bigr) - \dfrac{1}{5M} - \dfrac{10\lambda +2}{25}\Biggr]\\
    & = \dfrac{x^L-1}{x-1} + \dfrac{2}{x-1}\Bigl( \dfrac{x^L-x}{x-1} -L +1 \Bigr) = L^2 \Biggl[ x^L \Bigl(\dfrac{2}{25\lambda^2} +O\Bigl(\dfrac{1}{M}\Bigr) \Bigr) - \dfrac{2 }{5\lambda} - \dfrac{2}{25\lambda^2} +O\Bigl(\dfrac{1}{M}\Bigr) \Biggr]\\
\end{split}
\end{equation*}
Then for the desired ratio we have the following result in the limit:
\begin{equation*}
\begin{split}
    \dfrac{\mathbb{E}[\Theta_b(x,x)^2]}{L^2} \xrightarrow[\substack{M\to\infty, L\to\infty, \\L/M \to \lambda\in\mathbb{R}}]{} \dfrac{2}{25\lambda^2}(e^{5\lambda}-1) - \dfrac{2}{5\lambda},
\end{split}
\end{equation*}
which completes the proof for all the cases.
\end{proof}

\begin{lemma}[Dispersion of $\Theta_W(x,x)\Theta_b(x,x)$]\label{lemma:theta_wb}{Consider a fully-connected DNN  of depth $L$ defined in \eqref{eq:nn} initialized as in \eqref{eq:init}. The input dimension is given by $n_0=\alpha_0 M$, the output dimension is $1$, and the hidden layers have constant width $M$.  The activation function in the hidden layers is ReLU, i.e. $\phi(x)= x \mathbbm{1}\{x>0\}$, and the output layer is linear. Assume also that the biases are initialized to zero, i.e. $\sigma_b=0$, and the input data is normalized. Then the following statements hold: 
\begin{enumerate}
    \item In the chaotic phase, i.e. if $\sigma_w^2/2 > 1$:
    \begin{equation}
        \dfrac{\mathbb{E}[\Theta_W(x,x)\Theta_b(x,x)]}{\mathbb{E}^2[\Theta_W(x,x)]} \xrightarrow[\substack{M\to\infty, L\to\infty, \\L/M \to \lambda\in\mathbb{R}}]{} 0
    \end{equation}

\item In the ordered phase, i.e. if $\sigma_w^2/2 < 1$:
\begin{equation}
    \dfrac{\mathbb{E}[\Theta_W(x,x)\Theta_b(x,x)]}{\mathbb{E}^2[\Theta_b(x,x)]} \xrightarrow[\substack{M\to\infty, L\to\infty, \\L/M \to \lambda\in\mathbb{R}}]{} 0
\end{equation}

\item At the EOC, i.e. if $\sigma_w^2/2 = 1$:
\begin{equation}
    \dfrac{\mathbb{E}[\Theta_W(x,x)\Theta_b(x,x)]}{L^2} \xrightarrow[\substack{M\to\infty, L\to\infty, \\L/M \to \lambda\in\mathbb{R}}]{} \dfrac{1}{4\alpha_0}\Bigl( e^{5\lambda}\dfrac{9}{25\lambda^2} - e^\lambda \dfrac{1}{\lambda^2} - \dfrac{4}{5\lambda} + \dfrac{16}{25\lambda^2}\Bigr)
\end{equation}

\end{enumerate}
}
\end{lemma}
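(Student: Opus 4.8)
The plan is to reuse the telescoping decomposition from Lemmas~\ref{lemma:theta_w} and~\ref{lemma:theta_b}. Because the output layer is linear with $n_L=1$ we have $\|\g^L\|^2=1$, and with normalized input $\|\x^0\|^2=1$, so $\Theta_W(x,x)=\sum_{\ell=1}^L\theta_\ell$ and $\Theta_b(x,x)=\sum_{\ell=1}^L\beta_\ell$, where $\theta_\ell:=\prod_{j=\ell}^{L-1}\mathcal{N}_\delta^j\prod_{k=1}^{\ell-1}\mathcal{N}_x^k$ and $\beta_\ell:=\prod_{j=\ell}^{L-1}\mathcal{N}_\delta^j$. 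First I would write $\mathbb{E}[\Theta_W\Theta_b]=\sum_{\ell_1=1}^L\sum_{\ell_2=1}^L\mathbb{E}[\theta_{\ell_1}\beta_{\ell_2}]$ and reduce the problem to evaluating the mixed expectations $\mathbb{E}[\theta_{\ell_1}\beta_{\ell_2}]$.

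Next I would compute $\mathbb{E}[\theta_{\ell_1}\beta_{\ell_2}]$ by distinguishing the three regimes $\ell_1<\ell_2$, $\ell_1=\ell_2$ and $\ell_1>\ell_2$. In each regime the product $\theta_{\ell_1}\beta_{\ell_2}$ partitions the layers into three types: layers carrying only a forward ratio $\mathcal{N}_x^k$, layers carrying only a backward ratio $\mathcal{N}_\delta^j$ to the first or second power, and---only when the $\theta$-index exceeds the $\beta$-index---the intermediate layers carrying both $\mathcal{N}_x^k$ and $\mathcal{N}_\delta^k$. The forward block and the backward block occupy disjoint layer ranges and are therefore independent. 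Applying the moments of Lemmas~\ref{lemma:x_ratio} and~\ref{lemma:d_ratio} together with Lemma~\ref{lemma:gia}---Parts~1 and~3 show that the consecutive $\mathcal{N}_\delta$ correlations contribute only a factor $1+O(M^{-3/2})$, while Part~2 shows that each same-layer overlap of $\mathcal{N}_x$ and $\mathcal{N}_\delta$ contributes the factor $1+1/n_k+O(M^{-3/2})$, i.e.\ builds up the $\mathcal{C}$-products---reproduces exactly the non-asymptotic expression for $\mathbb{E}[\Theta_W(x,x)\Theta_b(x,x)]$ stated in Theorem~\ref{th:NTK_moments}. The three claims then follow by taking the corresponding limits of this formula.

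For the chaotic and ordered phases I would avoid the explicit sums and instead use Cauchy--Schwarz, $0\le\mathbb{E}[\Theta_W\Theta_b]\le\sqrt{\mathbb{E}[\Theta_W^2]\,\mathbb{E}[\Theta_b^2]}$, combined with the limits already established in Lemmas~\ref{lemma:theta_w} and~\ref{lemma:theta_b}. In the chaotic phase the ratio $\mathbb{E}[\Theta_W^2]/\mathbb{E}^2[\Theta_W]$ converges to a finite constant, whereas $\mathbb{E}[\Theta_b^2]/\mathbb{E}^2[\Theta_W]\sim a^2\alpha_0^2 e^{5\lambda}/\bigl((a-1)^2L^2\bigr)\to0$ since $\mathbb{E}[\Theta_b^2]$ is smaller than $\mathbb{E}^2[\Theta_W]$ by a factor of order $L^2$; hence $\mathbb{E}[\Theta_W\Theta_b]/\mathbb{E}^2[\Theta_W]\to0$. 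In the ordered phase $\mathbb{E}^2[\Theta_b]\to(1-a)^{-2}>0$ while $\mathbb{E}[\Theta_W^2]\to0$ (its dispersion ratio is bounded and $\mathbb{E}^2[\Theta_W]\sim a^{2L-2}L^2/\alpha_0^2\to0$), so $\mathbb{E}[\Theta_W\Theta_b]/\mathbb{E}^2[\Theta_b]\to0$ as well.

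The main obstacle is the EOC case $a=1$, where both components grow like $L$ and Cauchy--Schwarz is too lossy, so the exact constant must be extracted from the formula of the second step. Here I would set $a=1$, $n_\ell=M$, $n_0=\alpha_0 M$, write $\mathcal{X}_{(i,j)}=x^{\,j-i}$ and $\mathcal{C}_{(i,j)}=y^{\,j-i}$ with $x=1+5/M+O(M^{-3/2})$ and $y=1+1/M+O(M^{-3/2})$, and evaluate the resulting diagonal, lower-triangular (carrying the $\mathcal{C}$-factor) and upper-triangular geometric sums, treating the distinguished first-layer weight $n_0/n_0=1$ separately from $n_{\ell-1}/n_0=1/\alpha_0$. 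Each of the triangular sums is $O(L^2)$; passing to the limit with $x^L\to e^{5\lambda}$, $y^L\to e^{\lambda}$ and $(y/x)^L\to e^{-4\lambda}$ produces, e.g., a lower-triangular contribution $\tfrac{M^2}{\alpha_0}e^{5\lambda}\bigl(\tfrac{1-e^{-4\lambda}}{4}-\tfrac15\bigr)$ and an upper-triangular contribution $\tfrac{M^2}{\alpha_0}\bigl(\tfrac{e^{5\lambda}-1}{25}-\tfrac{\lambda}{5}\bigr)$, whose combination gives the $e^{5\lambda}$ coefficient $\tfrac{9}{25\lambda^2}$ and the $e^\lambda$ coefficient $-\tfrac{1}{\lambda^2}$ after dividing by $L^2$. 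This is the same machinery as in Lemmas~\ref{lemma:theta_w} and~\ref{lemma:theta_b} but heavier, since three families of terms with different exponential rates must be combined; keeping the $1/M$-expansions of the geometric sums consistent to the order that survives the $L^2$ normalization---which is what produces the lower-order coefficients $-\tfrac{4}{5\lambda}$ and $\tfrac{16}{25\lambda^2}$---is where the bookkeeping is most delicate.
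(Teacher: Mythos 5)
Your proposal is correct, and for parts 1 and 2 it takes a genuinely different route from the paper. The paper evaluates $\mathbb{E}[\Theta_W(x,x)\Theta_b(x,x)]$ explicitly in every phase: it splits the double sum over layer pairs into diagonal, overlapping (where the $\Theta_W$-index exceeds the $\Theta_b$-index, so Part~2 of Lemma~\ref{lemma:gia} contributes the $1+1/n_k$ factors) and non-overlapping terms --- the same decomposition you set up --- and then, for $a\neq 1$, grinds through three families of geometric sums before concluding that the result is $O(M a^{2L})$, which is negligible against $\mathbb{E}^2[\Theta_W]\propto a^{2L}L^2$ in the chaotic phase and vanishes outright in the ordered phase. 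You bypass that algebra entirely with the Cauchy--Schwarz bound $0\le\mathbb{E}[\Theta_W\Theta_b]\le\sqrt{\mathbb{E}[\Theta_W^2]\,\mathbb{E}[\Theta_b^2]}$ combined with the asymptotics already established in Lemmas~\ref{lemma:theta_w} and~\ref{lemma:theta_b}: your two estimates ($\mathbb{E}[\Theta_b^2]/\mathbb{E}^2[\Theta_W]\propto 1/L^2\to0$ with bounded $\Theta_W$-dispersion in the chaotic phase; $\mathbb{E}[\Theta_W^2]\propto a^{2L}L^2\to0$ against $\mathbb{E}^2[\Theta_b]\to(1-a)^{-2}>0$ in the ordered phase) are both correct, and nonnegativity of the two kernel components supplies the lower bound, so the squeeze is valid. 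This is shorter and less error-prone than the paper's Case~1 computation; what it forgoes is the non-asymptotic expression for $\mathbb{E}[\Theta_W\Theta_b]$ at $a\neq1$, which the paper reuses in Theorem~\ref{th:NTK_moments} and in the ordered-phase bound inside the proof of Theorem~\ref{theorem:NTK_gd} --- though your second step recovers that formula independently anyway. At the EOC you rightly observe that Cauchy--Schwarz cannot produce the exact constant and revert to the same explicit evaluation as the paper; your leading exponential coefficients agree with the paper's $\frac{M^2}{4\alpha_0}\bigl[\frac{9}{25}x^{L-1}-y^{L}-\frac{4\lambda}{5}+\frac{16}{25}\bigr]$ (your sketched lower-triangular sum drops a constant $+\frac{1}{5}$ inside the bracket, but that is precisely the kind of lower-order term your closing caveat about $1/M$-bookkeeping flags, and your stated final limit matches the lemma exactly).
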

\begin{proof} We can decompose $\Theta_W(x,x)\Theta_b(x,x)$ into telescopic products as follows:
\begin{equation*}
    \begin{split}
        \Theta_W(x,x)\Theta_b(x,x) &= \sum_{\ell=1}^L \|\g^\ell\|^2\|\x^{\ell-1}\|^2  \sum_{\ell'=1}^L \|\g^{\ell'}\|^2\\
        &=  \|\g^L\|^4\|\x^0\|^2\sum_{\ell=1}^L \prod_{j=\ell}^{L-1}\dfrac{\|\g^j\|^4}{\|\g^{j+1}\|^4} \prod_{k=1}^{\ell-1}\dfrac{\|\x^k\|^2}{\|\x^{k-1}\|^2}\\
        &+  \|\g^L\|^4\|\x^0\|^2 \sum_{1\leq \ell_1 < \ell_2 \leq L}\prod_{p=\ell_1}^{\ell_2-1}\dfrac{\|\g^p\|^2}{\|\g^{p+1}\|^2}\prod_{j=\ell_2}^{L-1}\dfrac{\|\g^j\|^4}{\|\g^{j+1}\|^4} \prod_{k=1}^{\ell_2-1}\dfrac{\|\x^k\|^2}{\|\x^{k-1}\|^2}\\
        &+  \|\g^L\|^4\|\x^0\|^2 \sum_{1\leq \ell_1 < \ell_2 \leq L}\prod_{p=\ell_1}^{\ell_2-1}\dfrac{\|\g^p\|^2}{\|\g^{p+1}\|^2}\prod_{j=\ell_2}^{L-1}\dfrac{\|\g^j\|^4}{\|\g^{j+1}\|^4} \prod_{k=1}^{\ell_1-1}\dfrac{\|\x^k\|^2}{\|\x^{k-1}\|^2}
    \end{split}
\end{equation*}
Then, as in the previous lemmas, we can calculate the expectation using the results of Lemmas \ref{lemma:x_ratio}, \ref{lemma:d_ratio} and \ref{lemma:gia}:
\begin{equation*}
\begin{split}
        \mathbb{E}[\Theta_W(x,x)\Theta_b(x,x)]
        &= \sum_{\ell=1}^L \mathbb{E}\Bigl[\prod_{j=\ell}^{L-1}(\mathcal{N}_\delta^j)^2\Bigr] \prod_{k=1}^{\ell-1}\mathbb{E}[\mathcal{N}_x^k] \ +   \sum_{1\leq \ell_1 < \ell_2 \leq L}\prod_{p=\ell_1}^{\ell_2-1}\mathbb{E}[\mathcal{N}_\delta^p\mathcal{N}_x^p]\mathbb{E}\Bigl[\prod_{j=\ell_2}^{L-1}(\mathcal{N}_\delta^j)^2\Bigr] \prod_{k=1}^{\ell_1-1}\mathbb{E}[\mathcal{N}_x^k]\\
        &+ \sum_{1\leq \ell_1 < \ell_2 \leq L}\prod_{p=\ell_1}^{\ell_2-1}\mathbb{E}[\mathcal{N}_\delta^p]\mathbb{E}\Bigl[\prod_{j=\ell_2}^{L-1}(\mathcal{N}_\delta^j)^2\Bigr] \prod_{k=1}^{\ell_1-1}\mathbb{E}[\mathcal{N}_x^k]\\
        & = \sum_{\ell=1}^L a^{2L-\ell-1}\dfrac{n_{\ell-1}}{n_0}\prod_{j=\ell}^{L-1}\Bigl(1+\dfrac{5}{n_j} + \Bigl(\dfrac{1}{M^{3/2}} \Bigr)\Bigr)\\ 
        & +   \sum_{1\leq \ell_1 < \ell_2 \leq L}a^{2L-\ell_1-1}\dfrac{n_{\ell_2-1}}{n_0}\prod_{p=\ell_1}^{\ell_2-1}\Bigl(1+\dfrac{1}{n_p} + \Bigl(\dfrac{1}{M^{3/2}}\Bigr)\Bigr)\prod_{j=\ell_2}^{L-1}\Bigl(1+\dfrac{5}{n_j} + \Bigl(\dfrac{1}{M^{3/2}}\Bigr)\Bigr)\\
        &+ \sum_{1\leq \ell_1 < \ell_2 \leq L}a^{2L-\ell_2-1}\dfrac{n_{\ell_1-1}}{n_0}\prod_{j=\ell_2}^{L-1}\Bigl(1+\dfrac{5}{n_j} + \Bigl(\dfrac{1}{M^{3/2}}\Bigr)\Bigr),
    \end{split}
\end{equation*}
where we denoted $a:={\sigma_w^2}/{2}$. As in Lemma \ref{lemma:theta_b}, we will need to consider the cases with $a\neq 1$ and $a=1$ separately here. 

\textbf{Case 1: $a\neq 1$.} For constant width $M$ the above expression for the expectation simplifies to:
\begin{equation*}
    \begin{split}
        \mathbb{E}[\Theta_W(x,x)\Theta_b(x,x)] & = \sum_{\ell=1}^L \dfrac{n_{\ell-1}}{n_0}a^{2L-\ell-1}x^{L-\ell} +   \sum_{\ell_1=1}^{L-1}a^{2L-\ell_1-1}\sum_{\ell_2=\ell_1+1}^{L}\dfrac{n_{\ell_2-1}}{n_0}y^{\ell_2-\ell_1}x^{L-\ell_2}\\
        &+ \sum_{\ell_1=1}^{L-1}\dfrac{n_{\ell_1-1}}{n_0}\sum_{\ell_2=\ell_1+1}^{L}a^{2L-\ell_2-1}x^{L-\ell_2} \\
        & = a^{2(L-1)}x^{L-1}\Bigl(1+\dfrac{M}{n_0}\dfrac{1}{ax-1} \Bigr) - \dfrac{M}{n_0}\dfrac{a^{L-1}}{ax-1}\\
        & + a^{2(L-1)}x^{L-1}\dfrac{M}{n_0}\dfrac{1}{x-y}\dfrac{ay}{ax-1}
        - a^{2(L-1)}y^{L} \dfrac{M}{n_0}\dfrac{1}{x-y}\dfrac{ay}{ay-1}\\
        & + a^L\dfrac{M}{n_0}\dfrac{1}{x-y}\Bigl(\dfrac{-xy}{ax-1} 
         + \dfrac{y^2}{ay-1}\Bigr) \\
        & + a^{2(L-1)}x^{L-1}\dfrac{1}{ax-1}\Bigl(1 + \dfrac{M}{n_0}\dfrac{1}{ax-1} \Bigr) - \dfrac{a^{L-1}}{ax-1} - \dfrac{a^{L-1}}{ax-1}\dfrac{M}{n_0}(L-2) - \dfrac{M}{n_0}\dfrac{a^L x}{(ax-1)^2}\\
        & = a^{2(L-1)}x^{L-1} \Biggl[ \Bigl(1+\dfrac{M}{n_0}\dfrac{1}{ax-1} \Bigr)\Bigl( 1 + \dfrac{1}{ax-1}\Bigr) + \dfrac{M}{n_0}\dfrac{1}{x-y}\dfrac{ay}{ax-1}\Biggr]\\
        & - a^{2(L-1)}y^{L} \dfrac{M}{n_0}\dfrac{1}{x-y}\dfrac{ay}{ay-1}\\
        & + a^{L-1}\Biggl[ \dfrac{M}{n_0}\dfrac{a}{x-y}\Bigl(\dfrac{-xy}{ax-1} 
         + \dfrac{y^2}{ay-1}\Bigr) - \dfrac{1}{ax-1}\Bigl(1 + \dfrac{M(L-1)}{n_0}\Bigr) - \dfrac{M}{n_0}\dfrac{a x}{(ax-1)^2}\Biggr] \\
         & = a^{2(L-1)}x^{L-1}\dfrac{M}{4\alpha_0}\dfrac{a}{ax-1} \Biggl[ y + \dfrac{4\alpha_0 x}{M} + \dfrac{4x}{(ax-1)M} + O\Bigl(\dfrac{1}{M}\Bigr)\Biggr] \\
         & -a^{2(L-1)}y^L \dfrac{M}{4\alpha_0}\dfrac{a}{ay-1}\Biggl[ y + O\Bigl(\dfrac{1}{M}\Bigr) \Biggr] \\
         & + a^{L-1}\dfrac{M}{4\alpha_0}\dfrac{a}{ax-1} \Biggl[\dfrac{16}{M^2(ay-1)(ax-1)} - \dfrac{4\alpha_0}{Ma}\Bigl(1+\dfrac{L-1}{\alpha_0} \Bigr) \Biggr]\\
         & = \dfrac{M}{4\alpha_0}\dfrac{a}{a-1} \Biggl[ a^{2(L-1)}x^{L-1} - a^{2(L-1)}y^{L} - 4a^{L-1}\lambda + O\Bigl(\dfrac{1}{M}\Bigr)\Biggr],
    \end{split}
\end{equation*}
where we also denoted $x:=1+5/M+O(1/M^{3/2})$ and $y:=1+1/M+O(1/M^{3/2})$.
From the last expression, we see that $\mathbb{E}[\Theta_W(x,x)\Theta_b(x,x)]$ tends to zero if $a<1$, therefore in this case we get
\begin{equation}
    \dfrac{\mathbb{E}[\Theta_W(x,x)\Theta_b(x,x)]}{\mathbb{E}^2[\Theta_b(x,x)]} \xrightarrow[\substack{M\to\infty, L\to\infty, \\L/M \to \lambda\in\mathbb{R}}]{} 0,
\end{equation}
using the result of Lemma \ref{lemma:theta_b} that $\mathbb{E}[\Theta^2_b(x,x)]$ has a finite limit when $a<1$.

On the other hand, if $a > 1$, we can see that $\mathbb{E}[\Theta_W(x,x)\Theta_b(x,x)]$ contains polynomials of $M$ and $L$ of degree not larger than $1$. Therefore, we have 

\begin{equation}
    \dfrac{\mathbb{E}[\Theta_W(x,x)\Theta_b(x,x)]}{a^{2L}M^2L^2/n_0^2} \xrightarrow[\substack{M\to\infty, L\to\infty, \\L/M \to \lambda\in\mathbb{R}}]{} 0,
\end{equation}
which completes the proof for the case when $a\neq 1$.

\textbf{Case 2: $a = 1$.} 
In this case, the expression for the expectation with constant width $M$ is given by:
\begin{equation*}
    \begin{split}
        \mathbb{E}[\Theta_W(x,x)\Theta_b(x,x)] & = \sum_{\ell=1}^L \dfrac{n_{\ell-1}}{n_0}x^{L-\ell} +   \sum_{\ell_1=1}^{L-1}\sum_{\ell_2=\ell_1+1}^{L}\dfrac{n_{\ell_2-1}}{n_0}y^{\ell_2-\ell_1}x^{L-\ell_2}\\
        &+ \sum_{\ell_1=1}^{L-1}\dfrac{n_{\ell_1-1}}{n_0}\sum_{\ell_2=\ell_1+1}^{L}x^{L-\ell_2} \\
        & = \dfrac{M^2}{4\alpha_0}\Biggl[ x^{L-1}\Bigl(\dfrac{9}{25} +O\Bigl(\dfrac{1}{M}\Bigr)\Bigr) - y^L\Bigl(1 +O\Bigl(\dfrac{1}{M}\Bigr)\Bigr) - \dfrac{4\lambda}{5} + \dfrac{16}{25}\Biggr]
    \end{split}
\end{equation*}

\end{proof}

\begin{theorem}[Dispersion of the NTK at initialization]{Consider a fully-connected DNN  of depth $L$ defined in \eqref{eq:nn} initialized as in \eqref{eq:init}. The input dimension is given by $n_0=\alpha_0 M$, the output dimension is $1$, and the hidden layers have constant width $M$.  The activation function in the hidden layers is ReLU, i.e. $\phi(x)= x \mathbbm{1}\{x>0\}$, and the output layer is linear. Assume also that the biases are initialized to zero, i.e. $\sigma_b=0$, and the input data is normalized. Then the dispersion of the NTK at initialization is given by the following expressions: 
\begin{enumerate}
    \item In the \textbf{chaotic phase} ($a:={\sigma_w^2}/{2} > 1$), the NTK dispersion grows exponentially with the depth-to-width ratio $\lambda:={L}/{M}$ as 
    \begin{equation}
        \dfrac{\mathbb{E}[\Theta^2(x,x)]}{\mathbb{E}^2[\Theta(x,x)]}\xrightarrow[\substack{M\to\infty, L\to\infty, \\L/M \to \lambda\in\mathbb{R}}]{} \dfrac{1}{2\lambda} e^{5\lambda} \Biggl( 1  - \dfrac{1}{4\lambda}(1 - e^{-4\lambda}) \Biggr)
    \end{equation}
    \item At the \textbf{EOC} ($a=1$), the NTK dispersion grows exponentially with the depth-to-width ratio $\lambda$ as well, but with a slower rate given by
    \begin{equation}
    \begin{split}
        \dfrac{\mathbb{E}[\Theta^2(x,x)]}{\mathbb{E}^2[\Theta(x,x)]}\xrightarrow[\substack{M\to\infty, L\to\infty, \\L/M \to \lambda\in\mathbb{R}}]{} \dfrac{1}{(1+\alpha_0)^2\lambda} &\Biggl[ e^{5\lambda}\Bigl(\dfrac{1}{2} + \dfrac{16\alpha_0^2+36\alpha_0-25}{200\lambda} \Bigr)\\
        &+ e^\lambda \dfrac{1-4\alpha_0}{8\lambda} + \dfrac{2\alpha_0(4-\alpha_0)}{25\lambda} - \dfrac{2\alpha_0(1+\alpha_0)}{5}\Biggr]
    \end{split}
    \end{equation}
    \item In the \textbf{ordered phase} ($a < 1$), the NTK variance does not grow with $\lambda$ and we have
    \begin{equation}
        \dfrac{\mathbb{E}[\Theta^2(x,x)]}{\mathbb{E}^2[\Theta(x,x)]}\xrightarrow[\substack{M\to\infty, L\to\infty, \\L/M \to \lambda\in\mathbb{R}}]{}  1
    \end{equation}
\end{enumerate}}
\end{theorem}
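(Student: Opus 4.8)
The plan is to write $\Theta(x,x) = \Theta_W(x,x) + \Theta_b(x,x)$ and expand both moments in terms of the three quantities already computed in Lemmas~\ref{lemma:theta_w}, \ref{lemma:theta_b} and \ref{lemma:theta_wb}. Concretely, $\mathbb{E}[\Theta(x,x)] = \mathbb{E}[\Theta_W(x,x)] + \mathbb{E}[\Theta_b(x,x)]$ and
\begin{equation*}
\mathbb{E}[\Theta^2(x,x)] = \mathbb{E}[\Theta_W^2(x,x)] + 2\,\mathbb{E}[\Theta_W(x,x)\Theta_b(x,x)] + \mathbb{E}[\Theta_b^2(x,x)].
\end{equation*}
The whole argument then reduces to tracking the growth rate of each summand in $M$ and $L$ and deciding which component dominates the ratio $\mathbb{E}[\Theta^2]/\mathbb{E}^2[\Theta]$ in each phase.

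First I would treat the \textbf{chaotic phase} ($a > 1$). From Lemma~\ref{lemma:theta_w}, $\mathbb{E}[\Theta_W] \propto a^L L/\alpha_0$, whereas Lemma~\ref{lemma:theta_b} gives $\mathbb{E}[\Theta_b] \propto a^L/(a-1)$; the extra factor of $L$ makes $\Theta_W$ dominate, so $\mathbb{E}[\Theta] \sim \mathbb{E}[\Theta_W]$. For the second moment, Lemma~\ref{lemma:theta_wb} shows $\mathbb{E}[\Theta_W\Theta_b]/\mathbb{E}^2[\Theta_W] \to 0$, and comparing growth rates gives $\mathbb{E}[\Theta_b^2]/\mathbb{E}[\Theta_W^2] = O(L^{-2}) \to 0$, so $\mathbb{E}[\Theta^2] \sim \mathbb{E}[\Theta_W^2]$ and the ratio collapses to the $\Theta_W$ expression from Lemma~\ref{lemma:theta_w}. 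The \textbf{ordered phase} ($a < 1$) is the mirror image: here $a^L L \to 0$ forces $\mathbb{E}[\Theta_W] \to 0$ and $\mathbb{E}[\Theta_W^2] \to 0$, while $\mathbb{E}[\Theta_b] \to 1/(1-a)$ and $\mathbb{E}[\Theta_b^2] \to 1/(1-a)^2$ by Lemma~\ref{lemma:theta_b}; together with $\mathbb{E}[\Theta_W\Theta_b] \to 0$ from Lemma~\ref{lemma:theta_wb}, both moments are governed entirely by $\Theta_b$ and the ratio tends to $1$.

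The delicate case is the \textbf{EOC} ($a = 1$), where $\mathbb{E}[\Theta_W] \to L/\alpha_0$ and $\mathbb{E}[\Theta_b] \to L$ are of the same order, so neither component can be discarded. Here I would use $\mathbb{E}[\Theta] \to L(1+\alpha_0)/\alpha_0$, giving $\mathbb{E}^2[\Theta] \to L^2 (1+\alpha_0)^2/\alpha_0^2$, and then substitute the three EOC limits of $\mathbb{E}[\Theta_W^2]/L^2$, $\mathbb{E}[\Theta_W\Theta_b]/L^2$ and $\mathbb{E}[\Theta_b^2]/L^2$ from the respective lemmas into the expansion of $\mathbb{E}[\Theta^2]$. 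Dividing by $\mathbb{E}^2[\Theta]$ and collecting the coefficients of $e^{5\lambda}$, $e^{\lambda}$ and the rational $\lambda$-terms separately yields the stated formula.

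The main obstacle is the EOC bookkeeping: all three contributions enter at the same order in $L$, and each carries a combination of $e^{5\lambda}$, $e^{\lambda}$ and rational functions of $\lambda$ with $\alpha_0$-dependent coefficients. Simplifying $\tfrac{1}{\alpha_0^2}(\tfrac{1}{2\lambda} - \tfrac{1}{8\lambda^2})e^{5\lambda} + \dots$ into the compact closed form $\tfrac{1}{(1+\alpha_0)^2}[\dots]$ requires matching the $e^{5\lambda}$, $e^{\lambda}$ and constant parts against the claimed coefficients $\tfrac{16\alpha_0^2+36\alpha_0-25}{200\lambda}$, $\tfrac{1-4\alpha_0}{8\lambda}$ and $\tfrac{2\alpha_0(4-\alpha_0)}{25\lambda} - \tfrac{2\alpha_0(1+\alpha_0)}{5}$, which is pure but error-prone algebra. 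By contrast, the chaotic and ordered phases need only the dominance arguments above.
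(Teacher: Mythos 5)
Your proposal is correct and follows essentially the same route as the paper's proof: decompose $\Theta = \Theta_W + \Theta_b$, expand both moments via Lemmas~\ref{lemma:theta_w}, \ref{lemma:theta_b} and \ref{lemma:theta_wb}, use dominance of $\Theta_W$ (chaotic) and $\Theta_b$ (ordered) to collapse the ratio, and at the EOC keep all three second-moment contributions at order $L^2$ and combine them against $\mathbb{E}^2[\Theta] \sim L^2(1+\alpha_0)^2/\alpha_0^2$. The only part left implicit in both your plan and the paper is the routine verification that $\mathbb{E}[\Theta_b^2]/\mathbb{E}[\Theta_W^2] \to 0$ in the chaotic phase, which you correctly note is $O(L^{-2})$.
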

\begin{proof}
We will consider the cases of the ordered phase ($a:={\sigma_w^2}/{2} < 1$), the chaotic phase ($a > 1$) and the EOC ($a=1$) separately. 

\item  \textbf{Case 1: Chaotic phase.} 
Using the results of Lemmas \ref{lemma:theta_w}, \ref{lemma:theta_b}, and \ref{lemma:theta_wb} and taking into account that $a>1$, we obtain the following limit: 
\begin{equation*}
    \begin{split}
     \dfrac{\mathbb{E}[\Theta_b(x,x)]}{\mathbb{E}[\Theta_W(x,x)]} = \dfrac{\dfrac{a^L-1}{a-1}}{a^{L-1} (1 + \dfrac{M}{n_0}(L-1))} = \dfrac{\dfrac{a-a^{-L+1}}{a-1}}{1 + \dfrac{M}{n_0}(L-1)} \xrightarrow[\substack{\\M\to\infty, L\to\infty, \\L/M \to \lambda\in\mathbb{R}}]{} 0
    \end{split}
\end{equation*}
Therefore, recalling that $\Theta(x,x) = \Theta_W(x,x) + \Theta_b(x,x)$, we get the ratio between the complete NTK and its component corresponding to weights:
\begin{equation*}
    \begin{split}
     \dfrac{\mathbb{E}^2[\Theta(x,x)]}{\mathbb{E}^2[\Theta_W(x,x)]}&= \Bigl( 1 + \dfrac{\mathbb{E}[\Theta_b(x,x)]}{\mathbb{E}[\Theta_W(x,x)]}\Bigr)^2 \xrightarrow[\substack{\\M\to\infty, L\to\infty, \\L/M \to \lambda\in\mathbb{R}}]{} 1
    \end{split}
\end{equation*}
Similarly, from Lemmas \ref{lemma:theta_w}, \ref{lemma:theta_b}, \ref{lemma:theta_wb}, we can also obtain the following limit:
\begin{equation*}
    \begin{split}
     \dfrac{\mathbb{E}[\Theta^2(x,x)]}{\mathbb{E}^2[\Theta_W(x,x)]}&= \dfrac{\mathbb{E}[\Theta_W^2(x,x)]}{\mathbb{E}^2[\Theta_W(x,x)]} + \dfrac{\mathbb{E}[\Theta^2_b(x,x)]}{\mathbb{E}^2[\Theta_W(x,x)]} +  \dfrac{\mathbb{E}[\Theta_W(x,x)\Theta_b(x,x)]}{\mathbb{E}^2[\Theta_W(x,x)]} \xrightarrow[\substack{\\M\to\infty, L\to\infty, \\L/M \to  \lambda\in\mathbb{R}}]{}\dfrac{\mathbb{E}[\Theta_W^2(x,x)]}{\mathbb{E}^2[\Theta_W(x,x)]}
    \end{split}
\end{equation*}
Therefore, the dispersion of the NTK is determined by $\Theta_W(x,x)$ in the infinite-depth-and-width limit in case of the initialization in the chaotic phase. Then we have the following expression for the dispersion in the limit:
\begin{equation*}
    \begin{split}
     \dfrac{\mathbb{E}[\Theta^2(x,x)]}{\mathbb{E}^2[\Theta(x,x)]}\xrightarrow[\substack{M\to\infty, L\to\infty, \\L/M \to \lambda\in\mathbb{R}}]{} \dfrac{1}{2\lambda} e^{5\lambda} \Biggl( 1  - \dfrac{1}{4\lambda}(1 - e^{-4\lambda}) \Biggr),
    \end{split}
\end{equation*}
which completes the first part of the proof.
\item  \textbf{Case 2: Ordered phase.} 
In the ordered phase, i.e. if $a<1$, we have that $a^L\to 0$ as $L\to\infty$, so Lemmas \ref{lemma:theta_w}, \ref{lemma:theta_b} and \ref{lemma:theta_wb} suggest different relations between the terms of the NTK:
\begin{equation*}
    \begin{split}
     \dfrac{\mathbb{E}[\Theta_W(x,x)]}{\mathbb{E}[\Theta_b(x,x)]}= \dfrac{a^{L-1} (1 + \dfrac{M}{n_0}(L-1))}{\dfrac{a^L-1}{a-1}} \xrightarrow[\substack{\\M\to\infty, L\to\infty, \\L/M \to \lambda\in\mathbb{R}}]{} 0\\
     \mathbb{E}[\Theta^2_W(x,x)] \xrightarrow[\substack{\\M\to\infty, L\to\infty, \\L/M \to \lambda\in\mathbb{R}}]{} 0,\\ \mathbb{E}[\Theta_W(x,x)\Theta_b(x,x)] \xrightarrow[\substack{\\M\to\infty, L\to\infty, \\L/M \to \lambda\in\mathbb{R}}]{} 0\\
    \mathbb{E}[\Theta^2_b(x,x)] \xrightarrow[\substack{\\M\to\infty, L\to\infty, \\L/M \to \lambda\in\mathbb{R}}]{} \dfrac{1}{(a-1)^2}\\
    \end{split}
\end{equation*}
Therefore, the dispersion of the NTK is determined by the component corresponding to biases $\Theta_b$ in the limit in case of initialization in the ordered phase:
\begin{equation*}
    \begin{split}
     \dfrac{\mathbb{E}[\Theta^2(x,x)]}{\mathbb{E}^2[\Theta(x,x)]} \xrightarrow[\substack{M\to\infty, L\to\infty, \\L/M \to \lambda\in\mathbb{R}}]{} 1,
    \end{split}
\end{equation*}
which completes this part of the proof.

\item  \textbf{Case 3: EOC.} Here we have $a^\ell=1$ for any $\ell\in\mathbb{N}$. Therefore, we can simplify the expressions for expectations from Lemmas \ref{lemma:theta_w}, \ref{lemma:theta_b} and  \ref{lemma:theta_wb} as follows:

\begin{equation*}
\begin{split}
     \mathbb{E}[\Theta_W(x,x)] &= 1 + \dfrac{1}{\alpha_0}(L-1),\\
     \mathbb{E}[\Theta_b(x,x)] &= L.
\end{split}
\end{equation*}
Then the expectation of the complete NTK is given by:
\begin{equation*}
     \mathbb{E}[\Theta(x,x)] = \mathbb{E}[\Theta_W(x,x)] + \mathbb{E}[\Theta_b(x,x)] = \dfrac{L}{\alpha_0}\Bigl( 1 + \alpha_0 + \dfrac{\alpha_0}{L} - \dfrac{1}{L} \Bigr) \propto \dfrac{L}{\alpha_0}(1 + \alpha_0).
\end{equation*}
The squared NTK is given by $\Theta^2(x,x) = \Theta_W^2(x,x) + 2\Theta_W(x,x)\Theta_b(x,x) + \Theta_b^2(x,x)$. Then we need to consider the expectations of all the components of this sum:
\begin{equation*}
    \begin{split}
        \dfrac{\mathbb{E}[\Theta_W^2(x,x)]}{L^2/\alpha_0^2} & \xrightarrow[\substack{M\to\infty, L\to\infty, \\L/M \to \lambda\in\mathbb{R}}]{} e^{5\lambda}\Bigl(\dfrac{1}{2\lambda} - \dfrac{1}{8\lambda^2}\Bigr) + e^\lambda\dfrac{1}{8\lambda^2},\\
        \dfrac{\mathbb{E}[\Theta_b^2(x,x)]}{L^2/\alpha_0^2} & \xrightarrow[\substack{M\to\infty, L\to\infty, \\L/M \to \lambda\in\mathbb{R}}]{} \alpha_0^2 \Bigl( \dfrac{2}{25\lambda^2}e^{5\lambda} - \dfrac{2}{25\lambda^2} - \dfrac{2}{5\lambda}\Bigr),\\
        \dfrac{\mathbb{E}[\Theta_W(x,x)\Theta_b(x,x)]}{L^2/\alpha_0^2} & \xrightarrow[\substack{M\to\infty, L\to\infty, \\L/M \to \lambda\in\mathbb{R}}]{} \dfrac{\alpha_0}{4} \Bigl( \dfrac{9}{25\lambda^2}e^{5\lambda} - \dfrac{1}{\lambda^2}e^\lambda - \dfrac{4}{5\lambda} + \dfrac{16}{25\lambda^2} \Bigr).\\
    \end{split}
\end{equation*}
Putting the above expressions together, we get the following limit for the desired ratio:
\begin{equation*}
\begin{split}
    \dfrac{\mathbb{E}[\Theta^2(x,x)]}{\mathbb{E}^2[\Theta(x,x)]} \xrightarrow[\substack{M\to\infty, L\to\infty, \\L/M \to \lambda\in\mathbb{R}}]{} \dfrac{1}{(1+\alpha_0)^2 \lambda} &\Biggl[ e^{5\lambda}\Bigl(\dfrac{1}{2} + \dfrac{16\alpha_0^2+36\alpha_0-25}{200\lambda} \Bigr) \\
    &+ e^\lambda \dfrac{1-4\alpha_0}{8\lambda} + \dfrac{2\alpha_0(4-\alpha_0)}{25\lambda} - \dfrac{2\alpha_0(1+\alpha_0)}{5}\Biggr],
\end{split}
\end{equation*}
which completes the proof.
\end{proof}

\subsection{Non-diagonal elements of the NTK}\label{appendix:non-diag}
\begin{theorem}[Non-diagonal elements of the NTK]
Consider a fully-connected DNN  of depth $L$ defined in \eqref{eq:nn} initialized as in \eqref{eq:init}. The input dimension is given by $n_0=\alpha_0 M$, the output dimension is $1$, and the hidden layers have constant width $M$.  The activation function in the hidden layers is ReLU, i.e. $\phi(x)= x \mathbbm{1}\{x>0\}$, and the output layer is linear. Assume also that the biases are initialized to zero, i.e. $\sigma_b=0$, and the input data is normalized. Then for the ratio of non-diagonal and diagonal elements of the NTK we have:
\begin{equation*}
    1\geq \lim_{\substack{L\to\infty,M\to\infty\\L/M\to\lambda\in\mathbb{R}}} \dfrac{\mathbb{E}[\Theta(x,\Tilde{x})]}{\mathbb{E}[\Theta(x,x)]} \geq \dfrac{1}{4}
\end{equation*}
Moreover, for the dispersion of the non-diagonal elements we have:
\begin{equation*}
     \lim_{\substack{L\to\infty,M\to\infty\\L/M\to\lambda\in\mathbb{R}}}\dfrac{\mathbb{E}[\Theta^2(x,\Tilde{x})]}{\mathbb{E}^2[\Theta(x,\Tilde{x})]} \leq 16\lim_{\substack{L\to\infty,M\to\infty\\L/M\to\lambda\in\mathbb{R}}} \dfrac{\mathbb{E}[\Theta^2(x,x)]}{\mathbb{E}^2[\Theta(x,x)]}
\end{equation*}
\end{theorem}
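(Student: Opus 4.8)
The plan is to begin from the non-diagonal analogue of the decomposition in Section~\ref{section:proof-ideas},
\begin{equation*}
\Theta(x,\Tilde{x}) = \sum_{\ell=1}^L \langle \g^\ell(x),\g^\ell(\Tilde{x})\rangle\bigl(\langle \x^{\ell-1}(x),\x^{\ell-1}(\Tilde{x})\rangle + 1\bigr),
\end{equation*}
which follows from \eqref{eq:ntk} and \eqref{eq:backprop} exactly as in the diagonal case. The first thing I would exploit is that the NTK is a Gram kernel, $\Theta(x,\Tilde{x}) = \langle\nabla_\w f(x),\nabla_\w f(\Tilde{x})\rangle$, so Cauchy--Schwarz on parameter space gives the pointwise bound $|\Theta(x,\Tilde{x})| \le \sqrt{\Theta(x,x)\Theta(\Tilde{x},\Tilde{x})}$. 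Applying Cauchy--Schwarz a second time in expectation, together with the symmetry $\mathbb{E}[\Theta(x,x)] = \mathbb{E}[\Theta(\Tilde{x},\Tilde{x})]$ and $\mathbb{E}[\Theta^2(x,x)] = \mathbb{E}[\Theta^2(\Tilde{x},\Tilde{x})]$ for normalized data, yields both the upper bound $\mathbb{E}[\Theta(x,\Tilde{x})] \le \mathbb{E}[\Theta(x,x)]$ and the second-moment inequality $\mathbb{E}[\Theta^2(x,\Tilde{x})] \le \mathbb{E}[\Theta^2(x,x)]$, with no recursion at all.

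With these in hand the dispersion bound \eqref{eq:non_diag_disp} is pure assembly: given the lower bound $\mathbb{E}[\Theta(x,\Tilde{x})] \ge \tfrac14\mathbb{E}[\Theta(x,x)]$, I bound the numerator with the second-moment inequality and the denominator with the squared lower bound,
\begin{equation*}
\frac{\mathbb{E}[\Theta^2(x,\Tilde{x})]}{\mathbb{E}^2[\Theta(x,\Tilde{x})]} \le \frac{\mathbb{E}[\Theta^2(x,x)]}{\bigl(\tfrac14\mathbb{E}[\Theta(x,x)]\bigr)^2} = 16\,\frac{\mathbb{E}[\Theta^2(x,x)]}{\mathbb{E}^2[\Theta(x,x)]},
\end{equation*}
and pass to the limit. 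Thus the entire theorem reduces to the lower bound on the first-moment ratio.

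For that lower bound I would compute the mean-field expectations of the inner products in the decomposition. The backpropagated correlation contracts multiplicatively by $g(\rho_\ell) = \tfrac1\pi(\pi/2 + \arcsin\rho_\ell)$ per layer (the normalized expectation of $\phi'(u)\phi'(v)$ for correlated unit Gaussians), while the forward cosine similarity obeys $\rho_\ell = r(\rho_{\ell-1})$; this is precisely why $r$ and $g$ enter \eqref{eq:non_diag_ord}. Carrying these through the telescoping products of Lemmas~\ref{lemma:x_ratio}, \ref{lemma:d_ratio} and~\ref{lemma:gia} and separating the weight and bias parts gives
\begin{equation*}
\frac{\mathbb{E}[\Theta(x,\Tilde{x})]}{\mathbb{E}[\Theta(x,x)]} = \frac{\sum_{\ell}a^{L-\ell}\bigl(\prod_{k=\ell}^{L-1}g(\rho_k)\bigr)\bigl(\rho_{\ell-1}\,\mathbb{E}\|\x^{\ell-1}\|^2 + 1\bigr)}{\sum_{\ell}a^{L-\ell}\bigl(\mathbb{E}\|\x^{\ell-1}\|^2 + 1\bigr)},
\end{equation*}
a mediant of the $\Theta_W$- and $\Theta_b$-ratios, each a positive-weighted average of the factors $\rho_{\ell-1}\prod_{k=\ell}^{L-1}g(\rho_k)$. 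In the ordered phase the bias part dominates the denominator and its weights $a^{L-\ell}$ concentrate on $\ell\approx L$, where the factors tend to $1$, so the ratio $\to 1$; in the chaotic phase and at the EOC the weight part dominates, its weights are asymptotically uniform in $\ell$, and the ratio becomes a Riemann sum of $\rho_{\ell-1}\prod_{k=\ell}^{L-1}g(\rho_k)$. In each case the dominant component pins the limit above $1/4$.

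The hard part is the asymptotics of $\prod_{k=\ell}^{L-1}g(\rho_k)$. Since $r'(1) = g(1) = 1$, the fixed point $\rho = 1$ is only marginally stable, so I must analyze the slow convergence $\rho_k \to 1$ directly: expanding $r$ near $1$ gives $1 - \rho_{k+1} \approx (1-\rho_k)\bigl(1 - \tfrac{\sqrt{2(1-\rho_k)}}{\pi}\bigr)$, whence $1 - \rho_k \sim 2\pi^2/k^2$, $1 - g(\rho_k) = \tfrac1\pi\arccos\rho_k \sim 2/k$, and therefore $\prod_{k=\ell}^{L-1}g(\rho_k)\to(\ell/L)^2$ along $\ell = \beta L$. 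The uniform-weight average then converges to $\int_0^1\beta^2\,d\beta = 1/3 > 1/4$, giving the claimed bound with room to spare. The remaining care is to justify exchanging this continuum limit with the width limit and to confirm that the $O(1/M)$ correlation corrections of Lemma~\ref{lemma:gia} do not disturb the estimate; these are the points I expect to be most delicate, whereas the constant $1/4$ itself is a deliberately loose consequence of the true value $1/3$.
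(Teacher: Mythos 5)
Your skeleton largely tracks the paper's proof: the same layerwise decomposition of $\Theta(x,\Tilde{x})$ into $\sum_\ell \langle \g^\ell,\Tilde{\g}^\ell\rangle(\langle\x^{\ell-1},\Tilde{\x}^{\ell-1}\rangle+1)$, the same mean-field correlation maps (your $r$ and $g$ are the appendix's $g$ and $f$), and the same assembly of the factor $16$ from the first-moment lower bound plus the inequality $\mathbb{E}[\Theta^2(x,\Tilde{x})]\le\mathbb{E}[\Theta^2(x,x)]$. Your Cauchy--Schwarz derivation of the upper bound and of the second-moment inequality is in fact more explicit than the paper, which simply asserts these steps; that part is a genuine (if small) improvement. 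The real divergence is in the lower bound: the paper bounds the finite-$L$ sums directly (via a Chebyshev-sum-inequality argument giving $\sum_\ell(\cdots)\ge L/4$), whereas you compute the limiting value of the Riemann sum by analyzing the marginally stable fixed point $\rho=1$.

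That fixed-point computation contains a concrete error which changes your answer. Since $r'(t)=g(t)=1-\arccos(t)/\pi$, the correct expansion is
\begin{equation*}
1-r(1-\epsilon)=\int_{1-\epsilon}^{1}\Bigl(1-\tfrac{\arccos t}{\pi}\Bigr)\,dt
=\epsilon-\frac{1}{\pi}\int_{0}^{\epsilon}\arccos(1-s)\,ds
=\epsilon\Bigl(1-\frac{2\sqrt{2\epsilon}}{3\pi}\Bigr)+O(\epsilon^{2}),
\end{equation*}
with coefficient $\tfrac{2\sqrt{2}}{3\pi}$, not your $\tfrac{\sqrt{2}}{\pi}$: keeping only $\arccos(1-\epsilon)\approx\sqrt{2\epsilon}$ and $\sqrt{1-(1-\epsilon)^2}\approx\sqrt{2\epsilon}$ drops corrections that enter at exactly the $\epsilon^{3/2}$ order you need. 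Propagating the correct constant through your own argument gives $1-\rho_k\sim\tfrac{9\pi^2}{2k^2}$ (not $\tfrac{2\pi^2}{k^2}$), hence $1-g(\rho_k)=\arccos(\rho_k)/\pi\sim 3/k$ (not $2/k$), hence $\prod_{k=\ell}^{L-1}g(\rho_k)\to(\ell/L)^{3}$ along $\ell=\beta L$, and the uniform average converges to $\int_0^1\beta^{3}\,d\beta=\tfrac14$ — not $\tfrac13$. So in the chaotic phase and at the EOC the limiting ratio is \emph{exactly} $1/4$: the theorem's constant is tight, not loose. This is not merely cosmetic for your proof. You explicitly rely on the purported slack ($1/3>1/4$) to absorb the steps you defer — the $O(1/M)$ GIA corrections, the interchange of the depth and width limits, and the fact (noted in the paper's proof) that $\mathbb{E}[\rho_x^\ell]$ approaches $g^{\circ\ell}(\rho_x^0)$ from below. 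With the limit sitting exactly on the bound there is no slack, and those corrections must be shown to vanish exactly in the limit (as the paper argues via per-layer corrections $c_\ell/M$ with $c_\ell\to0$); any residual loss in your argument would yield a bound strictly below $1/4$ and the proof would fail.
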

\begin{proof}
The non-diagonal element of the NTK on point $x$ and $\Tilde{x}$ is given by
\begin{equation*}
    \Theta(x,\Tilde{x}) = \sum_{\ell=1}^L \langle \g^\ell,\Tilde{\g}^\ell \rangle \langle \x^{\ell-1},\Tilde{\x}^{\ell-1}\rangle + \sum_{\ell=1}^L \langle \g^\ell,\Tilde{\g}^\ell \rangle,
\end{equation*}
where the activations and the backpropagated errors with tilde correspond to $\Tilde{x}$. Same as in Lemma \ref{lemma:x_ratio}, we can write the following for the involved dot products:
\begin{equation*}
    \langle \x^{\ell},\Tilde{\x}^{\ell} \rangle = \dfrac{\sigma_w^2}{n_{\ell-1}} \|\x^{\ell-1}\| \|\Tilde{\x}^{\ell-1}\| \sum_{i=1}^{n_\ell} \phi(\mathcal{U}_i^\ell)\phi(\Tilde{\mathcal{U}}_i^\ell)
\end{equation*}
We notice that in this case $\mathcal{U}_i^\ell \sim \mathcal{N}(0,1)$ and $\Tilde{\mathcal{U}}_i^\ell\sim \mathcal{N}(0,1)$ are correlated variables and the covariance is given by $\rho_x^{\ell-1} := \frac{\langle \x^{\ell-1},\Tilde{\x}^{\ell-1} \rangle}{\|\x^{\ell-1}\| \|\Tilde{\x}^{\ell-1}\|}$. The distribution of $\mathcal{U}_i^\ell$ and $\Tilde{\mathcal{U}}_i^\ell$ depends only on the angle between the activations and not on the norms.

Assuming $\rho_x^{\ell-1}$ is given, we can calculate the expectation of $\phi(\mathcal{U}_i^\ell)\phi(\Tilde{\mathcal{U}}_i^\ell)$:
\begin{equation*}
    \mathbb{E}[\phi(\mathcal{U}_i^\ell)\phi(\Tilde{\mathcal{U}}_i^\ell) \mid \rho_x^{\ell-1} ] = \dfrac{1}{2\pi}\bigl(\sqrt{1-(\rho_x^{\ell-1})^2} + \rho_x^{\ell-1}\pi/2 + \rho_x^{\ell-1} \arcsin \rho_x^{\ell-1} \bigr)
\end{equation*}
Then, denoting $g(x) := \frac{1}{\pi}\bigl(\sqrt{1-x^2} + x\pi/2 + x \arcsin x\bigr)$, we have 
\begin{equation*}
    \mathbb{E}\Bigl[\dfrac{\langle \x^{\ell},\Tilde{\x}^{\ell} \rangle}{\langle \x^{\ell-1},\Tilde{\x}^{\ell-1}\rangle } \Bigr] = \dfrac{\sigma_w^2}{2}\dfrac{n_\ell}{n_{\ell-1}} \mathbb{E}\Bigl[ \dfrac{g(\rho_x^{\ell-1})}{\rho_x^{\ell-1}}\Bigr]
\end{equation*}
We can reason in the same way to find expected dot products of the backpropagated errors:
\begin{equation*}
    \langle \g^\ell,\Tilde{\g}^\ell\rangle = \dfrac{\sigma_w^2}{n_\ell} \|\g^{\ell+1}\| \|\Tilde{\g}^{\ell+1}\|\sum_{i=1}^{n_\ell}\phi'(\mathcal{U}_i^\ell)\phi'(\Tilde{\mathcal{U}}_i^\ell) \mathcal{V}_i^{\ell+1}\Tilde{\mathcal{V}}_i^{\ell+1}
\end{equation*} 
We can also calculate the involved expectations:
\begin{equation*}
    \begin{split}
        \mathbb{E}[\phi'(\mathcal{U}_i^\ell)\phi'(\Tilde{\mathcal{U}}_i^\ell)\mid \rho_x^{\ell-1}] &= \dfrac{1}{2\pi} \Bigl( \dfrac{\pi}{2} + \arcsin \rho_x^{\ell-1} \Bigr),\\
        \mathbb{E}[\mathcal{V}_i^{\ell}\Tilde{\mathcal{V}}_i^{\ell} \mid \rho_\delta^{\ell} ] &= \rho_\delta^{\ell} := \dfrac{\langle \g^\ell,\Tilde{\g}^\ell\rangle}{\|\g^{\ell}\| \|\Tilde{\g}^{\ell}\|}
    \end{split}
\end{equation*}
And, using the above expressions, we get 
\begin{equation*}
    \mathbb{E}\Bigl[ \dfrac{\langle \g^\ell,\Tilde{\g}^\ell\rangle}{\langle \g^{\ell+1},\Tilde{\g}^{\ell+1}\rangle}\Bigr] = \dfrac{\sigma_w^2}{2}\mathbb{E}\Bigl[\dfrac{1}{\pi}\Bigl( \dfrac{\pi}{2} + \arcsin \rho_x^{\ell-1} \Bigr)\Bigr]
\end{equation*}

We also need to consider the expectation of $\rho_x^{\ell}$:
\begin{equation*}
    \mathbb{E}[\rho_x^{\ell} \mid \rho_x^{\ell-1}] = \mathbb{E}\Biggl[ \dfrac{\sum_i \phi(\mathcal{U}_i^\ell)\phi(\Tilde{\mathcal{U}}_i^\ell)}{\sqrt{\sum_i \phi^2(\mathcal{U}_i^\ell)}\sqrt{\sum_i \phi^2(\Tilde{\mathcal{U}}_i^\ell)}} \mid \rho_x^{\ell-1} \Biggr] \xrightarrow[n_{\ell-1}\to\infty]{} g(\rho_x^{\ell-1}),
\end{equation*}
where the correction to the above expectation for finite width is of order $O(1/M)$ since the components approach normality with this rate. Moreover, the estimator of correlation coefficient has a negative bias, therefore $\mathbb{E}[\rho_x^{\ell} \mid \rho_x^{\ell-1}]$ approaches $g(\rho_x^{\ell-1})$ from below with $n_{\ell-1}\to\infty$. Then we have
\begin{equation*}
\begin{split}
    \mathbb{E}\Bigl[\langle \x^{\ell},\Tilde{\x}^{\ell} \rangle \Bigr] = \langle \x^{0},\Tilde{\x}^{0} \rangle  \mathbb{E}\Bigl[ \prod_{k=1}^\ell \dfrac{\langle \x^{k},\Tilde{\x}^{k} \rangle}{\langle \x^{k-1},\Tilde{\x}^{k-1}\rangle } \Bigr] &= \langle \x^{0},\Tilde{\x}^{0} \rangle a^{\ell} \dfrac{n_\ell}{n_{0}} \mathbb{E}\Bigl[ \dfrac{g(\rho_x^{\ell-1})}{\rho_x^0}\prod_{k=0}^{\ell-2} \dfrac{g(\rho^{k})}{\rho^{k+1}}  \Bigr] \geq \|\x^{0}\| \|\Tilde{\x}^{0}\|a^{\ell} \dfrac{n_\ell}{n_{0}} \mathbb{E}\bigl[\rho_x^{\ell}\bigr] 
\end{split}
\end{equation*}
Similarly, denoting $f(x) := \frac{1}{\pi}(\pi/2+\arcsin x)$, we get the following for the products of backpropagated errors:
\begin{equation*}
    \mathbb{E}[\langle \g^\ell,\Tilde{\g}^\ell\rangle ] \geq a^{L-\ell} \prod_{k=\ell}^{L-1}f\bigl(\mathbb{E}[\rho_x^{k-1}]\bigr)
\end{equation*}
Now we notice that $\mathbb{E}[\rho_x^\ell]\to g^{\circ \ell}(\rho^0_x)$ not only if $M\to\infty$ but also if $\ell\to\infty$ with finite $M$, where $g^{\circ k}$ denotes composition of the function $k$ times. Indeed $g(x)$ is a monotonically increasing function with $g(x)\geq x$ and a single fixed point at $x=1$, so we have $\mathbb{E}[\rho_x^{\ell}]\to1$ and $g^{\circ\ell}(\rho^0_x)\to 1$ if $\ell\to\infty$. In other words, if $\mathbb{E}[\rho_x^\ell]/g^{\circ\ell}(\rho^0_x) = 1+c_\ell/M$ for some coefficients $c_\ell$, then $c_\ell\to 0$ as $\ell\to\infty$. Therefore, we can replace $\mathbb{E}[\rho_x^\ell]$ with $g^{\circ \ell}(\rho^0_x)$ in the above bounds to obtain the infinite-depth-and-width limit.

Putting everything together, we can write the following bound for the expectation of a non-diagonal element of the NTK
\begin{equation*}
\begin{split}
    \lim_{\substack{L\to\infty,M\to\infty\\L/M\to\lambda\in\mathbb{R}}}[\Theta(x,\Tilde{x})] \geq \lim_{\substack{L\to\infty,M\to\infty\\L/M\to\lambda\in\mathbb{R}}} \Biggr[\|\x^0\| \|\Tilde{\x}^0\|a^{L-1}\sum_{\ell=1}^L\dfrac{n_{\ell-1}}{n_0}g^{\circ{\ell-1}}(\rho^0_x) \prod_{k=\ell}^{L-1}f\bigl(g^{\circ(k-1)}(\rho_x^0)\bigr)+\\  \sum_{\ell=1}^La^{L-\ell}\prod_{k=\ell}^{L-1}f\bigl(g^{\circ(k-1)}(\rho_x^0)\bigr)\Biggr],
\end{split}
\end{equation*}
Now studying the expressions above we can find the following bounds: 
\begin{equation*}
    1\geq \lim_{\substack{L\to\infty,M\to\infty\\L/M\to\lambda\in\mathbb{R}}}\dfrac{\mathbb{E}[\Theta(x,\Tilde{x})]}{\mathbb{E}[\Theta(x,x)]} \geq \dfrac{1}{4},
\end{equation*}
The upper bound is trivial. We obtain the lower bound in case of initialization in the chaotic phase by noticing that $\sum_{\ell=1}^L g^{\circ{\ell-1}}(\rho^0_x)\prod_{k=\ell}^{L-1}f\bigl(g^{\circ(k-1)}(\rho_x^0)\bigr) \geq L/4$ for $L\geq 2$, which, by Chebyshev's sum inequality, gives the maximal ratio between diagonal and non-diagonal elements of the NTK, since $\mathbb{E}[\Theta_W(x,x)] = a^{L-1}\sum_{\ell=1}^L \dfrac{n_{\ell-1}}{n_0}$. In the ordered phase, we have $\sum_{\ell=1}^L \prod_{k=\ell}^{L-1}f\bigl(g^{\circ(k-1)}(\rho_x^0)\bigr) \geq L/4$ and $\mathbb{E}[\Theta_b(x,x)] = \sum_{\ell=1}^L a^{L-\ell}$, which gives the same bound.

Moreover, it is easy to see that $\mathbb{E}[\Theta^2(x,\Tilde{x})]\leq \mathbb{E}[\Theta^2(x,x)]$, therefore we can write
\begin{equation*}
   \lim_{\substack{L\to\infty,M\to\infty\\L/M\to\lambda\in\mathbb{R}}}\dfrac{\mathbb{E}[\Theta^2(x,\Tilde{x})]}{\mathbb{E}^2[\Theta(x,\Tilde{x})]} \leq 16 \lim_{\substack{L\to\infty,M\to\infty\\L/M\to\lambda\in\mathbb{R}}}\dfrac{\mathbb{E}[\Theta^2(x,x)]}{\mathbb{E}^2[\Theta(x,x)]}
\end{equation*}
\end{proof}

\subsection{Training dynamics of the NTK}\label{appendix:train}
\begin{theorem}[GD step of the NTK]
Consider a fully-connected DNN  of depth $L$ defined in \eqref{eq:nn} initialized as in \eqref{eq:init}. The input dimension is given by $n_0=\alpha_0 M$, the output dimension is $1$, and the hidden layers have constant width $M$.  The activation function in the hidden layers is ReLU, i.e. $\phi(x)= x \mathbbm{1}\{x>0\}$, and the output layer is linear. Assume also that the biases are initialized to zero, i.e. $\sigma_b=0$, and the input data is normalized. Then, if we perform a GD step on a point $(x,y)\in\mathcal{D}$ with learning rate $\eta$, the following holds for the changes of the corresponding element of the NTK:
\begin{enumerate}
    \item In the \textbf{chaotic phase} ($a:={\sigma_w^2}/{2} > 1$), the changes to the NTK value are infinite in the limit for a constant learning rate:
     \begin{equation}
            \dfrac{\mathbb{E}[\Delta{\Theta}(x,x)] }{\mathbb{E}[{\Theta}(x,x)] }  \xrightarrow[\substack{M\to\infty, L\to\infty, \\L/M \to \lambda\in\mathbb{R}}]{} \infty
        \end{equation}
    The scaling of the learning rate needed to avoid the infinite limit is given by $\eta=O(a^{-L})$, which tends to zero with depth. 
    \item In the \textbf{ordered phase} ($a < 1$), the NTK stays constant in the limit:
       \begin{equation}
            \dfrac{\mathbb{E}[\Delta{\Theta}(x,x)] }{\mathbb{E}[{\Theta}(x,x)] }  \xrightarrow[\substack{M\to\infty, L\to\infty, \\L/M \to \lambda\in\mathbb{R}}]{} 0
        \end{equation}
\end{enumerate}
\end{theorem}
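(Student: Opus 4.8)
The plan is to linearize the kernel's evolution in the learning rate $\eta$ and reduce everything to the telescoping-product machinery already used for $\mathbb{E}[\Theta(x,x)]$. A single GD step on $(x,y)$ changes the parameters by $\Delta\W^\ell=-\eta\gamma\,\g^\ell(\x^{\ell-1})^T$ and $\Delta\bias^\ell=-\eta\gamma\,\g^\ell$, where $\gamma:=\partial\mathcal{L}/\partial f(x)$ and $\g^\ell,\x^{\ell-1}$ are evaluated at $x$ before the step. Starting from the decomposition $\Theta(x,x)=\sum_{\ell=1}^L\|\g^\ell\|^2(\|\x^{\ell-1}\|^2+1)$, I would write
\begin{equation*}
\Delta\Theta(x,x)=\sum_{\ell=1}^L\Bigl[2\langle\g^\ell,\Delta\g^\ell\rangle\,(\|\x^{\ell-1}\|^2+1)+2\|\g^\ell\|^2\langle\x^{\ell-1},\Delta\x^{\ell-1}\rangle\Bigr]+O(\eta^2),
\end{equation*}
and compute $\Delta\x^\ell,\Delta\g^\ell$ to first order from the perturbed recursions. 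Since ReLU is piecewise linear, the activation pattern is almost surely frozen under an infinitesimal step, so all same-layer second derivatives vanish and only cross-layer couplings remain: writing $D^{\ell}=\mathrm{diag}(\phi'(\h^{\ell}))$, the forward perturbation satisfies $\Delta\h^\ell=-\eta\gamma(\|\x^{\ell-1}\|^2+1)\g^\ell+\W^\ell D^{\ell-1}\Delta\h^{\ell-1}$, and the backward error obeys a symmetric recursion.

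The crucial simplification is the homogeneity identity $\langle\x^\ell,\g^\ell\rangle=f(x)$, which holds for every $\ell$ because $f$ is degree-one homogeneous in each preactivation $\h^\ell$ (here the assumption $\sigma_b=0$ is essential) and $\g^\ell$ is supported where $\phi'(\h^\ell)=1$. Inserting it, each first-order term in $\Delta\Theta$ acquires an explicit factor $f(x)$ and collapses to sums over pairs of layers of products of $\|\x^{\cdot}\|^2$, $\|\g^{\cdot}\|^2$ and same-layer inner products. For the squared loss $\gamma=f(x)-y$, so $\mathbb{E}[\Delta\Theta]=-2\eta\,\mathbb{E}[f(x)\,(\cdots)]+O(\eta^2)$; the term linear in $y$ is odd in $\W^L$ and vanishes in expectation, leaving an even moment that one integrates over $\W^L$ using $\mathbb{E}_{\W^L}[f(x)^2]=\tfrac{\sigma_w^2}{M}\|\x^{L-1}\|^2$. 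Each resulting summand is then a telescoping product $\|\x^0\|^2\prod_k\mathcal{N}_x^k\prod_p\mathcal{N}_\delta^p$ --- now with one norm ratio appearing squared, reflecting the extra factor $f(x)^2$ --- whose expectation follows from Lemmas \ref{lemma:x_ratio}, \ref{lemma:d_ratio} together with the correlation corrections of Lemma \ref{lemma:gia}.

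Counting powers of $a=\sigma_w^2/2$, the dominant contribution telescopes (independently of the summation index) to $\mathbb{E}[\Delta\Theta]\asymp\eta\,\lambda\,a^{2L}$, up to the width factors $(1+5/M)^{\cdots}$ and $(1+1/M)^{\cdots}$ that converge to exponentials in $\lambda$. Dividing by $\mathbb{E}[\Theta(x,x)]$ --- which scales like $a^{L}L/\alpha_0$ in the chaotic phase (dominated by $\Theta_W$, Lemma \ref{lemma:theta_w}) and tends to the finite value $1/(1-a)$ in the ordered phase (dominated by $\Theta_b$, Lemma \ref{lemma:theta_b}) --- leaves a relative change that grows or decays like a positive power of $a^{L}$ times subexponential factors. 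Hence the ratio diverges for $a>1$ and vanishes for $a<1$, and the balancing learning rate $\eta=O(a^{-L})$ drops out directly.

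I expect the main difficulty to lie in the second and third steps. The perturbation recursions couple every layer, so $\Delta\Theta$ unfolds into a large number of cross-terms, and organizing them into telescoping products while faithfully retaining the $O(1/M)$ gradient-dependence corrections of Lemma \ref{lemma:gia} --- which do not disappear in the infinite-depth-and-width limit --- is the technical core. One must also verify that the many subleading terms obey the same $a^{2L}$ ceiling, so that no cancellation among them can upset the $\infty$ versus $0$ dichotomy; establishing this uniform bound, rather than the leading-order heuristic itself, is where the real work sits.
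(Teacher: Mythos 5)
Your proposal follows essentially the same route as the paper's proof: linearize the kernel change in $\eta$, use the frozen ReLU activation pattern so that only cross-layer couplings survive, invoke the homogeneity identity $\langle\x^\ell,\g^\ell\rangle=f(x)$ together with the expectation of $f(x)^2$ over the last layer, and reduce everything to telescoping products of $\mathcal{N}_x^k$, $\mathcal{N}_\delta^p$ handled by Lemmas \ref{lemma:x_ratio}, \ref{lemma:d_ratio} and \ref{lemma:gia}, arriving at the same power count ($\mathbb{E}[\Delta\Theta]\propto a^{2L}$ against $a^L LM/n_0$ in the chaotic phase and against the finite limit $1/(1-a)$ in the ordered phase). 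The uniform-bound difficulty you correctly flag at the end is resolved in the paper not by full bookkeeping of all cross terms but by one-sided estimates: in the chaotic phase a lower bound keeping only the positive adjacent-layer terms $\ell'=\ell+1$, and in the ordered phase an upper bound in which the discarded contributions are shown to have zero expectation.
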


\begin{proof}
A derivative of the NTK in gradient flow can be expanded as follows:
\begin{equation*}
    \dot{\Theta}(x,x) = \sum_{\ell=1}^L \Bigl(\sum_{i,j}\dfrac{\partial \Theta(x,x)}{\partial \W_{ij}^\ell} \dot{\W}_{ij}^\ell  + \sum_i \dfrac{\partial \Theta(x,x)}{\partial \bias_i^\ell} \dot{\bias}_i^\ell\Bigr),
\end{equation*}
where the parameters change in the direction of the negative gradient:
\begin{equation*}
\begin{split}
    \dot{\W}_{ij}^\ell = - \dfrac{\partial \mathcal{L}(\mathcal{D})}{\partial \W_{ij}^\ell}
    , \quad \dot{\bias}_{i}^\ell = - \dfrac{\partial \mathcal{L}(\mathcal{D})}{\partial \bias_{i}^\ell}, \quad i=1,\dots n_\ell,\  j=1,\dots,n_{\ell-1}, \ \ell=1,\dots L
\end{split}
\end{equation*}
If we now assume that the gradient descent step is performed on a single point of the dataset $x$, which is the same point for which the NTK is calculated, we have:
\begin{equation*}
\begin{split}
    \dot{\W}_{ij}^\ell &= - \dfrac{\partial \mathcal{L}(x)}{\partial \W_{ij}^\ell} = - \dfrac{\partial \mathcal{L}(x)}{\partial f(x)}\dfrac{\partial f(x)}{\partial \W_{ij}^\ell} =  -\dfrac{\partial \mathcal{L}(x)}{\partial f(x)}\g_i^\ell \x_j^{\ell-1}, \\
    \dot{\bias}_{i}^\ell &= -\dfrac{\partial \mathcal{L}(x)}{\partial \bias_{i}^\ell} = - \dfrac{\partial \mathcal{L}(x}{\partial f(x)}\dfrac{\partial f(x)}{\partial \bias_{i}^\ell}  = - \dfrac{\partial \mathcal{L}(x)}{\partial f(x)}\g_i^\ell
\end{split}
\end{equation*}
It remains to calculate the derivatives of the NTK with resepect to the parameters. The involved terms are:
\begin{equation*}
    \dfrac{\partial \Theta_W(x,x)}{\partial \W_{ij}^\ell} =  \sum_{\ell'}\sum_{i',j'}\dfrac{\partial}{\partial \W_{ij}^\ell}\Bigl(\dfrac{\partial f(x)}{\partial \W_{i'j'}^{\ell'}} \Bigr)^2 = 2 \sum_{\ell'}\sum_{i',j'}\dfrac{\partial f(x)}{\partial \W_{i'j'}^{\ell'}} \dfrac{\partial^2 f(x)}{\partial \W_{ij}^\ell\W_{i'j'}^{\ell'}},
\end{equation*}

\begin{equation*}
    \dfrac{\partial \Theta_W(x,x)}{\partial \bias_{k}^\ell} =  \sum_{\ell'}\sum_{i',j'}\dfrac{\partial}{\partial \bias_{k}^\ell}\Bigl(\dfrac{\partial f(x)}{\partial \W_{i'j'}^{\ell'}} \Bigr)^2 = 2 \sum_{\ell'}\sum_{i',j'}\dfrac{\partial f(x)}{\partial \W_{i'j'}^{\ell'}} \dfrac{\partial^2 f(x)}{\partial \bias_k^\ell\W_{i'j'}^{\ell'}},
\end{equation*}

\begin{equation*}
    \dfrac{\partial \Theta_b(x,x)}{\partial \bias_{k}^\ell} =  \sum_{\ell'}\sum_{i'}\dfrac{\partial}{\partial \bias_{k}^\ell}\Bigl(\dfrac{\partial f(x)}{\partial \bias_{i'}^{\ell'}} \Bigr)^2 = 2 \sum_{\ell'}\sum_{i'}\dfrac{\partial f(x)}{\partial \bias_{i'}^{\ell'}} \dfrac{\partial^2 f(x)}{\partial \bias_k^\ell\bias_{i'}^{\ell'}},
\end{equation*}

\begin{equation*}
    \dfrac{\partial \Theta_b(x,x)}{\partial \W_{ij}^\ell} =  \sum_{\ell'}\sum_{i',j'}\dfrac{\partial}{\partial \W_{ij}^\ell}\Bigl(\dfrac{\partial f(x)}{\partial \bias_{i'}^{\ell'}} \Bigr)^2 = 2 \sum_{\ell'}\sum_{i'}\dfrac{\partial f(x)}{\partial \bias_{i'}^{\ell'}} \dfrac{\partial^2 f(x)}{\partial \W_{ij}^\ell\bias_{i'}^{\ell'}}
\end{equation*}
To calculate these terms, we need to find the second derivatives of the DNN's output function. 

\begin{equation*}
    \dfrac{\partial^2 f(x)}{\partial \W_{ij}^\ell\W_{i'j'}^{\ell'}} = \g_{i'}^{\ell'}\dfrac{\partial \x_{j'}^{\ell'-1}}{\partial \W_{ij}^\ell} + \x_{j'}^{\ell'-1}\dfrac{\partial \g_{i'}^{\ell'}}{\partial \W_{ij}^\ell} = \mathbbm{1}_{\ell<\ell'}\g_{i'}^{\ell'}\x_j^{\ell-1}\dfrac{\partial \x_{j'}^{\ell'-1}}{\partial \h_i^\ell} + \mathbbm{1}_{\ell>\ell'}\g_i^\ell\x_{j'}^{\ell'-1}\phi'(\h_j^{\ell-1})\dfrac{\partial \g_{i'}^{\ell'}}{\partial \g_j^{\ell-1}}, 
\end{equation*}
In the above equation the first term is non-zero only in case $\ell'>\ell$ and the second term is non-zero only if $\ell'<\ell$. Then we can write the following:
\begin{equation*}
\begin{split}
    \sum_\ell\sum_{i,j}\dfrac{\partial \Theta_W(x,x)}{\partial \W_{ij}^\ell}\dot{\W}_{ij}^\ell = &-\dfrac{\partial \mathcal{L}(x)}{\partial f(x)} \sum_{\ell'>\ell}\|\g^{\ell'}\|^2 \|\x^{\ell-1}\|^2 \sum_i \g_i^\ell \dfrac{\partial \|\x^{\ell'-1}\|^2}{\partial \h_i^\ell}\\
     &-\dfrac{\partial \mathcal{L}(x)}{\partial f(x)} \sum_{\ell>\ell'}\|\g^{\ell}\|^2 \|\x^{\ell'-1}\|^2 \sum_j \x_j^{\ell-1} \dfrac{\partial \|\g^{\ell'}\|^2}{\partial \g_j^{\ell-1}}
\end{split}
\end{equation*}
Opening the remaining parts of the derivative in the same way, we obtain the following expression:
\begin{equation*}
\begin{split}
    \dot{\Theta}(x,x) = -\dfrac{\partial \mathcal{L}(x)}{\partial f(x)} \Biggl( &\sum_{\ell'>\ell}\bigl(\|\g^{\ell'}\|^2 \|\x^{\ell-1}\|^2 + \|\g^{\ell'}\|^2\bigr)\sum_i \g_i^\ell \dfrac{\partial \|\x^{\ell'-1}\|^2}{\partial \h_i^\ell}\\
      + &\sum_{\ell>\ell'}\bigl(\|\g^{\ell}\|^2 \|\x^{\ell'-1}\|^2 + \|\g^{\ell}\|^2\bigr)\sum_j \x_j^{\ell-1} \dfrac{\partial \|\g^{\ell'}\|^2}{\partial \g_j^{\ell-1}}\Biggr)
\end{split}
\end{equation*}

\textbf{Case 1. Chaotic phase.} Let us bound the change of the NTK by computing only the terms with $\ell'=\ell+1$. In this case, $\frac{\partial \|\x^{\ell'-1}\|^2}{\partial \h_i^\ell}=2\x_i^\ell$. We then notice that $\sum_{i} \x_i^\ell\g_i^\ell = \sum_k \g^{\ell+1}_k \sum_i \W_{ki}^{\ell+1}\x_i^\ell = \sum_k  \g^{\ell+1}_k \h_k^{\ell+1} = \sum_k \g^{\ell+1}_k \x_k^{\ell+1}$, which by induction gives $\sum_{i} \x_i^\ell\g_i^\ell = f(x)$. Therefore, taking into account that for quadratic loss we have $\partial \mathcal{L}(x)/\partial f(x) = f(x) - y$, we can write the following bound:
\begin{equation*}
    \mathbb{E}[|\Delta{\Theta}(x,x)|] \geq 2\eta \mathbb{E}\Bigl[f(x)^2 \sum_{\ell=1}^L \bigl(\|\g^{\ell+1}\|^2 \|\x^{\ell-1}\|^2 + \|\g^{\ell+1}\|^2\bigr)\Bigr]
\end{equation*}
Then, using the results of Lemmas \ref{lemma:x_ratio}, \ref{lemma:d_ratio} and \ref{lemma:gia} again, we obtain the expectation of the first part:
\begin{equation*}
    \mathbb{E}[|\Delta{\Theta}(x,x)|] \geq 4\eta\dfrac{\|\x^0\|^4}{n_0} a^{2L-1}\sum_{\ell=1}^L\prod_{j=\ell+1}^{L-1}\Bigl(1+\dfrac{1}{n_j} + O\Bigl(\dfrac{1}{M^{3/2}}\Bigr)\Bigr)\prod_{i=1}^{\ell-1}\Bigl(1+\dfrac{5}{n_j}\Bigr) \propto a^{2L-1}
\end{equation*}
And since in case of the chaotic phase $\mathbb{E}[\Theta(x,x)]\propto\mathbb{E}[\Theta_W(x,x)]\propto a^L LM/n_0$, we have the desired limit:

\begin{equation*}
    \dfrac{\mathbb{E}[|\Delta{\Theta}(x,x)|] }{\mathbb{E}[{\Theta}(x,x)] } \xrightarrow[\substack{M\to\infty, L\to\infty, \\L/M \to \lambda\in\mathbb{R}}]{} \infty
\end{equation*}

\textbf{Case 2. Ordered phase.} The bound that we used for the chaotic phase above gives zero in the limit in case of the ordered phase. We will now show that the upper bound of the relative change of the NTK is also zero in the limit in this case. We notice that $\sum_i f(x)\g_i^\ell \frac{\partial\|\x^{\ell'-1}\|^2}{\partial \h_i^\ell} = \sum_i \g_i^\ell \sum_k \g_k^\ell \frac{\partial\|\x^{\ell'-1}\|^2}{\partial \h_i^\ell} \h_k^\ell = \sum_i (\g_i^\ell)^2 \frac{\partial\|\x^{\ell'-1}\|^2}{\partial \h_i^\ell} \h_i^\ell + \sum_{i\neq k} \g_i^\ell \g_k^\ell\frac{\partial\|\x^{\ell'-1}\|^2}{\partial \h_i^\ell} \h_k^\ell$. Then we have $\sum_i f(x)\g_i^\ell \frac{\partial\|\x^{\ell'-1}\|^2}{\partial \h_i^\ell} \leq \|\g^\ell\|^2\|\x^{\ell'-1}\|^2 + A$, where the expectation of $A$ is zero. Similarly, we have $\sum_j f(x) \x_j^{\ell-1}\frac{\partial \|\g^{\ell'}\|^2}{\partial \g_j^{\ell-1}} \leq \|\g^{\ell'}\|^2\|\x^{\ell-1}\|^2 + B$ with a term $B$ of zero expectation. The we have the following bound for the change of the NTK:
\begin{equation*}
    \mathbb{E}[|\Delta{\Theta}(x,x)|] \leq 2\eta \mathbb{E}\Bigl[\sum_{\ell_1} \|\g^{\ell_1}\|^2 \|\x^{\ell_1-1}\|^2 \sum_{\ell_2<\ell_1}\|\g^{\ell_2}\|^2\bigl(\|\x^{\ell_2}\|^2 +1 \bigr)\Bigr]
\end{equation*}
The expectation of $\sum_{\ell_2<\ell_1}\theta_W^{\ell_1}\theta_W^{\ell_2}$, where $\theta_W^\ell:=\|\g^{\ell}\|^2 \|\x^{\ell-1}\|^2$, was calculated in Lemma \ref{lemma:theta_w} and the expectation of $\sum_{\ell_2<\ell_2}\theta_W^{\ell_1}\theta_b^{\ell_2}$, where $\theta_b^\ell:=\|\g^{\ell}\|^2$, was calculated in Lemma \ref{lemma:theta_wb}. In particular, we have the following results for the two sums:
\begin{equation*}
\begin{split}
\mathbb{E}[\sum_{\ell_2<\ell_1}\theta_W^{\ell_1}\theta_W^{\ell_2}]&\propto a^{2L}\dfrac{L^2}{\alpha_0^2}\dfrac{1}{4\lambda}e^{5\lambda} \Biggl( 1 - \dfrac{1}{4\lambda}(1 - e^{-4\lambda}) \Biggr),\\
\mathbb{E}[\sum_{\ell_2<\ell_1}\theta_W^{\ell_1}\theta_b^{\ell_2}]&\propto \dfrac{a^{2L}}{a-1}\dfrac{L}{\alpha_0}\dfrac{1}{4\lambda}e^{5\lambda}(1-e^{-4\lambda}).
\end{split}
\end{equation*}
Then we see that the upper bound on the changes of the NTK is proportional to $a^{2L}L^2$, which tends to zero with depth in the ordered phase. Given that the expectation of the NTK in the ordered phase has a non-zero limit given by $1/(1-a)$, we can then conclude that 
\begin{equation*}
    \dfrac{\mathbb{E}[\Delta{\Theta}(x,x)] }{\mathbb{E}[{\Theta}(x,x)] }  \xrightarrow[\substack{M\to\infty, L\to\infty, \\L/M \to \lambda\in\mathbb{R}}]{} 0
\end{equation*}
in case of initialization in the ordered phase.
\end{proof}

\section{Additional observations}

\subsection{Effects of $\alpha_0:=n0/M$ at the EOC}\label{appendix:eoc_a0}
The theoretical expression for the NTK dispersion in the infinite-width limit, which we derived in Theorem \ref{th:NTK_dispersion_lim}, depends on the ratio $\alpha_0:=n_0/M$ at the EOC: 
 \begin{equation*}
    \begin{split}
        V_{\text{EOC}}:=\dfrac{\mathbb{E}[\Theta^2(x,x)]}{\mathbb{E}^2[\Theta(x,x)]} \to  \dfrac{1}{(1+\alpha_0)^2} \Biggl[ e^{5\lambda}\Bigl(\dfrac{1}{2\lambda} + \dfrac{2\alpha_0^2-8\alpha_0}{25\lambda^2}\Bigr)
        + (e^\lambda - e^{5\lambda}) \dfrac{1-4\alpha_0}{8\lambda^2}+
        \dfrac{2\alpha_0}{5\lambda}\Bigl( \dfrac{4-\alpha_0}{5\lambda} - 1-\alpha_0\Bigr)\Biggr].
    \end{split}
    \end{equation*}
Examining this expression, one can see that it tends to the limiting expression for the NTK dispersion in the chaotic phase as the ratio $\alpha_0$ decreases:
\begin{equation*}
    \begin{split}
        V_{\text{EOC}} \xrightarrow[\alpha_0 \to 0]{} \dfrac{1}{2\lambda} e^{5\lambda} \Biggl( 1 - \dfrac{1}{4\lambda}(1 - e^{-4\lambda}) \Biggr).
    \end{split}
\end{equation*}
We illustrate this effect in Figure \ref{fig:eoc_a0}. One can see that gradually decreasing the value of $\alpha_0$ moves the NTK dispersion at the EOC closer to the NTK dispersion in the chaotic phase.

\begin{figure}
    \centering
    \includegraphics[width=0.4\textwidth]{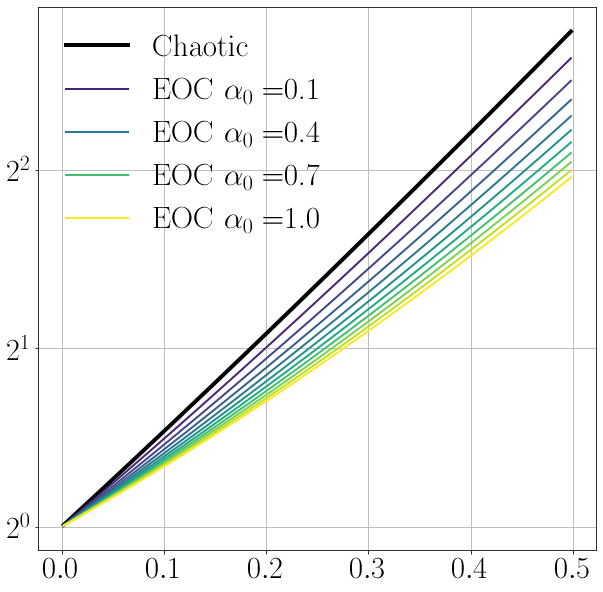}
    \caption{Effects of $\alpha_0:=n0/M$ on the NTK dispersion at the EOC in the infinite-depth-and-width limit. All the lines show the theoretical expressions from Theorem \ref{th:NTK_dispersion_lim}. The black line (uppermost) corresponds to the NTK dispersion in the chaotic phase, while all the other lines show the NTK dispersion at the EOC with varying $\alpha_0$ values. The colors spanning from yellow to violet (from lighter to darker tones) indicate the value of $\alpha_0$ spanning from $1$ (yellow) to $0.1$ (violet). }
    \label{fig:eoc_a0}
\end{figure}

\begin{figure}
    \centering
    \includegraphics[width=\textwidth]{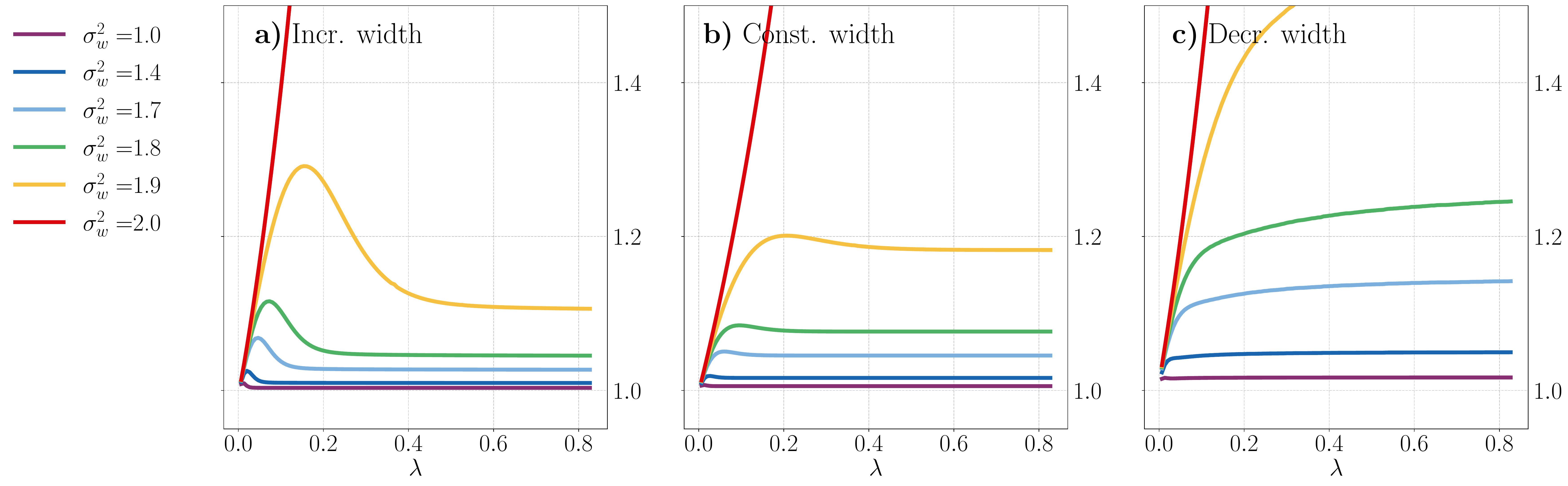}
   \includegraphics[width=\textwidth]{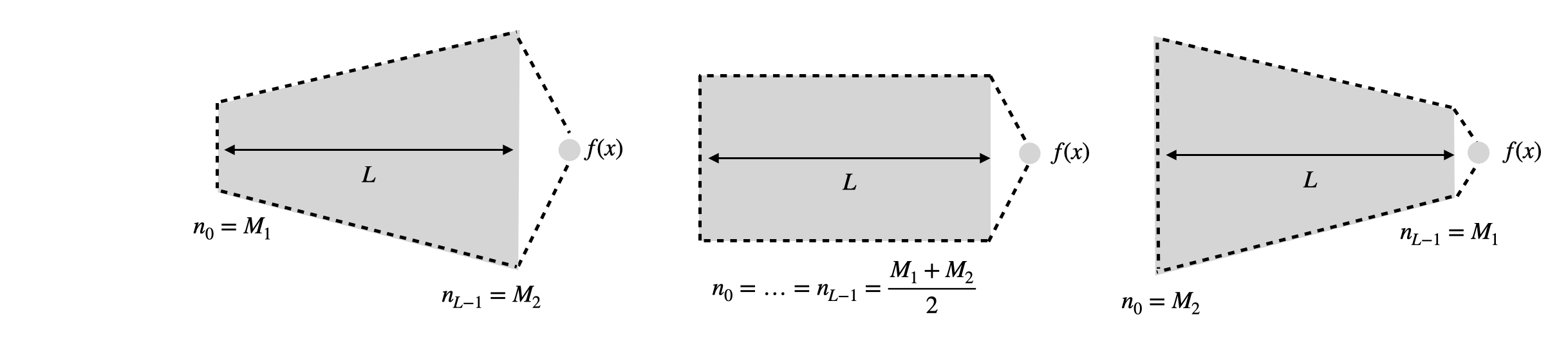}
    \caption{Effects of the architecture on the NTK dispersion ratio $\mathbb{E}[\Theta^2(x,x)]/\mathbb{E}^2[\Theta(x,x)]$ as predicted by Theorem \ref{th:NTK_moments}. The subplots show the dispersion for varying values of $\sigma_w^2$ for three different architectures. The lower row of the figure illustrates the considered architectures. Formally, the widths for each architecture are given by: \textbf{a)} $n_\ell=M_1+ \bigl\lceil {\ell(M_2-M_1)}/{L} \bigr\rceil$, \textbf{b)} $n_\ell=\bigl\lceil {(M_1+M_2)}/{2} \bigr\rceil$, \textbf{c)}~$n_\ell=M_2+ \bigl\lceil {\ell(M_1-M_2)}/{L} \bigr\rceil$ for $0\leq\ell\leq L$. The width parameters are given by $M_1=100$, $M_2=500$ and do not change as the depth grows. The depth-to-width ratio is computed for the average width, i.e. $\lambda=2L/(M_1+M_2)$.}
    \label{fig:architecture}
\end{figure}

\subsection{Effects of the architecture}\label{appendix:architecture}
In Section \ref{section:finite-width}, we showed that constant-width DNNs that increase the input dimensionality, i.e. $n_0<n_1=\dots=n_{L-1}$, get more robust with depth in a sense that the dispersion of their NTK decreases. Here we show how the theoretical expressions for the NTK moments in Theorem \ref{th:NTK_moments} reveal more effects of the DNN's architecture. In Figure \ref{fig:architecture}, we compute the theoretical prediction of the NTK dispersion for three architectures: the first one gradually increases width over $L$ layers from $n_0=M_1$ to $n_{L-1}=M_2$, the second one keeps constant width, i.e. $n_{0}=\dots=n_{L-1}=(M_1+M_2)/2$, and the third one gradually decreases width over $L$ layers from $n_0=M_2$ to $n_{L-1}=M_1$. We note that all the architectures have the same average width in this setting. We also note that we keep $M_1$ and $M_2$ fixed while varying the depth $L$. Therefore, we compare networks that increase or decrease the dimensionality equally but over a different number of layers. Figure \ref{fig:architecture} demonstrates that the NTK dispersion is lower for the DNNs that increase the dimensionality. Moreover, for such DNNs the peak of dispersion falls on the relatively shallow networks. Therefore, it may be beneficial to increase dimensionality over more layers if the goal is to decrease the variance of the DNN. On the contrary, the dispersion only increases with depth for DNNs that decrease the dimensionality. Thus, it may be beneficial to keep such networks shallow if one wants to keep the variance minimal.

\subsection{Lazy training}\label{appendix:lazy_training}
The NTK regime of neural networks is often discussed in connection with the so-called lazy training phenomenon \cite{chizat2019lazy}. In lazy training, a model behaves as its linearization around the initialial parameters due to rescaling given by:
\begin{equation*}
    \tilde{f}_\alpha(x) = \alpha f(x), \qquad \tilde{\mathcal{L}}_\alpha(\mathcal{D}) = \dfrac{1}{\alpha^2} \mathcal{L}(\mathcal{D}), \quad \alpha \in \mathbb{R},
\end{equation*}
where $f(\cdot)$ is the original model's output function and $\mathcal{L}(\mathcal{D})$ is the training loss. \citet{chizat2019lazy} showed that the dynamics of a rescaled model defined by $\tilde{f}_\alpha(\cdot)$ and $\tilde{\mathcal{L}}_\alpha(\mathcal{D})$ is close to its linearization if the scaling factor $\alpha$ is large. Thus, in this section we discuss the effects of the lazy training rescaling on the results presented in our paper. 

One can see that the NTK changes trivially if we rescale the output function: 

\begin{equation*}
    \nabla_{\w} \tilde{f}_\alpha (x) =  \alpha \nabla_{\w} f(x) \Rightarrow \tilde{\Theta}_\alpha(x_1,x_2) =  \alpha^2 \Theta(x_1,x_2),
\end{equation*}
where we denoted the NTK of the rescaled model as $\tilde{\Theta}_\alpha$. Therefore, all our results concerning the NTK dispersion (Theorem~\ref{th:NTK_dispersion_lim}) and the ratios of expectations at initialization (Theorem~\ref{th:non-diag}) do not change if we rescale the model, since the constants added to the nominator and the denominator cancel each other:
\begin{equation*}
    \dfrac{\mathbb{E}[\tilde{\Theta}^2_\alpha(x_1,x_2)]}{\mathbb{E}^2[\tilde{\Theta}_\alpha(x_1,x_2)]} =  \dfrac{\mathbb{E}[\Theta^2(x_1,x_2)]}{\mathbb{E}^2[\Theta(x_1,x_2)]}.
\end{equation*}
On the other hand, the relative change of the NTK in a gradient descent step (Theorem \ref{theorem:NTK_gd}) is affected by the rescaling. Recall that the NTK derivative is given by:
\begin{equation*}
    \dot{\Theta}(x,x) = \sum_{\ell=1}^L \Bigl(\sum_{i,j}\dfrac{\partial \Theta(x,x)}{\partial \W_{ij}^\ell} \dot{\W}_{ij}^\ell  + \sum_i \dfrac{\partial \Theta(x,x)}{\partial \bias_i^\ell} \dot{\bias}_i^\ell\Bigr).
\end{equation*}
Terms of the above expression change as follows due to the rescaling:
\begin{equation*}
\begin{split}
    \dfrac{\partial \tilde{\Theta}_\alpha(x,x)}{\partial \W_{ij}^\ell} = \alpha^2 \dfrac{\partial \Theta(x,x)}{\partial \W_{ij}^\ell}, \quad \dfrac{\partial \tilde{\Theta}_\alpha(x,x)}{\partial \bias_{i}^\ell} = \alpha^2 \dfrac{\partial \Theta(x,x)}{\partial \bias_{i}^\ell},\\
    \bigl(\tilde{\dot{\W}}_{ij}^\ell\bigr)_\alpha = - \dfrac{\partial \tilde{\mathcal{L}}_\alpha(\mathcal{D})}{\partial \W_{ij}^\ell} = -\dfrac{1}{\alpha^2}\dfrac{\partial \mathcal{L}(\mathcal{D})}{\partial \W_{ij}^\ell} =  -\dfrac{1}{\alpha^2}\tilde{\dot{\W}}_{ij}^\ell, \quad
    \bigl(\tilde{\dot{\bias}}_i^\ell\bigr)_\alpha = \dfrac{1}{\alpha^2}\dot{\bias}_i^\ell.
\end{split}
\end{equation*}
Therefore, the NTK derivative is not changed by the rescaling and for the relative change of the NTK we have:
\begin{equation*}
    \dfrac{\mathbb{E}[\Delta{\tilde{\Theta}}_\alpha(x,x)] }{\mathbb{E}[{\tilde{\Theta}_\alpha}(x,x)] }  = \dfrac{1}{\alpha^2}\dfrac{\mathbb{E}[\Delta{\Theta}(x,x)] }{\mathbb{E}[{\Theta}(x,x)] }.
\end{equation*}
For any constant choice of $\alpha\in\mathbb{R}$, this scaling does not change our limiting results in Theorem \ref{theorem:NTK_gd}. However, if the rescaling parameter is scaled exponentially with $L$ as $\alpha\propto (\sigma_w/\sqrt{2})^L$ then the relative change of the NTK tends to zero in the chaotic phase, as well as in the ordered phase and at the EOC. Thus, it is possible to enforce lazy training in deep networks outside of the ordered phase but the required rescaling parameter grows exponentially with depth.

\section{Additional experiments}

\subsection{Non-diagonal elements of the NTK}\label{appendix:non_diag_exp}
We provide empirical results on the dispersion of the non-diagonal elements of the NTK in this subsection. Figure \ref{fig:non_diag_var} is analogous to Figure \ref{fig:NTK_var_lim}: it compares the dispersion of the NTK in different phases of initialization. One can see that the non-diagonal elements of the NTK behave similarly to the diagonal ones. In particular, the dispersion of the non-diagonal elements grows exponentially with the depth-to-width ratio and reaches high values for deep networks in the chaotic phase and at the EOC, whereas in the ordered phase the dispersion is low and does not grow with depth. Figure \ref{fig:eoc_non_diag_var} is analogous to Figure \ref{fig:NTK_var_eoc}: it characterizes the behavior of the NTK dispersion close to the EOC, where the finite-width effects are significant. Here the picture is again similar to the one described in Section \ref{section:finite-width} for the diagonal elements of the NTK. In particular, the dispersion gradually increases as $\sigma_w^2$ grows and approaches the EOC. One can also see that the dispersion of the non-diagonal elements decreases with depth in the ordered phase for networks with $\alpha_0<1$, same as in the case of the diagonal elements. In all the figures, we provide experiments for varying initial angles between the two input samples of the NTK and conclude that the dispersion does not depend significantly on this angle.
\begin{figure*}
    \centering
    \includegraphics[width=\textwidth]{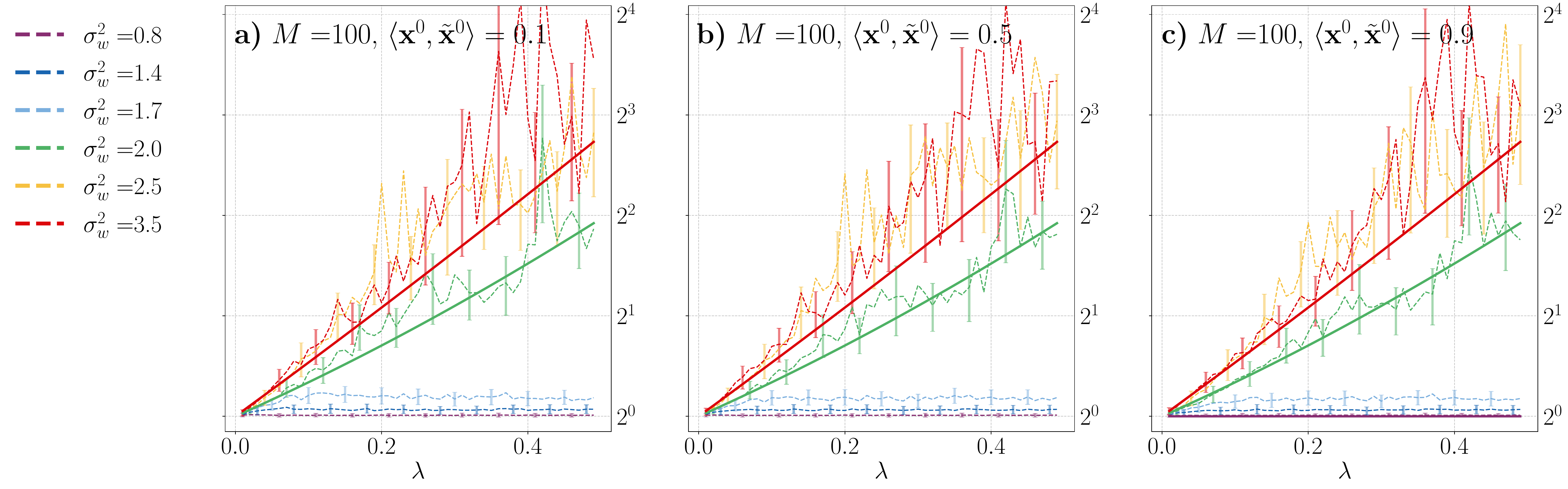}
    \includegraphics[width=\textwidth]{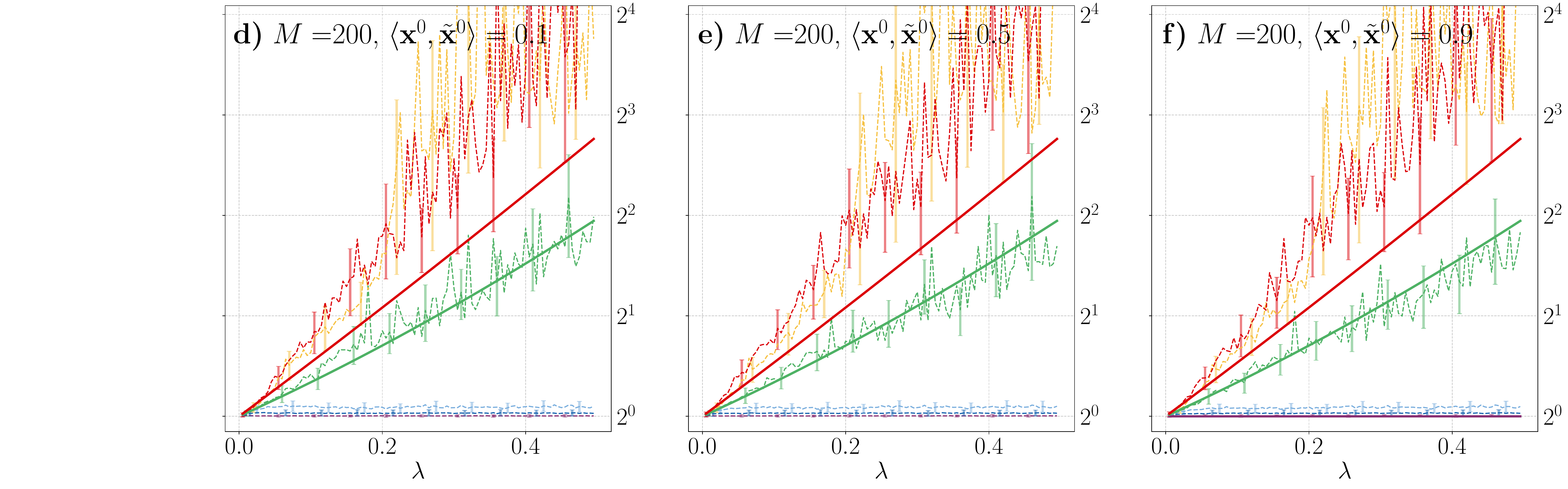}
    \includegraphics[width=\textwidth]{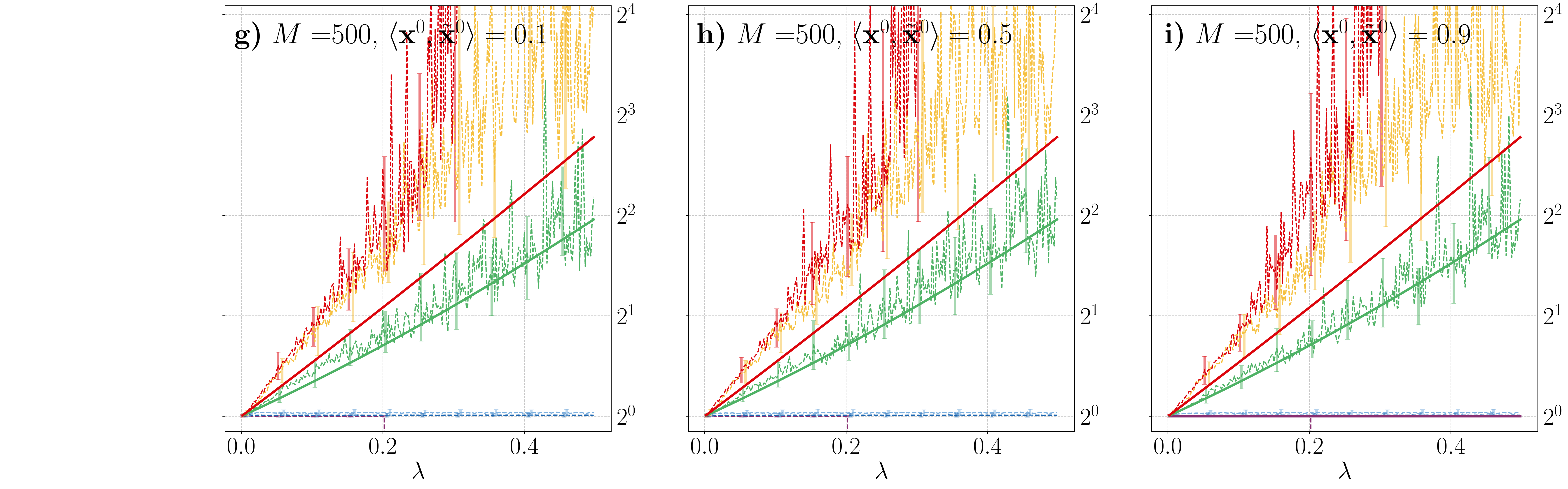}
    \caption{Ratio $\mathbb{E}[\Theta^2(x,\Tilde{x})]/\mathbb{E}^2[\Theta(x,\Tilde{x})]$ at initialization (on a pair of different input samples, i.e. $x\neq\Tilde{x}$) for fully-connected ReLU networks of constant width $M=200$ with $\alpha_0\in\{2.0,0.5,0.1\}$ and varying initial angle between the input samples $\langle \x^0,\Tilde{\x}^0\rangle~\in~\{0.1,0.5,0.9\}$. The dashed lines show the experimental results and the solid lines show the corresponding theoretical predictions for diagonal elements of the NTK from Theorem \ref{th:NTK_dispersion_lim}. For each network configuration, we sampled 500 random initializations and computed an unbiased estimator for the ratio (see details in Appendix \ref{appendix:estimator}). The error bars show the bootstrap estimation of the standard error.}
    \label{fig:non_diag_var}
\end{figure*}

\begin{figure*}
    \centering
    \includegraphics[width=\textwidth]{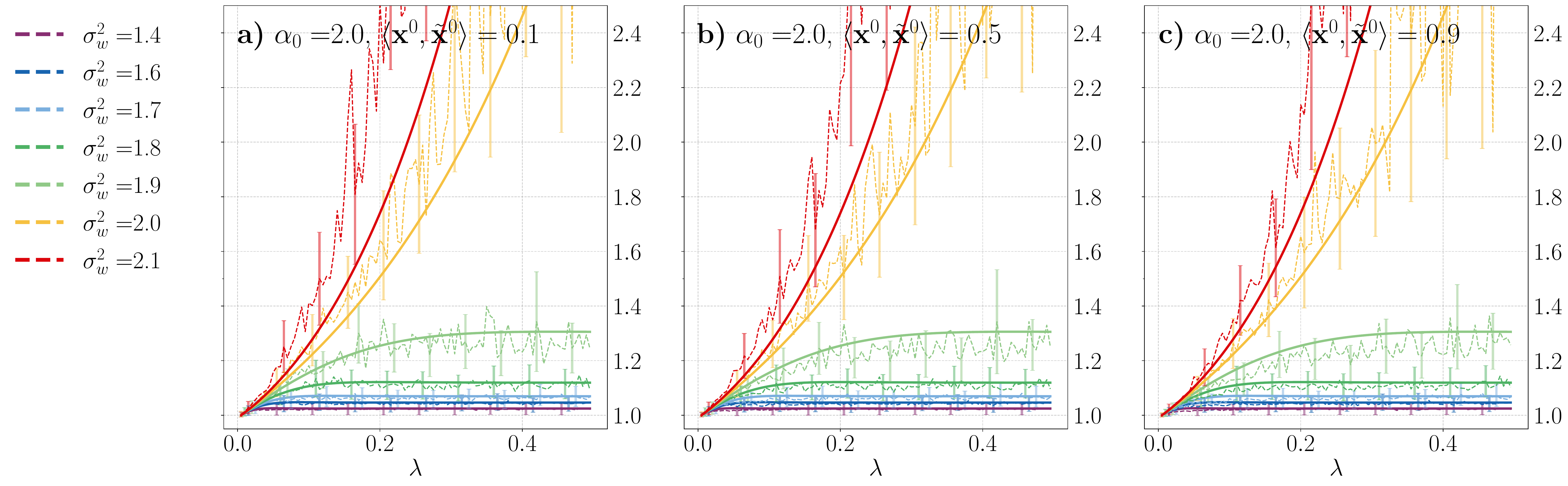}
    \includegraphics[width=\textwidth]{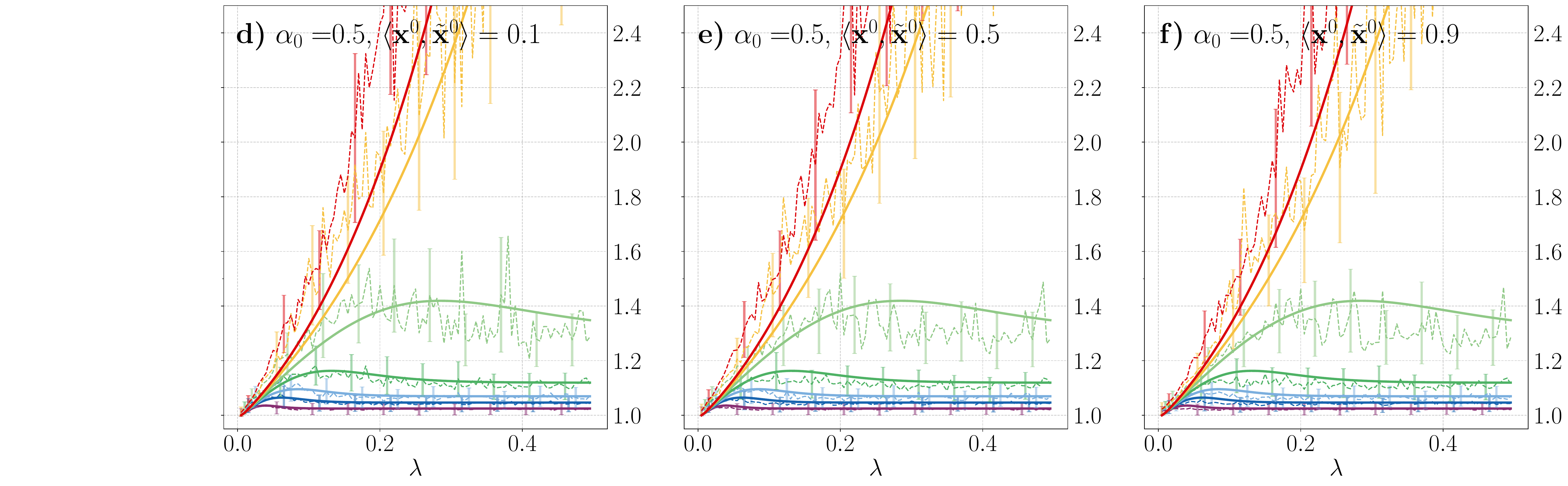}
    \includegraphics[width=\textwidth]{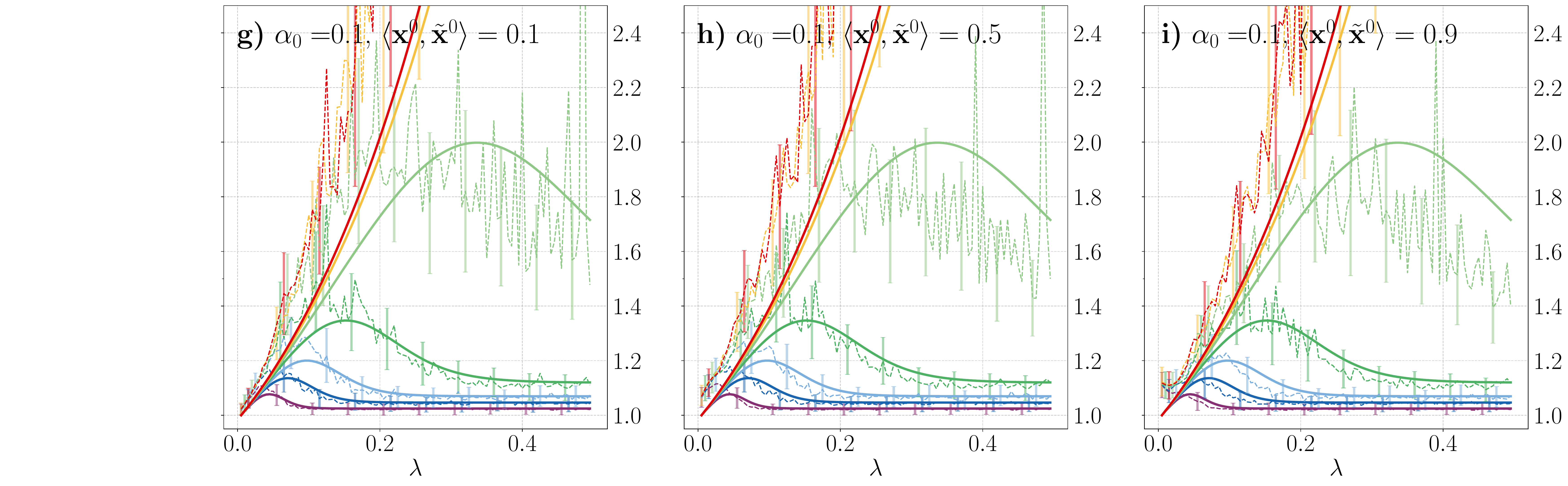}
    \caption{Ratio $\mathbb{E}[\Theta^2(x,\Tilde{x})]/\mathbb{E}^2[\Theta(x,\Tilde{x})]$ at initialization (on a pair of different input samples, i.e. $x\neq\Tilde{x}$) for fully-connected ReLU networks of constant width $M=200$ with $\alpha_0\in\{2.0,0.5,0.1\}$ and varying initial angle between the input samples $\langle \x^0,\Tilde{\x}^0\rangle~\in~\{0.1,0.5,0.9\}$. The dashed lines show the experimental results and the solid lines show the theoretical predictions given by Theorem \ref{th:NTK_moments} for the diagonal elements of the NTK. For each network configuration, we sampled 500 random initializations and computed an unbiased estimator for the ratio (see details in Appendix \ref{appendix:estimator}). The error bars show the bootstrap estimation of the standard error.}
    \label{fig:eoc_non_diag_var}
\end{figure*}

\subsection{Additional error bars for Figures \ref{fig:NTK_var_lim}, \ref{fig:NTK_var_eoc} and \ref{fig:non_diag}}\label{appendix:error_bars}

To keep all the figures readable, we include error bars only in a subset of points in Figures \ref{fig:NTK_var_lim} and \ref{fig:NTK_var_eoc} in the main text. We also omit error bars in Figure \ref{fig:non_diag}. To give the reader a better idea about the variance observed in our experiments, we include additional figures with continuous error bars in this section. Figure \ref{fig:fig1_errors} is analogous to Figure \ref{fig:NTK_var_lim}: it shows the estimated dispersion of the NTK along with the theoretical expressions in the infinite-depth-and-width limit from Theorem \ref{th:NTK_dispersion_lim}. We include fewer lines (values of $\sigma_w^2$) in this figure to keep the continuous error bars distinguishable. Similarly, Figure \ref{fig:fig2_errors} is analogous to Figure \ref{fig:NTK_var_eoc}: it shows the results concerning the NTK dispersion around the EOC. Finally, Figure \ref{fig:fig3_errors} shows a subset of lines from Figure \ref{fig:non_diag} with their continuous error bars and concerns the ratio between non-diagonal and diagonal elements of the NTK.

\begin{figure}
    \centering
    \includegraphics[width=\textwidth]{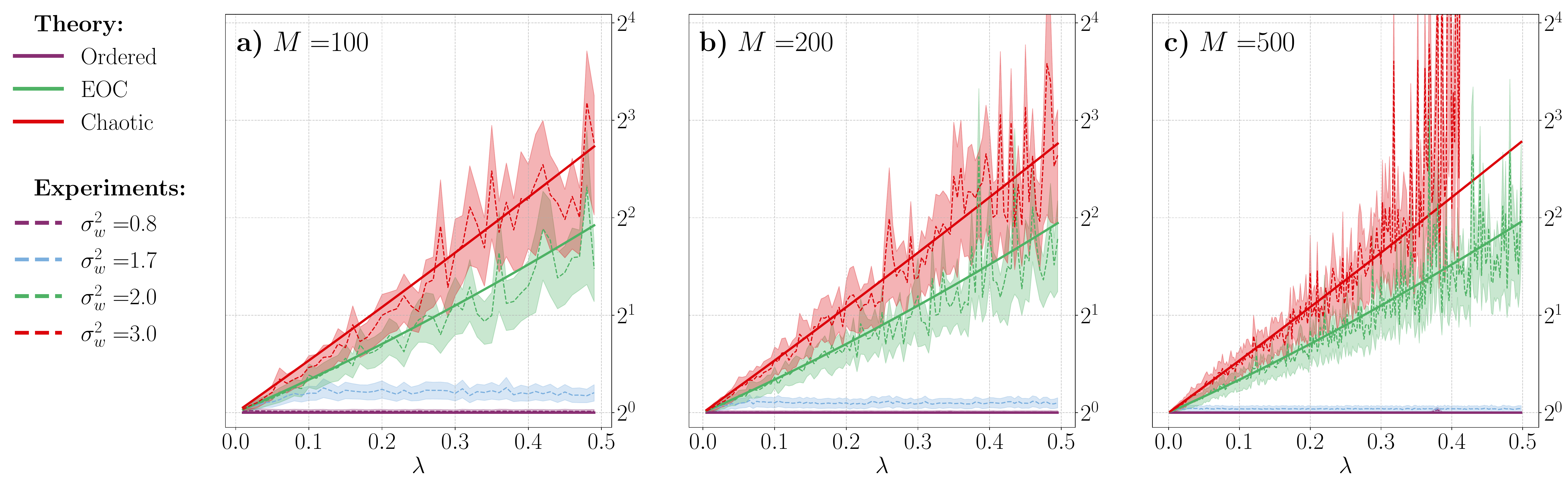}
    \caption{Ratio $\mathbb{E}[\Theta^2(x,x)]/\mathbb{E}^2[\Theta(x,x)]$ at initialization for fully-connected ReLU DNNs of constant width $M\in\{100,200,500\}$ with $\alpha_0=1$. The experiment setup is the same as in Figure \ref{fig:NTK_var_lim}. Continuous error bars show the bootstrap estimation of the standard error.}
    \label{fig:fig1_errors}
\end{figure}

\begin{figure}
    \centering
    \includegraphics[width=\textwidth]{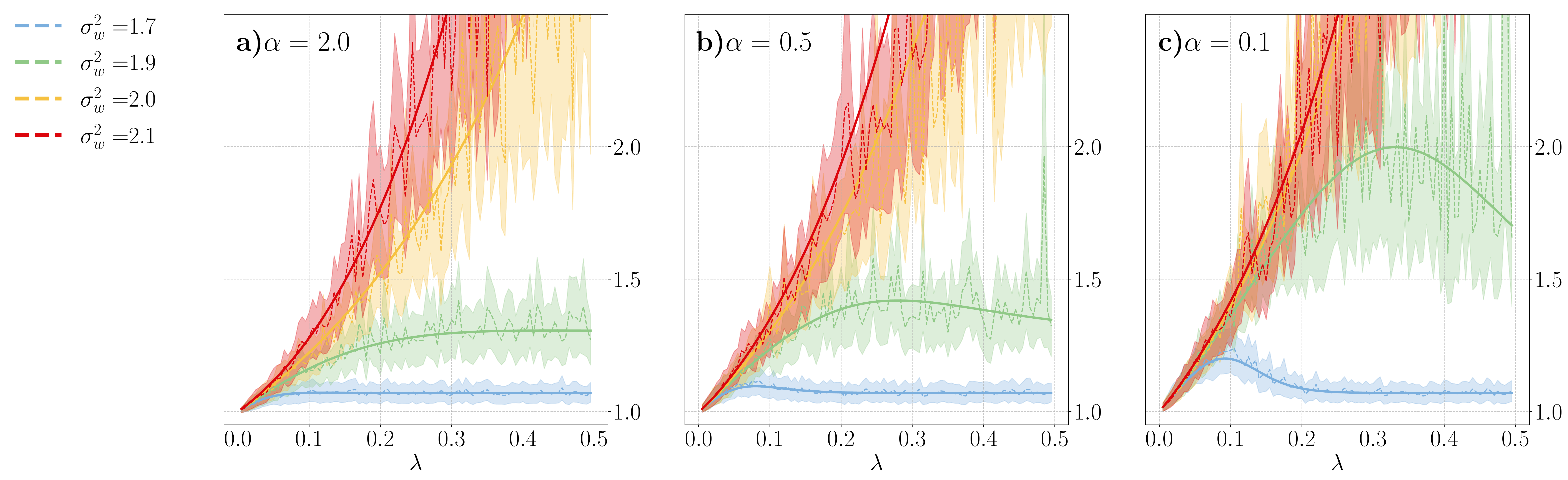}
    \caption{Ratio $\mathbb{E}[\Theta^2(x,x)]/\mathbb{E}^2[\Theta(x,x)]$ at initialization for fully-connected ReLU networks of constant width $M=200$ with the ratio $\alpha_0:=n_0/M \in\{2.0,0.5,0.1\}$. The initialization hyperparameter $\sigma_w^2$ is close to the EOC for all the lines. The experiment setup is the same as in Figure \ref{fig:NTK_var_eoc}. Continuous error bars show the bootstrap estimation of the standard error.}
    \label{fig:fig2_errors}
\end{figure}

\begin{figure}
    \centering
    \includegraphics[width=\textwidth]{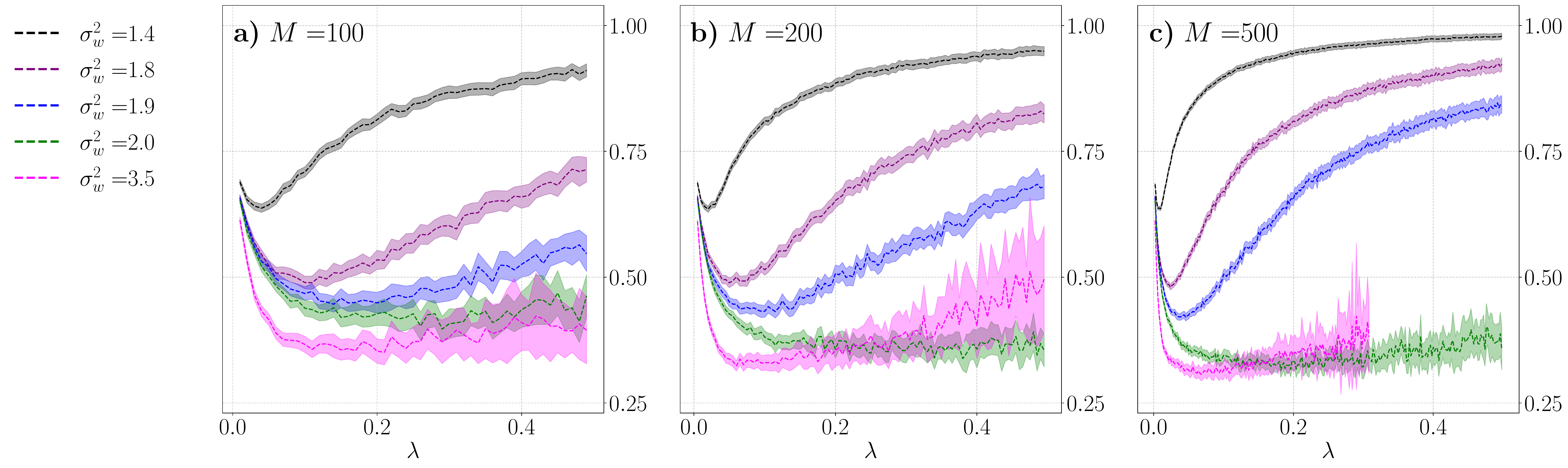}
    \caption{Ratio $\mathbb{E}[\Theta^2(x,x)]/\mathbb{E}^2[\Theta(x,x)]$ at initialization for fully-connected ReLU networks of constant width $M=200$ with the ratio $\alpha_0:=n_0/M \in\{2.0,0.5,0.1\}$. The initialization hyperparameter $\sigma_w^2$ is close to the EOC for all the lines. The experiment setup is the same as in Figure \ref{fig:NTK_var_eoc}. Continuous error bars show the bootstrap estimation of the standard error.}
    \label{fig:fig3_errors}
\end{figure}

\subsection{Estimating the NTK dispersion from a sample}\label{appendix:estimator}

In our experiments, we estimate the ratio $r:=\mathbb{E}[\Theta^2(x,x)]/\mathbb{E}^2[\Theta(x,x)]$ at initialization from a sample. To do so, we sample an element of the NTK $N$ times with independently chosen intialization parameters and get a sample $\{\theta_i\}_{i=1}^N$. 

In this setting, the standard estimators for the first and the second moments given by $\hat{\mu}_1:=\sum_{i=1}^N \theta_i/N$  and $\hat{\mu}_2~:=~\sum_{i=1}^N \theta_i^2/N $  are unbiased:
\begin{equation*}
    \begin{split}
        \mathbb{E}[\hat{\mu}_1]=\mathbb{E}\Bigl[\dfrac{1}{N}\sum_{i=1}^N \theta_i \Bigr] = \dfrac{1}{N}\sum_{i=1}^N  \mathbb{E}[\theta_i] = \mu_1,\\
        \mathbb{E}[\hat{\mu}_2]=\mathbb{E}\Bigl[\dfrac{1}{N}\sum_{i=1}^N \theta_i^2 \Bigr] = \dfrac{1}{N}\sum_{i=1}^N  \mathbb{E}[\theta_i^2] = \mu_2,\\
    \end{split}
\end{equation*}
where we denoted the actual moments as $\mu_1:=\mathbb{E}[\theta_i]$ and $\mu_2:=\mathbb{E}[\theta_i^2]$. 

However, $\hat{\mu}_1$ and $\hat{\mu}_2$ computed on the same sample are dependent, so the estimator for the desired ratio given by $\hat{\mu}_2/(\hat{\mu}_1)^2$ is biased and we need to correct it. First, we notice that the estimator for $\mu_1^2$ given by the square of $\hat{\mu}_1$ is biased as follows:
\begin{equation*}
     \mathbb{E}[\hat{\mu}_1^2]=\mathbb{E}\Bigl[\dfrac{1}{N^2}\Bigl(\sum_{i=1}^N \theta_i\Bigr)^2 \Bigr] = \dfrac{\mu_2}{N} + \dfrac{N-1}{N}\mu_1^2
\end{equation*}
Therefore, an unbiased estimator for $\mu_1^2$ can be computed as 
\begin{equation*}
    \widehat{(\mu_1^2)} = \dfrac{N}{N-1}(\hat{\mu}_1^2 - \dfrac{1}{N}\hat{\mu}_2)
\end{equation*}
Second, we want to remove the dependence between the numerator and the denominator, which can be done simply by using disjoint parts of the sample to compute the two:
\begin{equation*}
    \hat{r}:= \dfrac{1}{N-2}\sum_{i=1}^N \dfrac{\theta_i^2}{\frac{1}{(N-1)(N-2)}\Bigl[ \Bigl(\sum_{j\neq i}\theta_j\Bigr)^2 - \sum_{j\neq i}\theta_j^2 \Bigr]},
\end{equation*}
where we used the unbiased version of the estimator for $\mu_1^2$ computed on a sample without $\theta_i$ in the denominator. Then we can compute the expectation of our new estimator for the ratio as follows:
\begin{equation*}
    \mathbb{E}[\hat{r}] = (N-1)\sum_{i=1}^N \dfrac{\mathbb{E}[\theta_i^2]}{\mathbb{E}\Bigl[ \Bigl(\sum_{j\neq i}\theta_j\Bigr)^2 - \sum_{j\neq i}\theta_j^2 \Bigr]} = \dfrac{\mu_2}{\mu_1^2}.
\end{equation*}
Therefore, $\hat{r}$ is an unbiased estimator of the ratio.

\end{document}